\documentclass[letterpaper,twocolumn,10pt]{article}
\usepackage{usenix}

\usepackage{cite}
\usepackage{float}
\usepackage{amsmath,amssymb,amsfonts,amsthm,nccmath}
\usepackage{bm}
\usepackage[algo2e]{algorithm2e}
\usepackage[dvipsnames]{xcolor}
\usepackage[caption=false,font=footnotesize]{subfig}
\usepackage{multirow}
\usepackage{tabularx,threeparttable}
\usepackage{booktabs,tikz,soul,pifont}
\usepackage{enumitem}
\usepackage{graphicx}
\usepackage[most]{tcolorbox}

\newcommand{\rom}[1]{\expandafter{\romannumeral #1\relax}}

\newtheorem{definition}{Definition}[]
\newtheorem{assumption}[definition]{Assumption}

\newtheorem{theorem}{Theorem}[section]
\newtheorem{lemma}[theorem]{Lemma}
\newtheorem{corollary}[theorem]{Corollary}
\newtheorem{remark}[theorem]{Remark}

\newcommand{\unA}{\bar{\mathcal{A}}}
\newcommand{\thu}{\theta_u}
\newcommand{\Chal}{\mathbf{Chal}}
\newcommand{\Aud}{\mathbf{Aud}}
\newcommand{\Comp}{\mathsf{Comp}}
\newcommand{\Cur}{\mathsf{Cur}}

\newcommand{\Vrfy}{\mathsf{Verify}_\Pi}
\newcommand{\Hth}{H_\theta}
\newcommand{\Hinv}{H_\theta^{-1}}
\newcommand{\Hhalf}{\Hth^{-1/2}}
\newcommand{\TV}{\mathsf{TV}}

\newcommand{\hon}{\mathrm{h}}
\newcommand{\dis}{\mathrm{d}}
\newcommand{\mem}{\mathrm{mem}}
\newcommand{\non}{\mathrm{non}}
\newcommand{\loc}{\mathrm{loc}}
\newcommand{\conv}{\mathrm{conv}}

\DeclareMathOperator*{\argmin}{argmin}

\SetCommentSty{mycommfont}
\RestyleAlgo{ruled}
\newenvironment{game}[1][htb]
{
    
    \begin{algorithm2e}[#1]%
    \DontPrintSemicolon
    \LinesNumbered
    \SetNoFillComment
    \SetInd{0.4em}{0.4em}
    \setlength{\algomargin}{1.5em}
    \linespread{1.2}\selectfont
}{\end{algorithm2e}}

\tcbset{
  groupbox/.style={
    enhanced, sharp corners,
    colback=white, colframe=white, boxrule=0pt,        
    borderline={0.6pt}{0pt}{black, dotted},            
    coltitle=black, colbacktitle=white, fonttitle=\bfseries\small,
    halign title=center,
    attach boxed title to top center={yshift=0mm},    
    boxed title style={colback=white, colframe=white, boxrule=0pt},
    top=0mm, bottom=0mm, left=1mm, right=1mm,
  }
}

\title{Characterizing Privacy-Audit Alignment in\\Behavioral Audit of Machine Unlearning}

\author{
{\rm Liou Tang}\\
University of Pittsburgh
\and
{\rm James Joshi}\\
University of Pittsburgh
\and
{\rm Ashish Kundu}\\
Cisco Research
} 

\begin{document}
\maketitle
\thispagestyle{empty}
\pagestyle{empty}

\begin{abstract}

The removal of learned data from Machine Learning models through Machine Unlearning (MU) has been widely studied; however, there is no agreed-upon scheme for auditing MU. Existing work shows that a dishonest model owner can falsify evidence to avoid executing MU, while curious auditors (and adversaries) can infer privacy-sensitive properties of the model and its training data even with limited access. Yet auditing of MU under mutual distrust between the model owner and the auditor remains unexplored.
In this paper, we characterize how much a generic audit scheme that relies solely on querying the model for \emph{behavioral} signals inevitably results in privacy leakage related to the retained set by providing a \emph{geometric} transfer theorem that establishes a lower bound on retained set membership distinguishability based on the audit accuracy.
In addition, we study how the unlearned set, target sample, and query protocol jointly determine the privacy-audit transfer coefficient through local and global model-parameter-space geometry.
Our empirical experiments on both convex and non-convex models strongly support these results.
Our results call for more careful consideration of the privacy-audit tension under a realistic auditor model and serve as a foundation for greater scrutiny of privacy-preserving audit scheme designs for the MU pipeline.
We also release our \href{https://github.com/LiouTang/Behavioral-Unlearn-Audit}{code implementation}.
\end{abstract}

\section{Introduction}\label{sec:intro}

\emph{Machine Unlearning} (MU) \cite{Cao2015Towards,Bourtoule2021SISA,Nguyen2025Survey} has emerged as a key approach to support removal of private data, copyrighted or intellectual property information used for training Machine Learning (ML) models, as well as to provide a basis for compliance with regulations that support \emph{right to be forgotten/erasure/delete} or \emph{right to rectification}, etc., highlighted by various regulations such as the General Data Protection Regulation (GDPR) \cite{GDPR2016} and the California Consumer Privacy Act (CCPA) \cite{CCPA2018}. A variety of MU algorithms across different model architectures have been developed, marking it an active and rapidly developing field \cite{Nguyen2025Survey,Triantafillou2024Progress,Shaik2025Exploring}

However, the \emph{verification} and \emph{audit} of MU algorithms remains largely under-explored; in particular, in a Machine Learning as-a-service (MLaaS) scenario, a user/client can request the model owner to unlearn their data, but there is often lack of verifiable confirmation of unlearning, while a dishonest/malicious model owner can actively forge evidence of unlearning to retain information of the user data \cite{Thudi2022Necessity,Zhang2024Fragile}. In a worst-case scenario, even the unaltered original model parameters offer ``plausible deniability'' for unlearning \cite{Thudi2022Necessity}. Alternatively, an \emph{auditor} (a third-party such as a governmental agency responsible for regulatory enforcement) that has aggregated knowledge beyond singular user data poses stronger abilities to launch privacy inference attacks comparable to active \emph{adversaries} against ML/MU; therefore, even an honest auditor that strictly adheres to the audit protocol can either inadvertently or intentionally cause privacy leakage on data retained in the model \cite{Shokri2017MIA,Carlini2022MIA,Chen2021Jepardize,Hayes2024Inexact}.

In this paper, we take the first step toward addressing the clear privacy-audit tension that arises when considering an untrustworthy model owner and an honest-but-curious auditor. We make the following claim:
\begin{quote}
    Given a \emph{behavioral} audit scheme for MU that purely relies on query access, characterized in Sec. \ref{ssec:threat-model}, an auditor aims to distinguish whether the unlearned set was honestly or dishonestly unlearned from the target model. At the same time, the \emph{honest-but-curious} auditor aims to infer the membership status of whether a target data sample is in the retained set of the model. We prove that the membership distinguishability of the target sample $z^*$ is aligned with the accuracy of the audit protocol, subject to a transfer (alignment) coefficient $\gamma_Q(z^*)$: sufficiently accurate behavioral compliance auditing forces a lower bound on membership distinguishability for the target.
\end{quote}
We provide a closed-form function of $\gamma_Q$ defined entirely in behavioral space, with no assumption on model convexity or the (un-)learning algorithm.
This privacy leakage can be further characterized with knowledge of the local or global geometry of the model parameter space, with approximation transfer coefficient $\lambda_Q$ and uncertainty budget $U_Q$.
For convex models, we show how data geometry and audit-query choice jointly determine $\lambda_Q$ and, consequently, the privacy leakage.
Our results, supported by empirical experiments on both convex and non-convex models, provide a concrete and needed foundation for examining privacy leakage from auditors in MU and call for caution in future MU audit scheme designs.

\section{Related Works}\label{sec:lit}

\subsection{Machine Unlearning}\label{ssec:lit-mu}

First proposed by Cao et al. in \cite{Cao2015Towards}, Machine Unlearning (MU) aims to efficiently remove the \emph{influence} of training data from an already-trained Machine Learning (ML) model \cite{Cao2015Towards,Bourtoule2021SISA}.
Under this general definition, Cooper et al. in their recent work \cite{Cooper2024Policy} further group MU algorithms into three categories:
\begin{enumerate}[label=(\roman*)]
    \item \textbf{Exact Unlearning}, which (partially) retrains the model to ensure removal of the unlearned set \cite{Bourtoule2021SISA}. Bourtoule et al. in \cite{Bourtoule2021SISA} propose the first exact unlearning algorithm, Sharded, Isolated, Sliced, and Aggregated (SISA) training, by partitioning the training set into disjoint subsets and training ensembled sub-models on the subsets. Retraining in this approach is therefore limited to sub-models whose training sets contain samples to be unlearned. Aldaghri et al. in \cite{Aldaghri2021Coded} and Yan et al. in \cite{Yan2022Arcane} further propose different training set partition strategies to avoid performance loss. In \cite{Chowdhury2025ScaleEMU}, Chowdhury et al. propose a SISA-based exact unlearning scheme on Large Language Models (LLMs), in which different parameter-efficient fine-tuning (PEFT) layers are trained on partitioned subsets, ensuring parameter isolation and exact unlearning through deactivation and retraining on limited layers.

    \item \textbf{Approximate Unlearning}, which updates the model parameters to remove/minimize the {estimated} influence of the unlearned samples \cite{Neel2021Descent,Gupta2021Adaptive,Kurmanji2023Unbounded,Fan2024SalUn,Huang2024Unified}, and has been established as the \emph{de facto} unlearning approach for its efficiency and scalability \cite{Triantafillou2024Progress,Shaik2025Exploring}. Huang et al. in \cite{Huang2024Unified} position (gradient-based) approximate unlearning as a combination of three objectives: minimize the influence of unlearned samples, maintain performance on retained samples, and constrain the magnitude of parameter updates. Kurmanji et al. in \cite{Kurmanji2023Unbounded} provide an alternative information-theoretic definition of maximizing divergence between the unlearned model and the original model on the unlearned set while minimizing divergence on the retained set.

    \item \textbf{Output Filtering} (for generative models), which avoids modifying the model parameters, but instead detects and prevents the generation of undesirable output, especially for LLMs. The filter can be placed at user input- \cite{GPT4Sys}, system prompt- \cite{Pawelczyk2024InContext}, and output-levels \cite{Thaker2024Guardrail}. However, these filters are imprecise and can be circumvented by jailbreak attacks \cite{Liu2025Rethinking,Chao2025Jailbreak}, and fall outside the scope of our paper.
\end{enumerate}

In this paper, we target approximate unlearning (category \rom{2}) and a family of \emph{$(\varepsilon, \delta)$-certified} unlearning algorithms in particular \cite{Guo2020Cert,Koloskova2025Cert}.
To formalize, we define ML and (approximate) MU as follows, following Chourasia and Shah \cite{Chourasia2023True}:

\begin{definition}[Machine Learning \cite{Chourasia2023True}]\label{def:ML}
    Let $\mathcal{Z} = \mathcal{X} \times \mathcal{Y}$ be the data population, in which $\mathcal{X}$ and $\mathcal{Y}$ are the input and output domains, respectively. A learning algorithm:
    \begin{equation}
        \mathcal{A}: \mathcal{Z}^n \to \Theta,
    \end{equation}
    \noindent takes a dataset $D \in \mathcal{Z}^n$ with $n$ samples as the input and produces a model $\theta \in \Theta$, the model is queried with $f_\theta: \mathcal{X} \to \mathcal{Y}$.
\end{definition}

\begin{definition}[Honest Machine Unlearning \cite{Chourasia2023True}]\label{def:MU}
    Given a model $\theta = \mathcal{A}(D)$ (Def. \ref{def:ML}), let $D_u \subseteq D$ be the \emph{unlearned set} with $n_u$ samples  and $D_r = D \setminus D_u$ be the \emph{retained set}. An unlearning algorithm:
    \begin{equation}
        \unA: \mathcal{Z}^n \times \mathcal{Z}^{n_u} \times \Theta \to \Theta,
    \end{equation}
    \noindent produces an updated/unlearned model $\thu$, such that $\thu$ is \emph{$(\varepsilon, \delta)$-indistinguishable} \cite{Dwork2006DP} from a model trained on the retained set $D_r$.
\end{definition}

\subsection{Auditing Machine Unlearning}\label{ssec:lit-audit}

Beyond a general definition of MU (Def. \ref{def:MU}), consider an arbitrary \emph{dishonest} (or \emph{partially honest}) unlearning algorithm $\unA^d$ distinct from an honest counterpart $\unA$, which we define as follows:

\begin{definition}[Honest and Dishonest Unlearning Endpoints]\label{def:comp-endpoints}
    Let the honest endpoint be $\thu^\hon := \unA(\theta,D,D_u)$, and $\thu^\dis := \unA^d(\theta,D,D_u)$ be the endpoint returned by an arbitrary owner strategy $\unA^\dis$ distinct from the honest unlearning algorithm. We call $(\thu^\hon,\thu^\dis)$ the compliance endpoint pair.
\end{definition}

In this scenario, equally important as the model owner's \emph{execution} of MU is the auditor's \emph{verification} of MU, i.e., whether one can distinguish between $\thu^\hon$ and $\thu^\dis$. Thudi et al. in \cite{Thudi2022Necessity} provide the first argument for auditing and verifying MU. As demonstrated by Shumailov et al. in \cite{Shumailov2021Ordering}, two ML models trained on adjacent but different datasets can produce the same parameters by manipulating the order of data batches in gradient calculations, using a data-ordering attack with stochastic gradient descent (SGD). Therefore, the model owner can claim to have executed the MU algorithm even without modifying the model itself. From this, we see two ways to address MU verification/audit schemes:
\begin{enumerate}[label=(\roman*)]
    \item \textbf{Behavioral Audit of Unlearning}, in which the auditor queries the unlearned model for behavioral signals. Sommer et al. in \cite{Sommer2022Athena} and Guo et al. in \cite{Guo2024Verifying} propose injecting triggers into training-set samples and submitting unlearning requests on those samples. Therefore, an honest MU algorithm will result in the absence of backdoor behavior on queries with triggered input. Gao et al. in \cite{Gao2024VeriFi} further extend backdoor-based MU verification to federated (un-)learning.

    More generally, we can include \emph{membership inference attacks} (MIAs) against ML/MU on the unlearned set $D_u$ in this family of auditing schemes, in which honest MU will ensure that any unlearned sample is no longer classified as a member of the training set of $\thu$ \cite{Shokri2017MIA,Carlini2022MIA,Zarifzadeh2024LowCost,Chen2021Jepardize,Gao2022Deletion,Kurmanji2023Unbounded,Hayes2024Inexact,Tang2025Apollo}. Hayes et al. in \cite{Hayes2024Inexact} propose U-LiRA, an adaptation of the likelihood-ratio attack (LiRA) proposed by Carlini et al. \cite{Carlini2022MIA}, and examine the likelihood of a model being trained and subsequently unlearn the target sample $x$ or not trained on $x$ at all. Tang et al. in \cite{Tang2025Apollo} propose a label-only MIA against MU by examining artifacts of Under- and Over-Unlearning in $\thu$, which requires neither access to the original model $\theta$ nor to the posterior probabilities.
    
    Specifically for \emph{audits} on MU, Gu et al. in \cite{Gu2025Audit} propose A-LiRA, a data augmentation-based LiRA attack for auditing MU algorithms, and empirically demonstrate that approximate unlearning algorithms compromise the privacy of both $D_u$ and $D_r$ even when the model is trained with DP guarantees. Sarmasarkar et al. \cite{Sarmasarkar2026Audit} further formalizes using membership distinguishability of the unlearned set $D_u$ over auditor-controlled candidate forget sets to lower-bound the \emph{soundness guarantee}, i.e., $\varepsilon$- or $(\varepsilon, \delta)$-indistinguishability of certified unlearning (Def. \ref{def:MU}). In contrast, we ask whether behavioral evidence collected to audit MU can further expose membership information about \emph{retained} data $D_r$.

    \item \textbf{Reproducible Proof-of-Unlearning}, in which the model owner offers cryptographic guarantees of execution of the unlearning algorithm. Weng et al. in \cite{Weng2024PoU} propose a trusted execution environment (TEE)-based protocol to verify MU. Eisenhofer et al. in \cite{Eisenhofer2025Verifiable} propose a verifiably secure unlearning protocol utilizing Succinct Non-Interactive Arguments of Knowledge (SNARKs) \cite{Setty2020Spartan} and hash chains. More recently, Ye et al. \cite{Ye2026Audit} propose using retained-only verification models to audit unlearned models' proof-of-ignorance.
\end{enumerate}
Zhang et al. in \cite{Zhang2024Fragile} propose two approaches targeting both auditing/verification schemes (\rom{1}) and (\rom{2}) by \emph{forging} a $\{{w}^{(t)}, d^{(t)}, g^{(t)}\}$-triplet proof-of-unlearning/retraining, which tracks the update of model weights, unlearned data, and gradient function. A malicious model owner uses mini-batches from $D_r$ and the original gradient steps when training the model to mimic the behavior of $D_u$, thus providing a strong attack against MU auditing.

In this paper, we present a complementary approach to that of Zhang et al. in \cite{Zhang2024Fragile}; in particular, instead of an empirical refutation of unlearning verification/audits, we study when (and how) behavioral audits of MU (scheme \rom{1}) incur privacy leakage on the retained set by proving a geometric bound for membership distinguishability of the target sample given a behavioral audit protocol. Additionally, our contribution is also complementary to the empirical privacy attack/audits of unlearning in recent works \cite{Hayes2024Inexact,Tang2025Apollo,Gu2025Audit,Sarmasarkar2026Audit}. While these works construct concrete attacks that demonstrate privacy leakage of the unlearned data in specific unlearning algorithms, we focus on establishing the underlying geometric transfer theorem that explains privacy leakage of the retained data as an intrinsic property of the auditing process.

Following Carlini et al. \cite{Carlini2022MIA}, we can define an audit protocol $\Pi$ towards MU as a \emph{game} $\Comp^{\Pi}$ (Game \ref{game:comp}) between the \emph{challenger} (model owner) and the \emph{auditor} that tests whether or not the unlearned model $\thu$ complies with honest unlearning. We also study an \emph{honest-but-curious} auditor who aims to infer whether a target sample $z^*$ is a member of the retained set $D_r$. We present this as the curiosity game $\Cur^{\Pi}$ (Game \ref{game:cur}), \emph{in parallel with} the compliance game.

\begin{game}[t]
\caption[F]{Compliance game $\Comp^{\Pi}$}\label{game:comp}
    \KwIn{$\thu^\dis$, $\thu^\hon$, bit $b \in \{0, 1\}$}
    \uIf{$b = 0$}{
        $\thu \gets \thu^\dis$
        \tcp*[l]{dishonest/partial endpoint}
    }
    \Else{
        $\thu \gets \thu^\hon$
        \tcp*[l]{honest endpoint}
    }
    $\tau \gets \mathsf{Audit\_Transcript}_\Pi(\thu)$ \;
    $\hat{b} \gets \Vrfy (\tau)$ \;
    \KwOut{$\mathbf{1}[\hat{b} = b]$}
\end{game}
 
\begin{game}[t]
\caption[F]{Curiosity game $\Cur^{\Pi}$}\label{game:cur}
    \KwIn{$\thu^\mem$, $\thu^\non$, bit $m \in \{0,1\}$}
    \uIf{$m = 1$}{
        $\thu \gets \thu^\mem$
        \tcp*[l]{$z^* \in D_r$}
    }
    \Else{
        $\thu \gets \thu^\non$
        \tcp*[l]{$z^* \notin D_r$}
    }
    $\tau \gets \mathsf{Audit\_Transcript}_\Pi(\thu)$ \;
    $\hat{m} \gets \mathsf{Infer}(\tau, D_u, z^*)$ \;
    \KwOut{$\mathbf{1}[\hat{m}=m]$}
\end{game}

Based on Games \ref{game:comp} and \ref{game:cur}, to account for the dishonest model owner and the honest-but-curious auditor, we propose a probability-based definition of accountability for $\Pi$:

\begin{definition}[$(\alpha, \alpha^\prime, \beta)$-Accountability]\label{def:accountability}
    For the sensitive property $s$, given $\alpha, \alpha^\prime, \beta \in [0,1]$, we say an audit protocol $\Pi$ is $(\alpha, \alpha^\prime, \beta)$-accountable on $s$ if it satisfies the following:
    \begin{itemize}
        \item \textbf{Soundness:} under the compliance game (Game \ref{game:comp}), for any $\thu$ produced by $\unA^\dis$, $\Pi$ ensures:
        \begin{equation}
            \Pr \left[\Vrfy(\tau) = 1 \mid \thu = \thu^\dis \right] \leq \alpha,
        \end{equation}
        where the false negative rate of misclassifying dishonest unlearning as honest is bounded by $\alpha$.

        \item \textbf{Completeness:} under the compliance game (Game \ref{game:comp}), for any $\thu$ produced by $\unA^\hon$, $\Pi$ ensures:
        \begin{equation}
            \Pr \left[\Vrfy(\tau) = 0 \mid \thu = \thu^\hon \right] \leq \alpha^\prime,
        \end{equation}
        where the false positive rate of misclassifying honest unlearning as dishonest is bounded by $\alpha^\prime$

        \item \textbf{Privacy:} under the curiosity game (Game \ref{game:cur}), let $P_\tau^{\mem}$ and $P_\tau^{\non}$ be the two transcript distributions in Game \ref{game:cur}, $\Pi$ ensures:
        \begin{equation}\label{eq:cur-adv}
            \beta(z^*) := \TV \left(P_\tau^{\mem},P_\tau^{\non}\right) \leq \beta,
        \end{equation}
        that is, the transcript membership \emph{distinguishability} of the target sample $\beta(z^*)$, given by the \emph{total variance} ($\TV$) between $P_\tau^{\mem}$ and $P_\tau^{\non}$, is bounded by $\beta$\footnote{We again stress an affinity between this definition and the LiRA attack of Carlini et al. \cite{Carlini2022MIA} as well as the U-LiRA attack of Hayes et al. \cite{Hayes2024Inexact}. We expand on this intuition in Sec. \ref{ssec:bridge}.}. The Bayes-optimal membership attack success probability under equal priors is $(1+\beta(z^*))/2$ \cite{Shokri2017MIA}.
    \end{itemize}
\end{definition}

\section{Behavioral Auditing of Machine Unlearning}\label{sec:behavioral-audit}

\subsection{Threat Model}\label{ssec:threat-model}

\paragraph{Goals} We study the model owner-auditor game under a dishonest challenger (model owner) $\Chal$ and an honest-but-curious auditor $\Aud$, both parties have distinct goals: (\rom{1}) $\Chal$ aims to release a dishonestly/partially honestly unlearned model $\thu^\dis$ that passes the audit; and (\rom{2}) $\Aud$ adheres strictly to the audit protocol $\Pi$, however, it also aims to infer the membership status of target samples through knowledge gained during the query. \emph{We do not assume any specific model architecture or learning/unlearning mechanism}.

\paragraph{Access} We assume that $\Aud$ has oracle access to $\thu$ and can submit arbitrary query $x_t$ to $\thu$ up to a query budget $T$. Further, we follow the common assumption that $\Aud$ can draw \emph{surrogate} datasets from the same underlying data distribution $\mathcal{Z}$. $\Chal$ releases an unlearned model $\thu$, and allows $\Aud$ information on its unlearned set $D_u$, training algorithm $\mathcal{A}$ and the (honest) unlearning algorithm $\unA$.

\paragraph{Assumptions} In this paper, we focus on an $\Aud$ that is honest-but-curious and hence can infer \emph{membership} of the retained set sample\footnote{While we only study the case for membership inference in this paper, we can modify the curiosity game $\Cur^{\Pi}$ to infer an arbitrary privacy-sensitive property of $D_r$. We discuss an extension to more general privacy inferences in Sec. \ref{ssec:dis-priv-inf}.}. The curiosity advantage $\beta$ under this construction is a \emph{probability} metric, which can be interpreted as the advantage of $\Aud$ with access to $\tau$ over random guessing.
We define a \emph{purely behavioral} audit scheme as follows:

\begin{assumption}[Fixed-Query Behavioral Audit]\label{assum:audit}
    A behavioral audit proceeds as follows: the auditor $\Aud$ follows the audit protocol $\Pi$, which fixes a deterministic query sequence $Q=(x_1,\ldots,x_T)$ dependant only on $D_u$ and auditor knowledge (Sec. \ref{ssec:threat-model}). At each round, $\Aud$ submits $x_t$ to $\thu$, and receives oracle response of the posterior probability $f_{\thu}(x_t)$; this proceeds for $T$ rounds. The resulting audit transcript is a sequence:
    \begin{equation}
        \tau = \{(x_t, f_{\thu}(x_t) + \mathcal{N}(0, \sigma^2))\}_{t=1}^T,
    \end{equation}
    Importantly, we assume that the query response is observed with additive Gaussian noise with variance $\sigma^2$ as a proxy for stochastic errors from both $\Chal$ and $\Aud$, e.g., naturally occurring model variations in training. $\Chal$ reveals no parameter- or gradient-level information to $\Aud$.
\end{assumption}

\subsection{Bridging Compliance and Curiosity}\label{ssec:bridge}

Consider an ideal auditor $\Aud$ described in Sec. \ref{ssec:threat-model} that is able to train and unlearn arbitrary surrogate models on the training sets sampled from $\mathcal{Z}$. When querying the target model on $x$, such an $\Aud$ captures two signals:
\begin{itemize}
    \item \textbf{The compliance signal:} the discrepancy between the posterior probability on $x$ of a dishonestly unlearned model with an honest baseline.

    \item \textbf{The curiosity signal:} the discrepancy between the posterior probability on $x$ of an unlearned model with $z^* \in D_r$ or with $z^* \notin D_r$.
\end{itemize}

\noindent The auditor repeats this process for all queries in the sequence $Q$. We extend these signals to the entire query sequence:

\begin{definition}[Compliance and Curiosity Vectors]\label{def:sig-vectors}
    For a fixed query sequence $Q=(x_1,\dots,x_T)$, define the behavioral map:
    \begin{equation}
        F_Q(\vartheta) := (f_\vartheta(x_1), \dots, f_\vartheta(x_T)) \in \mathbb{R}^T.
    \end{equation}
    Its compliance and curiosity vectors are, respectively:
    \begin{equation}\label{eq:query-vectors}
    \begin{aligned}
        c_Q &:= F_Q(\thu^\hon)-F_Q(\thu^\dis),\\
        p_Q(z^*) &:= F_Q(\thu^\mem)-F_Q(\thu^\non).
    \end{aligned}
    \end{equation}
\end{definition}

Additionally, when $c_Q \neq 0$, we further define the exact audit-visible alignment coefficient $\gamma_Q$ on target $z^*$ as:
\begin{equation}\label{eq:gamma-q}
    \gamma_Q(z^*) := \frac{|\langle p_Q(z^*),c_Q\rangle|}{\|c_Q\|_2^2}.
\end{equation}
We decompose $p_Q(z^*)$ as:
\begin{equation}\label{eq:exact-decomposition}
    p_Q(z^*)=a_Q c_Q+p_Q^\perp,\quad
    a_Q:=\frac{\langle p_Q(z^*),c_Q\rangle}{\|c_Q\|_2^2},\quad
    p_Q^\perp\perp c_Q,
\end{equation}
which gives:
\begin{equation}\label{eq:exact-projection-bound}
    \|p_Q(z^*)\|_2\geq \gamma_Q(z^*)\|c_Q\|_2.
\end{equation}
$\gamma_Q$ therefore measures the magnitude of the privacy displacement that lies in the compliance behavioral direction. Def. \ref{def:sig-vectors} and Eq. \ref{eq:gamma-q} then enable us to establish a relationship, as captured in Theorem \ref{th:main}, between the audit accuracy and privacy leakage:

\begin{theorem}[Geometric Privacy-Audit Transfer]\label{th:main}
    Let $(\thu^\hon, \thu^\dis, \thu^\mem, \thu^\non)$ be the  endpoint quadruple and $Q$ be the query sequence (Assumption \ref{assum:audit}) following an $(\alpha, \alpha^\prime, \beta)$-accountable audit protocol $\Pi$ (Def. \ref{def:accountability}). Let $c_Q \neq 0$, then by Eq. \ref{eq:exact-decomposition}, the membership distinguishability on $z^*$ inherent in the transcript $\tau$ is as follows:
    \begin{equation}\label{eq:beta-main-exact}
        \beta(z^*) = 2\Phi \left(
        \frac{1}{2\sigma} \sqrt{\gamma_Q(z^*)^2\|c_Q\|_2^2+\|p_Q^\perp\|_2^2}
        \right) - 1,
    \end{equation}
    therefore:
    \begin{equation}\label{eq:bata-main-lb}
        \beta(z^*) \geq 2 \Phi(\gamma_Q(z^*) q_e) - 1.
    \end{equation}
    in which $e := \alpha + \alpha^\prime \leq 1$, $q_e := \Phi^{-1}(1-e/2)$. A simpler bound is therefore:
    \begin{equation}\label{eq:beta-exact-lb-linear}
        \beta(z^*) \geq \min\{\gamma_Q(z^*),1\}(1-e).
    \end{equation}
\end{theorem}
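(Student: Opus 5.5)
The plan rests on one fact: under Assumption \ref{assum:audit} every transcript is a Gaussian with identity-scaled covariance. The queries $Q$ are deterministic and shared by both hypotheses of each game, so the $x_t$ components carry no information. The informative part of $\tau$ is therefore $F_Q(\vartheta)+\xi$ with $\xi\sim\mathcal{N}(0,\sigma^2 I_T)$.

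For two such Gaussians with means $\mu_0,\mu_1$, a rotation followed by projection onto the unit vector $(\mu_1-\mu_0)/\|\mu_1-\mu_0\|_2$ reduces the total variation distance to the one-dimensional case. This gives the standard identity
\[
\TV\left(\mathcal{N}(\mu_0,\sigma^2 I),\mathcal{N}(\mu_1,\sigma^2 I)\right)=2\Phi\left(\frac{\|\mu_1-\mu_0\|_2}{2\sigma}\right)-1 .
\]
\textbf{Step 1: exact formula.} I apply this identity to the curiosity game with $\mu_1-\mu_0=p_Q(z^*)$. The decomposition in Eq. \ref{eq:exact-decomposition} is orthogonal, so Pythagoras gives
\[
\|p_Q(z^*)\|_2^2=a_Q^2\|c_Q\|_2^2+\|p_Q^\perp\|_2^2 .
\]
Since $|a_Q|=\gamma_Q(z^*)$, this yields Eq. \ref{eq:beta-main-exact} directly.

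\textbf{Step 2: relate $\|c_Q\|_2$ to $e$.} I apply the same identity to the compliance game. Any verifier, including $\Vrfy$, has success-minus-error bounded by the total variation, i.e. $1-\alpha-\alpha'\le \TV(P^\hon_\tau,P^\dis_\tau)$ (Neyman--Pearson / Le Cam). Hence
\[
1-e\le 2\Phi\left(\frac{\|c_Q\|_2}{2\sigma}\right)-1,
\]
which rearranges to $\Phi(\|c_Q\|_2/2\sigma)\ge 1-e/2$, and therefore $\|c_Q\|_2/(2\sigma)\ge q_e$.

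\textbf{Step 3: combine.} Dropping $\|p_Q^\perp\|_2^2$ in Step 1 and using monotonicity of $\Phi$ gives
\[
\beta(z^*)\ge 2\Phi\left(\frac{\gamma_Q\|c_Q\|_2}{2\sigma}\right)-1\ge 2\Phi(\gamma_Q q_e)-1,
\]
which is Eq. \ref{eq:bata-main-lb}. Here $e\le1$ guarantees $q_e\ge0$, so the inequality direction is preserved.

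\textbf{Step 4: linear bound.} I write $g(s):=2\Phi(s)-1$, which is increasing and concave on $[0,\infty)$ with $g(0)=0$. Concavity makes $g(\lambda s)\ge\lambda g(s)$ for $\lambda\in[0,1]$. I split into two cases.
\begin{itemize}
\item If $\gamma_Q\ge1$: $g(\gamma_Q q_e)\ge g(q_e)=1-e$, by the definition of $q_e$.
\item If $\gamma_Q<1$: $g(\gamma_Q q_e)\ge\gamma_Q\, g(q_e)=\gamma_Q(1-e)$.
\end{itemize}
Together these give Eq. \ref{eq:beta-exact-lb-linear}.

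\textbf{Main obstacle.} The delicate point is Step 2, specifically justifying that the transcript distribution in both games is exactly the isotropic Gaussian with the stated means. This requires the endpoints to be fixed rather than random and the query sequence to be identical across the hypotheses. Assumption \ref{assum:audit} grants both, since $Q$ depends only on $D_u$ and auditor knowledge. I would state this reduction explicitly, then apply the Le Cam bound to the specific verifier $\Vrfy$. All remaining steps are routine Gaussian computations.
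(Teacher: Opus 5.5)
Your proposal is correct and follows the paper's proof essentially step for step: the equal-covariance Gaussian TV identity with Pythagoras on the decomposition in Eq.~\ref{eq:exact-decomposition} gives the exact formula; the soundness/completeness event gives $\TV(P_\tau^\hon,P_\tau^\dis)\ge 1-e$ and hence $\|c_Q\|_2\ge 2\sigma q_e$; then you drop $\|p_Q^\perp\|_2^2$ and use concavity plus monotonicity of $g(u)=2\Phi(u)-1$ for the linear bound. One small point of bookkeeping: Step 3 needs only $\gamma_Q(z^*)\ge 0$, whereas $q_e\ge 0$ (from $e\le 1$) is what Step 4 actually uses, both for concavity on $[0,\infty)$ and for $\gamma_Q q_e\ge q_e$ when $\gamma_Q\ge 1$.
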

\begin{proof}
    Define:
    \begin{equation}
        d_{\Comp}:=\|c_Q\|_2, \quad d_{\Cur}:=\|p_Q(z^*)\|_2.
    \end{equation}
    Under Assumption \ref{assum:audit}, the compliance and curiosity transcript distributions are both equal-covariance Gaussians; therefore:
    \begin{equation}\label{eq:alpha-beta-d}
    \begin{aligned}
        \TV(P_\tau^\hon,P_\tau^\dis) &= 2\Phi \left(\frac{d_{\Comp}}{2\sigma}\right) - 1,\\
        \beta(z^*) &= 2\Phi \left(\frac{d_{\Cur}}{2\sigma}\right) - 1. \\
    \end{aligned}
    \end{equation}
    in which:
    \begin{equation}
        d_{\Cur}^2=\gamma_Q(z^*)^2d_{\Comp}^2+\|p_Q^\perp\|_2^2,
    \end{equation}
    which gives Eq. \ref{eq:beta-main-exact}.
    
    \item By Def. \ref{def:accountability}, soundness and completeness imply that:
    \begin{equation}
        \TV(P_\tau^\hon,P_\tau^\dis) \geq P_\tau^\hon(A)-P_\tau^\dis(A) \geq 1-e,
    \end{equation}
    therefore $d_{\Comp} \geq 2\sigma q_e$. Dropping the nonnegative orthogonal term gives $d_{\Cur} \geq \gamma_Q d_{\Comp} \geq 2\sigma\gamma_Q q_e$. Substituting into Eq. \ref{eq:alpha-beta-d} proves Eq. \ref{eq:bata-main-lb}.

    \item For Eq. \ref{eq:beta-exact-lb-linear}, let $g(u):=2\Phi(u)-1$ for $u \geq 0$. Then we have:
    \begin{equation}
    \left\{
    \begin{array}{lll}
        g(\gamma_Qq_e) \geq \gamma_Q g(q_e), \quad &\gamma_Q \leq 1 &(\text{concavity}) \\
        g(\gamma_Qq_e) \geq g(q_e), \quad &\gamma_Q \geq 1 &(\text{monotonicity}) \\
    \end{array}
    \right.
    \end{equation}
    Recall that $g(q_e) = 1-e$; hence Eq. \ref{eq:beta-exact-lb-linear} follows.
\end{proof}

Theorem \ref{th:main} therefore gives an exact characterization of the privacy leakage by the honest-but-curious auditor conditioned on the compliance vector. We also make the following remark:

\begin{remark}[Orthogonalization, Genericity and Residual Privacy]\label{rem:genericity}
    By Eq. \ref{eq:gamma-q}, $\gamma_Q(z^*)=0$ when $\langle p_Q,c_Q\rangle=0$, in which case audit accuracy does not force a lower bound on the curiosity privacy leakage on the compliance direction.
    Intuitively, an auditor $\Aud$ might enforce orthogonalization of $p_Q \perp c_Q$ to ensure that curiosity leakage cannot occur on the compliance signal direction. However, we note two objection to this intuition:
    \begin{enumerate}
        \item By Assumption \ref{assum:audit}, the query sequence $Q$ is selected using only $D_u$ and is independent of $D_r$, while calculating the privacy displacement $p_Q(z^*)$ requires knowledge of $z^*$ and by extension $D_r$ itself.
        
        Let $\mathcal{P}_Q(D_u,D_r)$ denote the set of privacy displacement vectors for all $z^* \in D_r$. A robust transfer-separation guarantee would require $\langle p,c_Q\rangle \approx 0$ for \emph{any} $p \in \mathcal{P}_Q(D_u,D_r)$. If $c_Q$ lies in $\operatorname{span}(\mathcal{P}_Q(D_u,D_r))$, exact orthogonality to every such privacy direction is impossible unless $c_Q=0$.

        \item Further, $\langle p_Q,c_Q\rangle=0$ is \emph{not} in itself a privacy guarantee: as demonstrated by Eq. \ref{eq:beta-main-exact}, an orthogonal component $\|p_Q^\perp\|_2^2 \geq 0$ still yields membership distinguishability from the same transcript $\tau$ even when $\gamma_Q = 0$. A privacy guarantee on transcript-level membership leakage would require $\|p_Q\|_2 \to 0$, or an independent privacy mechanism that limits distinguishability.
    \end{enumerate}
\end{remark}

A robust privacy guarantee under Theorem \ref{th:main} therefore requires additional knowledge of the retained set to bound the privacy-sensitive directions. While the curiosity game (Game \ref{game:cur}) itself restricts access to $D_r$ for $\Aud$ as knowing $D_r$ a priori implies $\Aud$ trivially wins the game, auditor knowledge of the underlying data distribution, model architecture, or the (un-)learning pipeline can be used to constrain the query sequence $Q$.
We expand on this observation in Sec. \ref{ssec:certification} and \ref{ssec:mechanism}, in which we discuss auditor uncertainty on the compliance/curiosity vectors.

\subsection{Approximate Model Geometric Certification}\label{ssec:certification}

As discussed in Sec. \ref{ssec:bridge}, the privacy leakage by the audit protocol $\Pi$ demonstrated in Theorem \ref{th:main} describes the \emph{information-theoretic} privacy leakage intrinsic in the audit transcript $\tau$, it is not a statement on whether it's easily computable or operationally obtainable by an $\Aud$.
In this section, we study when the endpoint quadruple $(\thu^\hon, \thu^\dis, \thu^\mem, \thu^\non)$ is unavailable or not computable (especially in high-dimensional non-convex models), i.e., \emph{how can an we analyze an audit protocol without an exact geometry}.
We begin by introducing an \emph{approximation} of the audit geometry:

\begin{definition}[Approximate Audit-Visible Geometric Certificate]\label{def:geo-cert}
    A tuple $(\tilde{c}_Q, \tilde{p}_Q, \varepsilon_c, \varepsilon_p)$ is a geometric certificate for an endpoint quadruple $(\thu^\hon, \thu^\dis, \thu^\mem, \thu^\non)$ if:
    \begin{equation}\label{eq:certificate-errors}
        \|c_Q-\tilde{c}_Q\|_2\leq\varepsilon_c, \quad \|p_Q-\tilde{p}_Q\|_2\leq\varepsilon_p.
    \end{equation}
    in which $\tilde{c}_Q, \tilde{p}_Q \in \mathbb{R}^T$, $\varepsilon_c,\varepsilon_p\geq0$.
    When $\tilde{c}_Q \neq 0$, we further define its certified transfer coefficient and certification error as:
    \begin{equation}\label{eq:lambda-q}
        \lambda_Q:= \frac{|\langle\tilde{p}_Q,\tilde{c}_Q\rangle|}{\|\tilde{c}_Q\|_2^2},\quad
        U_Q:=\lambda_Q\varepsilon_c+\varepsilon_p.
    \end{equation}
\end{definition}

Let $\tilde{\vartheta}_Q$ be the angle between $\tilde{p}_Q$ and $\tilde{c}_Q$, then:
\begin{equation}\label{eq:lambda-interpret}
    \lambda_Q=\frac{\|\tilde{p}_Q\|_2}{\|\tilde{c}_Q\|_2} | \cos \tilde{\vartheta}_Q |.
\end{equation}
Recall Eq. \ref{eq:gamma-q} and \ref{eq:exact-decomposition}, $\lambda_Q$ and $U_q$ together allow us to introduce \emph{uncertainty} in Theorem \ref{th:main}: $\lambda_Q$ describes the privacy component predicted by the proxy geometry relative to the proxy compliance signal, while $U_Q$ describes the certification error. A large $U_Q$ implies that the available geometric approximation cannot support a strong conclusion of privacy leakage. We can extend Theorem \ref{th:main} as the following lemma:

\begin{lemma}[Certified Privacy-Audit Transfer]\label{lemma:cert}
    Similar to Theorem \ref{th:main}, we replace the direct knowledge of the exact endpoint quadruple with any certificate $(\tilde{c}_Q, \tilde{p}_Q, \varepsilon_c, \varepsilon_p)$ satisfying Def. \ref{def:geo-cert} with $\tilde{c}_Q \neq 0$. Then we have:\begin{equation}\label{eq:main-exact-cert}
        \beta(z^*) \geq 2\Phi \left( \left[\lambda_Q q_e-\frac{U_Q}{2\sigma}\right]_+ \right) - 1.
    \end{equation}
    in which $e$ and $q_e$ are given by Theorem \ref{th:main}. Further:
    \begin{equation}\label{eq:main-linear-cert}
        \beta(z^*) \geq \left[ \min\{\lambda_Q,1\}(1-e)- \frac{U_Q}{\sigma\sqrt{2\pi}} \right]_+.
    \end{equation}
\end{lemma}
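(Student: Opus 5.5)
The plan is to reduce everything to the one-dimensional Gaussian formula already established in the proof of Theorem \ref{th:main}, namely $\beta(z^*) = g(d_{\Cur}/(2\sigma))$ with $g(u) := 2\Phi(u)-1$ and $d_{\Cur} = \|p_Q(z^*)\|_2$. I also reuse the fact shown there that $(\alpha,\alpha^\prime,\beta)$-accountability forces $d_{\Comp} = \|c_Q\|_2 \geq 2\sigma q_e$. Both facts depend only on Assumption \ref{assum:audit} and Def. \ref{def:accountability}, not on the certificate. The new work is therefore a purely deterministic lower bound on $d_{\Cur}$ in terms of the certificate quantities $\lambda_Q$ and $U_Q$.

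\textbf{Step 1: lower-bound $d_{\Cur}$ by projecting onto the certified direction.} Rather than projecting $p_Q$ onto the unknown exact direction $c_Q$, I project onto $\tilde{c}_Q/\|\tilde{c}_Q\|_2$. By Cauchy--Schwarz, $\|p_Q\|_2 \geq |\langle p_Q,\tilde{c}_Q\rangle|/\|\tilde{c}_Q\|_2$. Writing $p_Q = \tilde{p}_Q + (p_Q-\tilde{p}_Q)$ and using the reverse triangle inequality together with Eq. \ref{eq:certificate-errors} gives
\begin{equation*}
\frac{|\langle p_Q,\tilde{c}_Q\rangle|}{\|\tilde{c}_Q\|_2} \geq \frac{|\langle\tilde{p}_Q,\tilde{c}_Q\rangle|}{\|\tilde{c}_Q\|_2} - \varepsilon_p = \lambda_Q\|\tilde{c}_Q\|_2 - \varepsilon_p .
\end{equation*}
Next, $\|\tilde{c}_Q\|_2 \geq \|c_Q\|_2 - \varepsilon_c$ and $\lambda_Q \geq 0$, so
\begin{equation*}
d_{\Cur} \geq \lambda_Q d_{\Comp} - \lambda_Q\varepsilon_c - \varepsilon_p = \lambda_Q d_{\Comp} - U_Q .
\end{equation*}
I expect this to be the main obstacle. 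The subtlety is choosing to project onto $\tilde{c}_Q$ rather than $c_Q$, so that the error terms enter additively with exactly the weights in the definition of $U_Q$. Projecting onto $c_Q$ would instead produce awkward normalization errors.

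\textbf{Step 2: apply the audit-accuracy bound.} Substituting $d_{\Comp}\geq 2\sigma q_e$ gives $d_{\Cur}/(2\sigma) \geq \lambda_Q q_e - U_Q/(2\sigma)$. Since $d_{\Cur}\geq 0$, the right-hand side can be replaced by its positive part. Because $g$ is increasing, Eq. \ref{eq:main-exact-cert} follows.

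\textbf{Step 3: linear form.} For Eq. \ref{eq:main-linear-cert}, I use that $g$ is $2\phi(0) = 2/\sqrt{2\pi}$-Lipschitz on $[0,\infty)$. Set $a = \lambda_Q q_e$ and $b = U_Q/(2\sigma)$.
\begin{itemize}
\item If $a \geq b$, then $g(a-b) \geq g(a) - 2b/\sqrt{2\pi} = g(a) - U_Q/(\sigma\sqrt{2\pi})$.
\item If $a < b$, the right-hand side of Eq. \ref{eq:main-linear-cert} is still bounded by $[g(a) - U_Q/(\sigma\sqrt{2\pi})]_+$. By the Lipschitz property, $g(a) \leq g(b) \leq U_Q/(\sigma\sqrt{2\pi})$, so this quantity is $0 \leq \beta(z^*)$.
\end{itemize}
Finally, $g(\lambda_Q q_e) \geq \min\{\lambda_Q,1\}(1-e)$ by the same concavity/monotonicity case split as in Theorem \ref{th:main}, using $g(q_e) = 1-e$. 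Combining these and taking the positive part (valid because $\beta \geq 0$) yields Eq. \ref{eq:main-linear-cert}.
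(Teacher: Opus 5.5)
Your proposal is correct and follows essentially the paper's proof. The paper perturbs first and then projects, writing $\|p_Q\|_2\ge\|\tilde p_Q\|_2-\varepsilon_p\ge\lambda_Q\|\tilde c_Q\|_2-\varepsilon_p$, whereas you project $p_Q$ onto $\tilde c_Q$ first; both routes give $\|p_Q\|_2\ge[\lambda_Q\|c_Q\|_2-U_Q]_+$, followed by the same substitution of $\|c_Q\|_2\ge 2\sigma q_e$ and the same concavity/Lipschitz step. Your Lipschitz constant $2/\sqrt{2\pi}$ is the right one (the paper's text says ``$\sqrt{2\pi}$-Lipschitz,'' but its bound $U_Q/(\sigma\sqrt{2\pi})$ actually uses your constant), and your explicit $a<b$ case fills in a detail the paper leaves implicit.
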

\begin{proof}
    The result follows by lower-bounding $\|p_Q\|_2$ and applying the compliance separation $\|c_Q\|_2\ge 2\sigma q_e$. We leave a detailed proof in Appendix \ref{app:proof:cert}.
\end{proof}

From Lemma \ref{lemma:cert}, we immediately arrive at the following corollary:

\begin{corollary}[Uncertainty Threshold]\label{cor:uncertainty-threshold}
    For $\lambda_Q>0$, the lower bound in Eq. \ref{eq:main-exact-cert} is strictly positive whenever:
    \begin{equation}\label{eq:uncertainty-threshold}
        \lambda_Q q_e>\frac{U_Q}{2\sigma},
    \end{equation}
    or equivalently:
    \begin{equation}
        e < 2\left[ 1-\Phi \left( \frac{U_Q}{2\sigma\lambda_Q} \right) \right].
    \end{equation}
\end{corollary}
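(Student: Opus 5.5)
The corollary is essentially a reading-off of Lemma \ref{lemma:cert}, so the plan is to analyze the function $h(u) := 2\Phi(u)-1$ that appears on the right-hand side of Eq. \ref{eq:main-exact-cert}. First we note that $h$ is continuous and strictly increasing on $[0,\infty)$ with $h(0)=0$. Hence $h(u)>0$ if and only if $u>0$. Applying this to $u=[\lambda_Q q_e - U_Q/(2\sigma)]_+$, the positive part is strictly positive exactly when $\lambda_Q q_e - U_Q/(2\sigma) > 0$. This is Eq. \ref{eq:uncertainty-threshold}, and it gives strict positivity of the lower bound in Lemma \ref{lemma:cert}.

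For the equivalent form, we divide by $\lambda_Q>0$, which turns the condition into $q_e > U_Q/(2\sigma\lambda_Q)$. We then recall $q_e=\Phi^{-1}(1-e/2)$ from Theorem \ref{th:main}. Since $\Phi$ is a strictly increasing bijection $\mathbb{R}\to(0,1)$, we apply $\Phi$ to both sides without changing the direction of the inequality. This gives $1-e/2 > \Phi\bigl(U_Q/(2\sigma\lambda_Q)\bigr)$, and rearranging yields $e < 2\bigl[1-\Phi\bigl(U_Q/(2\sigma\lambda_Q)\bigr)\bigr]$.

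The only step needing care is where $q_e$ is well defined, namely the endpoint cases of $e\in[0,1]$. When $e=0$, we have $q_e=+\infty$, so the condition holds trivially; the right-hand side is at most $2$, so the inequality $0<2[1-\Phi(\cdot)]$ also holds. When $e\in(0,1]$, $q_e$ is finite and nonnegative, and the bijection argument applies directly. I would state these boundary conventions explicitly, with $\Phi(+\infty)=1$. Otherwise the argument is routine monotonicity and does not need the linear bound Eq. \ref{eq:main-linear-cert}.
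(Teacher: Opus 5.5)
Your proof is correct and is exactly the unpacking the paper intends when it says the corollary follows immediately from Lemma~\ref{lemma:cert}: $2\Phi([x]_+)-1>0$ iff $x>0$, and then $q_e=\Phi^{-1}(1-e/2)$ is inverted using the strict monotonicity of $\Phi$. One small fix to your $e=0$ case: the right-hand side $2[1-\Phi(U_Q/(2\sigma\lambda_Q))]$ is positive because $\Phi(t)<1$ for every finite $t$, not because it is at most $2$.
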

\noindent The result implies that a high audit accuracy does not necessarily induce privacy leakage on the retained set $D_r$. We can only conclude that the behavioral audit itself incurs privacy leakage when the compliance separation exceeds the uncertainty term. If $\lambda_Q=0$, or if the uncertainty dominates this component, Lemma \ref{lemma:cert} is vacuous.

\subsection{Constructing Approximate Audit-Visible Certificates}\label{ssec:mechanism}

In Sec. \ref{ssec:certification}, we demonstrated that approximation of the model geometry yields a similar bound on privacy leakage through behavioral audit with explicit uncertainty. In this section, we further discuss two mechanisms to provide the approximation: (\rom{1}) a locally differentiable behavioral map applicable to arbitrarily differentiable models, and (\rom{2}) a strongly convex model with a first-order deletion model that exposes a more interpretable data- and query-dependent geometry.

\paragraph{Local endpoint geometry} We start with the assumption of a \emph{local} geometry, where the quadruple $(\thu^\hon, \thu^\dis, \thu^\mem, \thu^\non)$ are within the neighborhood of a reference parameter $\theta_0$, such that $F_Q$ is differentiable near all four endpoints. Define:
\begin{equation}
    J_Q:=\nabla_\theta F_Q(\theta_0), \quad G_Q:=J_Q^\top J_Q,
\end{equation}
in which $J_Q$ is the the Jacobian of the behavioral map $F_Q$ at the reference model parameter $\theta_0$, and let:
\begin{equation}
    \delta_c:=\thu^\hon-\thu^\dis, \quad \delta_p:=\thu^\mem-\thu^\non,
\end{equation}
be the parameter displacement for the honesty/dishonesty and member/non-member endpoint pair, respectively. The Jacobian maps these parameter-space endpoint changes into proxy behavioral directions.

\begin{corollary}[Local Audit Geometry]\label{cor:loc-aud-geo}
    Assume that $\nabla_\theta F_Q$ is $L_Q$-Lipschitz on a convex neighborhood containing $\theta_0$ and the line segments from $\theta_0$ to all four endpoints. Let:
    \begin{equation}\label{eq:local-proxies}
        \tilde{c}_Q:=J_Q\delta_c, \quad \tilde{p}_Q:=J_Q\delta_p.
    \end{equation}
    Eq. \ref{eq:certificate-errors} holds with:
    \begin{equation}\label{eq:local-eps}
    \begin{aligned}
        \varepsilon_c^\loc &:= \frac{L_Q}{2} \left(
        \|\thu^\hon - \theta_0\|_2^2+ \|\thu^\dis - \theta_0\|_2^2
        \right),\\
        \varepsilon_p^\loc &:= \frac{L_Q}{2} \left(
        \|\thu^\mem - \theta_0\|_2^2 + \|\thu^\non - \theta_0\|_2^2
        \right).
    \end{aligned}
    \end{equation}
    If $J_Q\delta_c \neq 0$, the corresponding local transfer coefficients are given by:
    \begin{equation}\label{eq:lambda-loc}
        \lambda_Q^{\mathrm{loc}} =\frac{|\delta_p^\top G_Q\delta_c|}
        {\delta_c^\top G_Q\delta_c}, \quad
        U_Q=\lambda_Q^\loc\varepsilon_c^\loc+\varepsilon_p^\loc.
    \end{equation}
\end{corollary}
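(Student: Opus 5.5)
The plan is a first-order Taylor expansion of the behavioral map $F_Q$ around the reference point $\theta_0$, with the remainder controlled by the Lipschitz continuity of $\nabla_\theta F_Q$. The transfer coefficient and uncertainty budget are then read off from Def. \ref{def:geo-cert}.

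\textbf{Step 1: Taylor remainder.} Fix any endpoint $\vartheta \in \{\thu^\hon,\thu^\dis,\thu^\mem,\thu^\non\}$. By assumption, the segment from $\theta_0$ to $\vartheta$ lies in the convex neighborhood. The fundamental theorem of calculus along this segment gives
\begin{equation*}
F_Q(\vartheta)-F_Q(\theta_0)-J_Q(\vartheta-\theta_0)=\int_0^1\left[\nabla_\theta F_Q(\theta_0+s(\vartheta-\theta_0))-J_Q\right](\vartheta-\theta_0)\,ds .
\end{equation*}
Bound the operator norm of the bracket by $L_Q s\|\vartheta-\theta_0\|_2$. Integrating then yields the remainder bound
\begin{equation*}
\|R(\vartheta)\|_2\le \tfrac{L_Q}{2}\|\vartheta-\theta_0\|_2^2 .
\end{equation*}
Here I interpret ``$L_Q$-Lipschitz'' with respect to the operator (spectral) norm on the Jacobian. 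If the paper intends the Frobenius norm, the same bound still holds, since Frobenius dominates spectral.

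\textbf{Step 2: Differences.} Write
\begin{equation*}
c_Q=F_Q(\thu^\hon)-F_Q(\thu^\dis)=J_Q(\thu^\hon-\theta_0)-J_Q(\thu^\dis-\theta_0)+R(\thu^\hon)-R(\thu^\dis).
\end{equation*}
The $F_Q(\theta_0)$ terms cancel. The two linear terms combine to $J_Q\delta_c=\tilde c_Q$. The triangle inequality then gives
\begin{equation*}
\|c_Q-\tilde c_Q\|_2\le\|R(\thu^\hon)\|_2+\|R(\thu^\dis)\|_2\le\varepsilon_c^\loc .
\end{equation*}
The identical argument on the pair $(\thu^\mem,\thu^\non)$ gives $\|p_Q-\tilde p_Q\|_2\le\varepsilon_p^\loc$. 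This verifies Eq. \ref{eq:certificate-errors}, so $(\tilde c_Q,\tilde p_Q,\varepsilon_c^\loc,\varepsilon_p^\loc)$ is a geometric certificate.

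\textbf{Step 3: Transfer coefficients.} Assume $J_Q\delta_c\neq0$, so the certificate condition $\tilde c_Q\neq 0$ holds. Substitute the proxies into Eq. \ref{eq:lambda-q}. We have $\langle\tilde p_Q,\tilde c_Q\rangle=\delta_p^\top J_Q^\top J_Q\delta_c=\delta_p^\top G_Q\delta_c$ and $\|\tilde c_Q\|_2^2=\delta_c^\top G_Q\delta_c$. This gives Eq. \ref{eq:lambda-loc}, and $U_Q$ follows directly from its definition.

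The only step needing care is Step 1, specifically using the segment-containment hypothesis and the correct norm for the Lipschitz constant. Steps 2 and 3 are bookkeeping.
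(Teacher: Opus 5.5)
Your proposal is correct and matches the paper's proof essentially step for step. Both derive the Taylor remainder bound $\frac{L_Q}{2}\|\vartheta-\theta_0\|_2^2$ at each endpoint from the operator-norm Lipschitz Jacobian, cancel $F_Q(\theta_0)$ and apply the triangle inequality to each endpoint pair, then substitute $J_Q\delta_c$ and $J_Q\delta_p$ into Def.~\ref{def:geo-cert}. The only difference is that you derive the remainder bound explicitly through the integral form along the segment, which the paper simply asserts.
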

\begin{proof}
    We leave a detailed proof in Appendix \ref{app:proof:loc-aud-geo}.
\end{proof}

Eq. \ref{eq:lambda-loc} shows that the relevant parameter-space geometry is induced by the audit through $G_Q = J_Q^\top J_Q$. An endpoint component orthogonal in ordinary Euclidean or optimization geometry need not remain orthogonal after passing through the behavioral interface.

\paragraph{Influence-based construction} Beyond a local geometry of $(\thu^\hon, \thu^\dis, \thu^\mem, \thu^\non)$, we now discuss a more idealized \emph{global} geometry in which we assume model \emph{convexity} on its parameter. These constraints enable us to express both $\delta_c$ and $\delta_p$ with a query-independent inverse Hessian, which allows us to \emph{instantiate} the geometry of Theorem \ref{th:main} by expressing the abstract endpoint directions as data-dependent quantities through the (whitened) gradient at $z^*$ and $D_u$, as well as the honesty level of the model owner $\Chal$. We make the following assumptions of the (ideal) honest/dis-honest unlearning algorithms:

\begin{assumption}[Learning and Unlearning for Convex Models]\label{assum:convex}
    For a convex model $\theta \in \Theta = \mathbb{R}^p$, let $\ell$ be a per-sample loss that is twice continuously differentiable in $\theta$. The trained model is the unique minimizer:
    \begin{equation}\label{eq:convex-fit}
        \theta =
        \argmin_{w \in \mathbb{R}^p} \frac{1}{n} \sum_{z \in D} \ell(w, z) + \frac{\lambda}{2}\|w\|^2,
        \quad \lambda > 0.
    \end{equation}
    The regularized empirical risk is $\mu$-strongly convex with $\mu \geq \lambda$. The Hessian:
    \begin{equation}
        \Hth := \frac{1}{n}\sum_{z \in D} \nabla^2_\theta\ell(\theta, z) + \lambda I \succ 0,
    \end{equation}
    is positive definite. Denote the gradients of a sample $z = (x,y)$ and of the unlearned set $D_u$ as:
    \begin{equation}
        g_z := \nabla_\theta \ell(\theta, z), \quad G_{D_u} := \sum_{z \in D_u} g_z,
    \end{equation}
    then a MU algorithm through influence approximation of $\mathcal{A}(D \setminus D_u)$ is given by:
    \begin{equation}
        \unA(\theta, D, D_u) := \theta + \frac{1}{n} \Hinv G_{D_u},
    \end{equation}
\end{assumption}

For the unlearning algorithm, consider a model owner that aims to preserve the partial influence of the unlearned set in the updated model \cite{Thudi2022Necessity,Zhang2024Fragile}, we give an instance of Def. \ref{def:comp-endpoints} as follows:
\begin{definition}[$\eta$-Honest Unlearning]\label{def:MU-eta}
    For the unlearning algorithm $\unA$ (Def. \ref{def:MU}), let $\Delta := \unA(\theta, D, D_u) - \theta$ denote its update. For $\eta \geq 0$, we define an $\eta$-{honest} unlearning algorithm\footnote{This is far from a typical dishonest unlearning algorithm, as we will demonstrate in Sec. \ref{ssec:setup} and \ref{ssec:result-non-convex}, but its simplicity allows us to investigate the compliance-curiosity geometry in controlled convex settings more directly.} $\unA_\eta$ as:
    \begin{equation}
        \unA_\eta(\theta, D, D_u) := \theta + \eta \cdot \Delta,
    \end{equation}
    Under this definition, $\unA_1$ is the honest unlearning algorithm described by Def. \ref{def:MU}; $\unA_0$ is a ``lazy'' unlearning algorithm, i.e., $\theta = \thu = \unA_0 (\theta, D, D_u)$ \cite{Shumailov2021Ordering,Thudi2022Necessity}.
    The compliance displacement is given by:
    \begin{equation}\label{eq:convex-delta-c}
        \delta_c = \thu^\hon-\thu^\dis = \frac{1-\eta}{n}\Hinv G_{D_u},
    \end{equation}
    while the curiosity displacement is given by:
    \begin{equation}\label{eq:convex-delta-p}
        \delta_p:=\thu^\mem-\thu^\non = -\frac{1}{n}\Hinv g_{z^*}+r_\theta, \quad
        \|r_\theta\|_2\leq R_\theta,
    \end{equation}
    as a first-order leave-one-out perturbation in model parameter, as demonstrated by Koh and Liang in \cite{KohLiang2017InfFunc}. An explicit finite-sample bound of Eq. \ref{eq:convex-delta-p} is proved by Giordano et al. in \cite{Giordano2019Infinitesimal} (Theorem 1).
\end{definition}

Additionally, we note that Def. \ref{def:MU-eta} includes two regimes of unlearning described by Tang et al. in \cite{Tang2025Apollo}, in which:
\begin{enumerate}
   \item $\eta \in [0,1)$ applies \emph{Under-Unlearning}, in which insufficient unlearning preserves information of the unlearned set in the decision boundaries of $\thu$;
   \item $\eta \in (1, +\infty)$ applies \emph{Over-Unlearning}, in which overzealous unlearning imposes performance loss on the retained set by removing too much information.
\end{enumerate}

We define the \emph{whitened gradients} of $z^*$ and $D_u$, respectively, as:
\begin{equation}
    \bar{g}:= \Hhalf g_{z^*}, \quad \bar{G}:= \Hhalf G_{D_u},
\end{equation}
The Hessian \emph{alignment} between the target sample and unlearned set gradients can therefore be written as:
\begin{equation}\label{eq:rho-align}
    \rho(z^*) := \frac{\bar{g}^\top \bar{G}}{ \|\bar{g}\|_2 \|\bar{G}\|_2 }\in[-1,1].
\end{equation}
Projecting the leading curiosity displacement $\delta_p^0 := -(1/n)\Hinv g_{z^*}$ onto $\delta_c$ in the Hessian metric gives the query-independent mechanism's coefficient:
\begin{equation}\label{eq:kappa}
    \kappa(z^*) := \frac{|\rho(z^*)|}{|1-\eta|} \cdot \frac{\|\bar{g}\|_2}{\|\bar{G}\|_2}, \quad \eta \neq 1.
\end{equation}
$\rho$ and $\kappa$ therefore describes the relative geometry between $z^*$ and $D_u$ before the behavioral audit $\Pi$. Importantly, it does not imply an upper/lower bound on the privacy-audit alignment coefficient $\gamma_Q$, as the query sequence $Q$ changes the metric in which the compliance/curiosity directions are observed. 

To make the query dependence explicit, take the (local) reference parameter $\theta_0=\theta$; let $J_Q:=\nabla_\theta F_Q(\theta)$, and $G_Q:=J_Q^\top J_Q$. We define the Hessian-whitened query metric:
\begin{equation}\label{eq:MQ}
\begin{aligned}
    M_Q :=& \Hhalf G_Q \Hhalf \\
    =& (J_Q \Hhalf )^\top(J_Q \Hhalf ) \succeq 0.
\end{aligned}
\end{equation}
Equivalently, we write $h_{x_t} := \Hhalf \nabla_\theta f_\theta(x_t)$, then $M_Q=\sum_{t=1}^T h_{x_t}h_{x_t}^\top$. $M_Q$ is the metric imposed by the audit queries on Hessian-whitened parameter directions. Let:
\begin{equation}\label{eq:convex-query-proxies}
\begin{aligned}
    \tilde{c}_Q &:= J_Q\delta_c = \frac{1-\eta}{n}J_Q \Hhalf \bar{G},\\
    \tilde{p}_Q &:=J_Q\delta_p^0 = -\frac{1}{n}J_Q \Hhalf \bar{g}.
\end{aligned}
\end{equation}
For the semidefinite norm $\|v\|_{M_Q}:=\sqrt{v^\top M_Qv}$, Eq. \ref{eq:convex-query-proxies} gives:
\begin{equation}\label{eq:proxy-magnitudes}
    \|\tilde{c}_Q\|_2=\frac{|1-\eta|}{n}\|\bar{G}\|_{M_Q}, \quad
    \|\tilde{p}_Q\|_2=\frac{1}{n}\|\bar{g}\|_{M_Q}.
\end{equation}
The query-conditioned Hessian alignment is:
\begin{equation}\label{eq:rho-q}
    \rho_Q(z^*):= \frac{ \bar{g}^\top M_Q\bar{G} }{ \|\bar{g}\|_{M_Q} \|\bar{G}\|_{M_Q} }.
\end{equation}
whenever $\|\bar{g}\|_{M_Q} > 0$ and $\|\bar{G}\|_{M_Q} > 0$, with the corresponding first-order audit-visible transfer coefficient:
\begin{equation}\label{eq:lambda-conv}
    \lambda_Q^\conv
    =\frac{|\langle\tilde{p}_Q,\tilde{c}_Q\rangle|}{\|\tilde{c}_Q\|_2^2}
    =\frac{|\rho_Q(z^*)|}{|1-\eta|} \cdot \frac{\|\bar{g}\|_{M_Q}}{\|\bar{G}\|_{M_Q}},
\end{equation}
which is the query-conditioned counterpart of $\kappa$. $\tilde{p}_Q^\perp$; the Euclidean component of $\tilde{p}_Q$ orthogonal to $\tilde{c}_Q$, satisfies:
\begin{equation}\label{eq:convex-orthogonal}
    \|\tilde{p}_Q^\perp\|_2 = \frac{1}{n}\|\bar{g}\|_{M_Q}\sqrt{1-\rho_Q(z^*)^2}.
\end{equation}
The convex model therefore provides first-order predictions for both $c_Q$ and $p_Q$ in Eq. \ref{eq:exact-decomposition}, with which we arrive at the following corollary:

\begin{corollary}[Convex Audit Geometry]\label{cor:conv-aud-geo}
    Under Assumption \ref{assum:convex} and Def. \ref{def:MU-eta}, let $\eta \neq 1$, assume $\nabla_\theta F_Q$ is $L_Q$-Lipschitz, use $\tilde c_Q$ and $\tilde p_Q$ from Eq. \ref{eq:convex-query-proxies}. Def. \ref{def:geo-cert} holds with:
    \begin{equation}\label{eq:convex-U}
    \begin{aligned}
        \lambda_Q^\conv &= \frac{|\rho_Q(z^*)|}{|1-\eta|} \cdot \frac{\|\bar{g}\|_{M_Q}}{\|\bar{G}\|_{M_Q}}, \quad
        U_Q^\conv = \lambda_Q^\conv\varepsilon_c^\conv + \varepsilon_p^\conv,
    \end{aligned}
    \end{equation}
    in which:
    \begin{equation}
    \begin{aligned}
        \varepsilon_c^\conv
        &:=\frac{L_Q}{2}(1+\eta^2)\|\Delta\|_2^2,\\
        \varepsilon_p^\conv
        &:=\|J_Q\|_{\mathrm{op}}R_\theta +
        \frac{L_Q}{2}\big( \|\thu^\mem-\theta\|_2^2+\|\thu^\non-\theta\|_2^2 \big).
    \end{aligned}
    \end{equation}
    Lemma \ref{lemma:cert} gives:
    \begin{equation}\label{eq:main-convex}
        \beta(z^*) \geq 2\Phi\left(
        \left[ \lambda_Q^\conv q_e - \frac{U_Q^\conv}{2\sigma} \right]_+
        \right) - 1,
    \end{equation}
    with the corresponding linear bound:
    \begin{equation}\label{eq:main-convex-linear}
        \beta(z^*) \geq
        \left[ \min\{\lambda_Q^\conv,1\}(1-e) - \frac{U_Q^\conv}{\sigma\sqrt{2\pi}} \right]_+.
    \end{equation}
\end{corollary}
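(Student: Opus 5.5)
The plan is to reduce Corollary \ref{cor:conv-aud-geo} to Lemma \ref{lemma:cert}: I will show that the proxies of Eq. \ref{eq:convex-query-proxies} form a valid certificate in the sense of Def. \ref{def:geo-cert}, with errors $\varepsilon_c^\conv,\varepsilon_p^\conv$. The closed form of $\lambda_Q^\conv$ is already given by Eq. \ref{eq:lambda-conv}. It follows by substituting $\delta_c=\frac{1-\eta}{n}\Hhalf\bar G$ and $\delta_p^0=-\frac1n\Hhalf\bar g$ and writing inner products of $J_Q\Hhalf(\cdot)$ in the metric $M_Q$. Once the certificate is established, Eqs. \ref{eq:main-convex} and \ref{eq:main-convex-linear} are Eqs. \ref{eq:main-exact-cert} and \ref{eq:main-linear-cert} verbatim, so the work lies entirely in the two error bounds.

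\textbf{Compliance error.} I take $\theta_0=\theta$ in Corollary \ref{cor:loc-aud-geo}. The $\eta$-honest family gives $\thu^\hon-\theta=\Delta$ and $\thu^\dis-\theta=\eta\Delta$, so $\delta_c=(1-\eta)\Delta$. This matches Eq. \ref{eq:convex-delta-c} once $\Delta=\frac1n\Hinv G_{D_u}$ (Assumption \ref{assum:convex}). The local corollary then yields
\[
\|c_Q-J_Q\delta_c\|_2\le \tfrac{L_Q}{2}\bigl(\|\Delta\|_2^2+\eta^2\|\Delta\|_2^2\bigr)=\varepsilon_c^\conv .
\]
If I prefer not to invoke the corollary, the same bound follows from the standard Taylor estimate $\|F_Q(\vartheta)-F_Q(\theta)-J_Q(\vartheta-\theta)\|_2\le\frac{L_Q}{2}\|\vartheta-\theta\|_2^2$. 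I apply it to both endpoints and subtract, so the $F_Q(\theta)$ terms cancel. I must read the Lipschitz hypothesis as holding on a convex neighborhood of $\theta$ containing all four endpoints, as in Corollary \ref{cor:loc-aud-geo}.

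\textbf{Curiosity error.} Here the proxy is $J_Q\delta_p^0$ rather than $J_Q\delta_p$, so I split the error with the triangle inequality:
\[
\|p_Q-J_Q\delta_p^0\|_2\le \|p_Q-J_Q\delta_p\|_2+\|J_Q(\delta_p-\delta_p^0)\|_2 .
\]
The first term is bounded by the same Taylor argument applied to $\thu^\mem,\thu^\non$ about $\theta$, giving $\frac{L_Q}{2}(\|\thu^\mem-\theta\|_2^2+\|\thu^\non-\theta\|_2^2)$. For the second term, $\delta_p-\delta_p^0=r_\theta$ by Eq. \ref{eq:convex-delta-p}, so it is at most $\|J_Q\|_{\mathrm{op}}R_\theta$. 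Summing gives $\varepsilon_p^\conv$.

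\textbf{Main obstacle.} The subtle point is the reference parameter. The curiosity endpoints $\thu^\mem,\thu^\non$ are unlearned models that differ by one retained sample, and the first-order model Eq. \ref{eq:convex-delta-p} is stated relative to $\theta$, with Hessian $\Hth$. I must check that linearizing at $\theta$, rather than at $\thu^\non$, is legitimate. The fix is that the error term in the first piece is measured in distances to $\theta$, which is exactly how $\varepsilon_p^\conv$ is written, so no consistency with a different Hessian is needed: the mismatch is absorbed into $R_\theta$ and the quadratic terms. Finally, $\eta\neq1$ and $\|\bar G\|_{M_Q}>0$ ensure $\tilde c_Q\ne0$, so Lemma \ref{lemma:cert} applies.
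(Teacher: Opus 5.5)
Your proposal is correct and follows essentially the same route as the paper's proof in Appendix~\ref{app:proof:conv-aud-geo}. Both arguments Taylor-expand the compliance endpoints $\theta+\Delta$ and $\theta+\eta\Delta$ about $\theta$ to get the $(1+\eta^2)$ factor, split $\delta_p=\delta_p^0+r_\theta$ and bound $\|J_Q r_\theta\|_2\le\|J_Q\|_{\mathrm{op}}R_\theta$ alongside the quadratic Taylor terms for $\thu^\mem,\thu^\non$, and then take $\lambda_Q^\conv$ from Eq.~\ref{eq:lambda-conv} and apply Lemma~\ref{lemma:cert}. Two points the paper leaves implicit are worth keeping from your version: the Lipschitz hypothesis must hold on a convex set containing the segments from $\theta$ to the endpoints, and $\|\bar G\|_{M_Q}>0$ is needed to ensure $\tilde c_Q\neq 0$.
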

\begin{proof}
    We leave a detailed proof in Appendix \ref{app:proof:conv-aud-geo}.
\end{proof}

The convex construction in Corollary \ref{cor:conv-aud-geo} therefore provides a simple and interpretable decomposition: $\rho$ and $\kappa$ describe a query-independent geometry inherent in $\thu$, $M_Q$ transforms it into $(\rho_Q,\lambda_Q^\conv)$, and $U_Q^\conv$ quantifies the first-order approximation error.


\section{Evaluation}\label{sec:eval}

\begin{figure*}[t]
    \centering
    \captionsetup[subfigure]{justification=centering}
    \begin{minipage}[t]{0.49\linewidth}
    \begin{tcolorbox}[groupbox, title=Purchase-100 \cite{Shokri2017MIA}]
        \subfloat[\textbf{GA}, $\eta = 0.5$.\label{sfig:P100-GA-0.5-rand}]
        {\includegraphics[width=.5\linewidth]{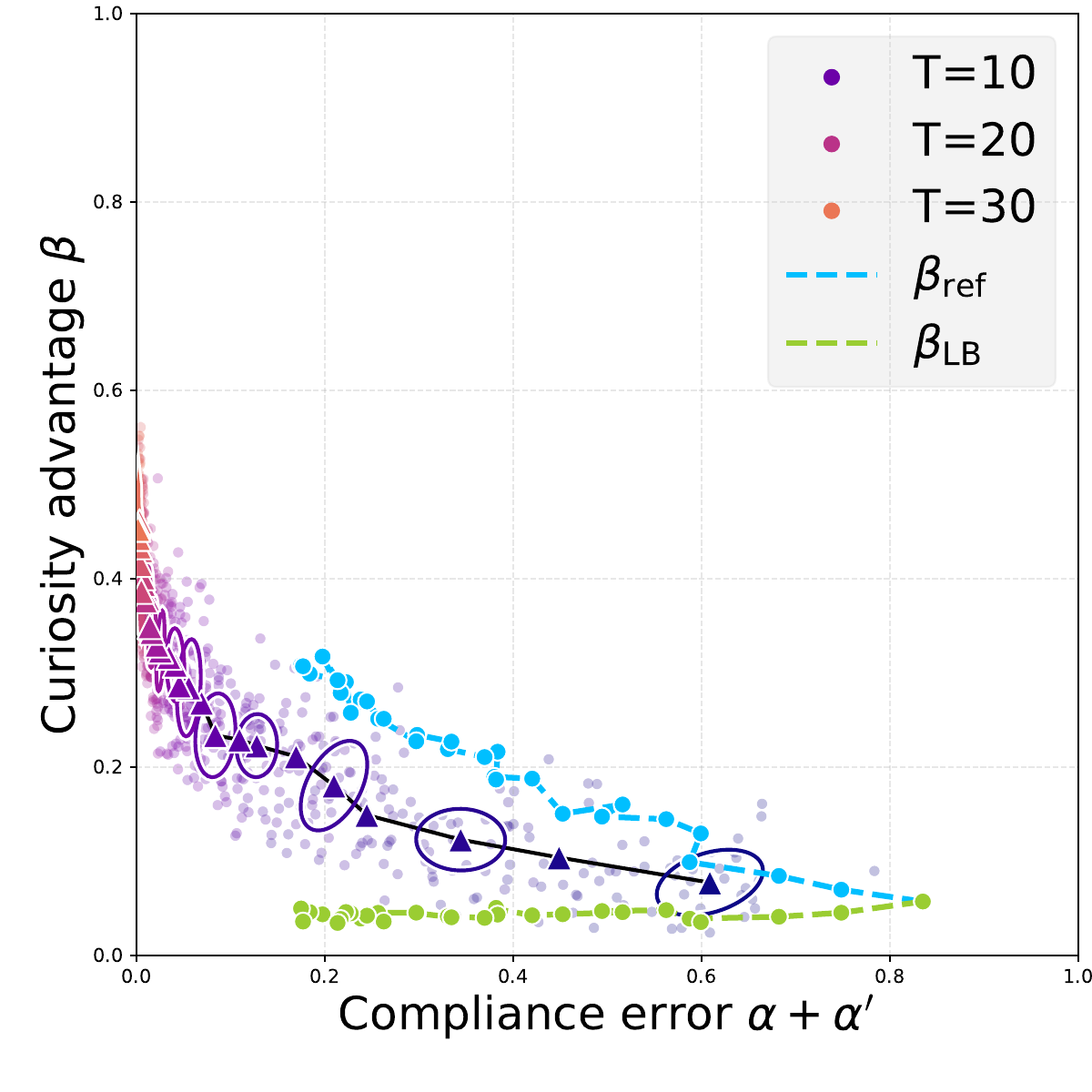}}
        \subfloat[\textbf{SalUn}, $\eta = 0.5$.\\\strut\label{sfig:P100-SalUn-0.5-rand}]
        {\includegraphics[width=.5\linewidth]{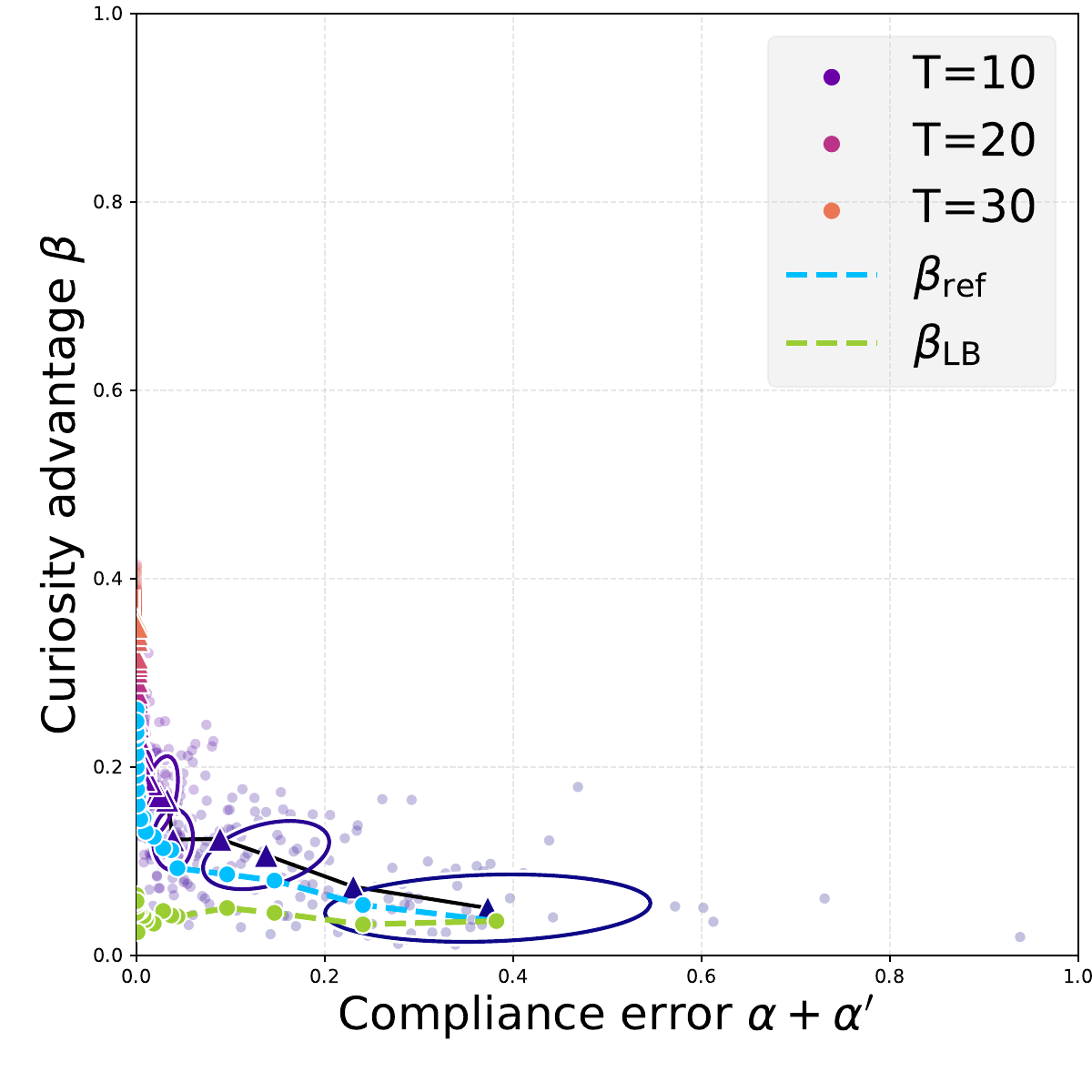}}
        
        \subfloat[\textbf{GA}, $\eta = 0.9$.\\\strut\label{sfig:P100-GA-0.9-rand}]
        {\includegraphics[width=.5\linewidth]{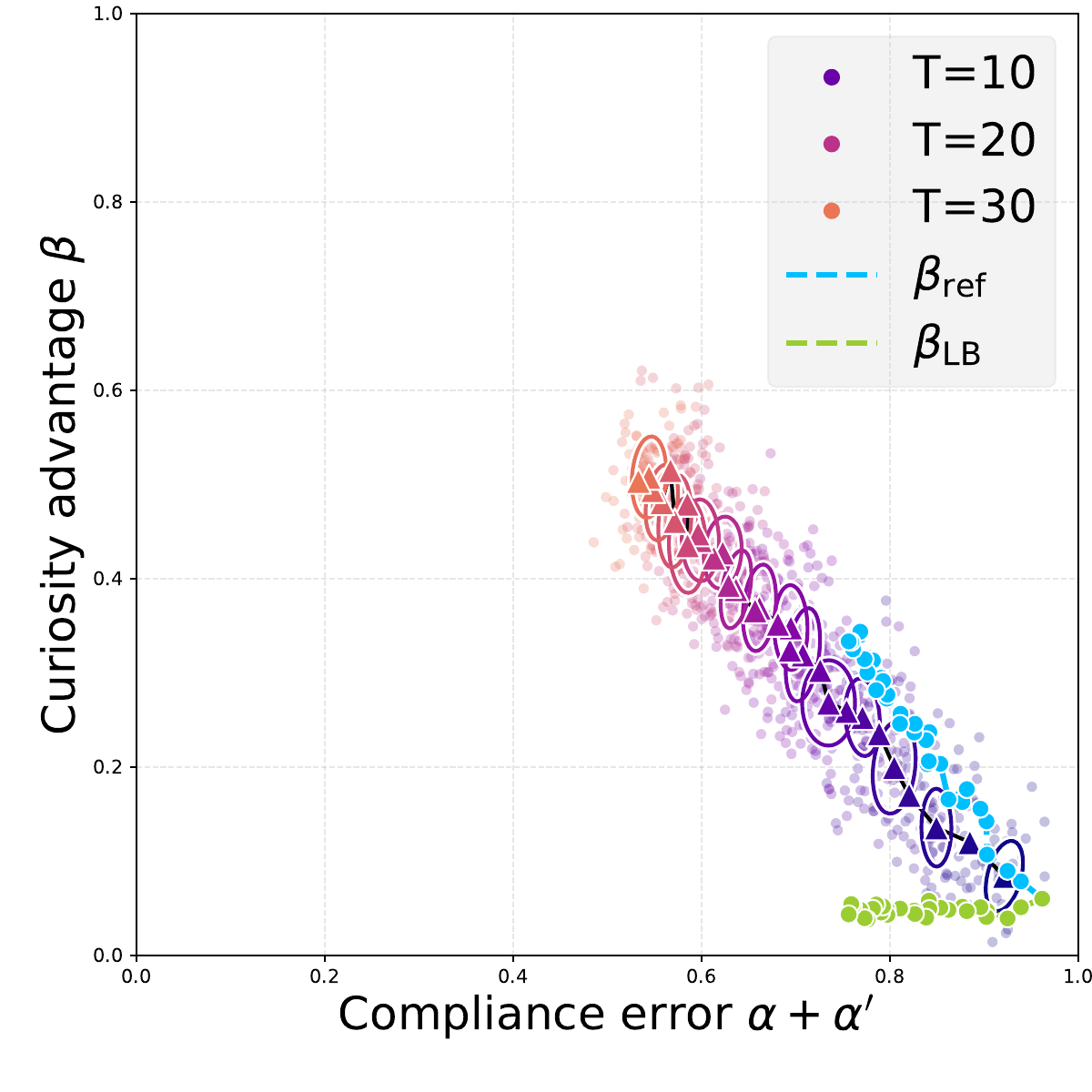}}
        \subfloat[\textbf{SalUn}, $\eta = 0.9$.\\\strut\label{sfig:P100-SalUn-0.9-rand}]
        {\includegraphics[width=.5\linewidth]{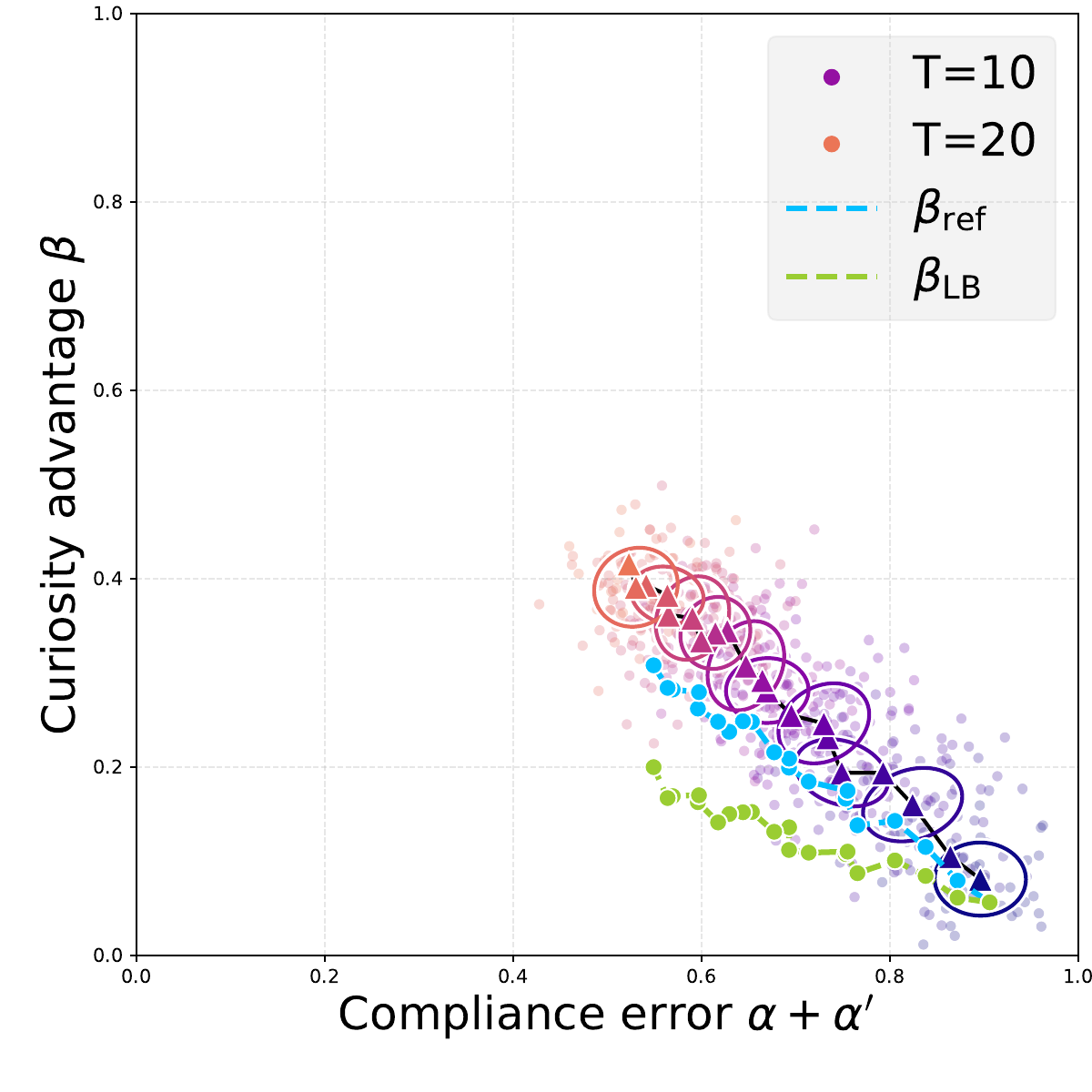}}
    \end{tcolorbox}
    \end{minipage}
    \begin{minipage}[t]{0.49\linewidth}
    \begin{tcolorbox}[groupbox, title=CIFAR-10 \cite{Krizhevsky2009CIFAR}]
        \subfloat[\textbf{GA}, $\eta = 0.5$.\label{sfig:C10-GA-0.5-rand}]
        {\includegraphics[width=.5\linewidth]{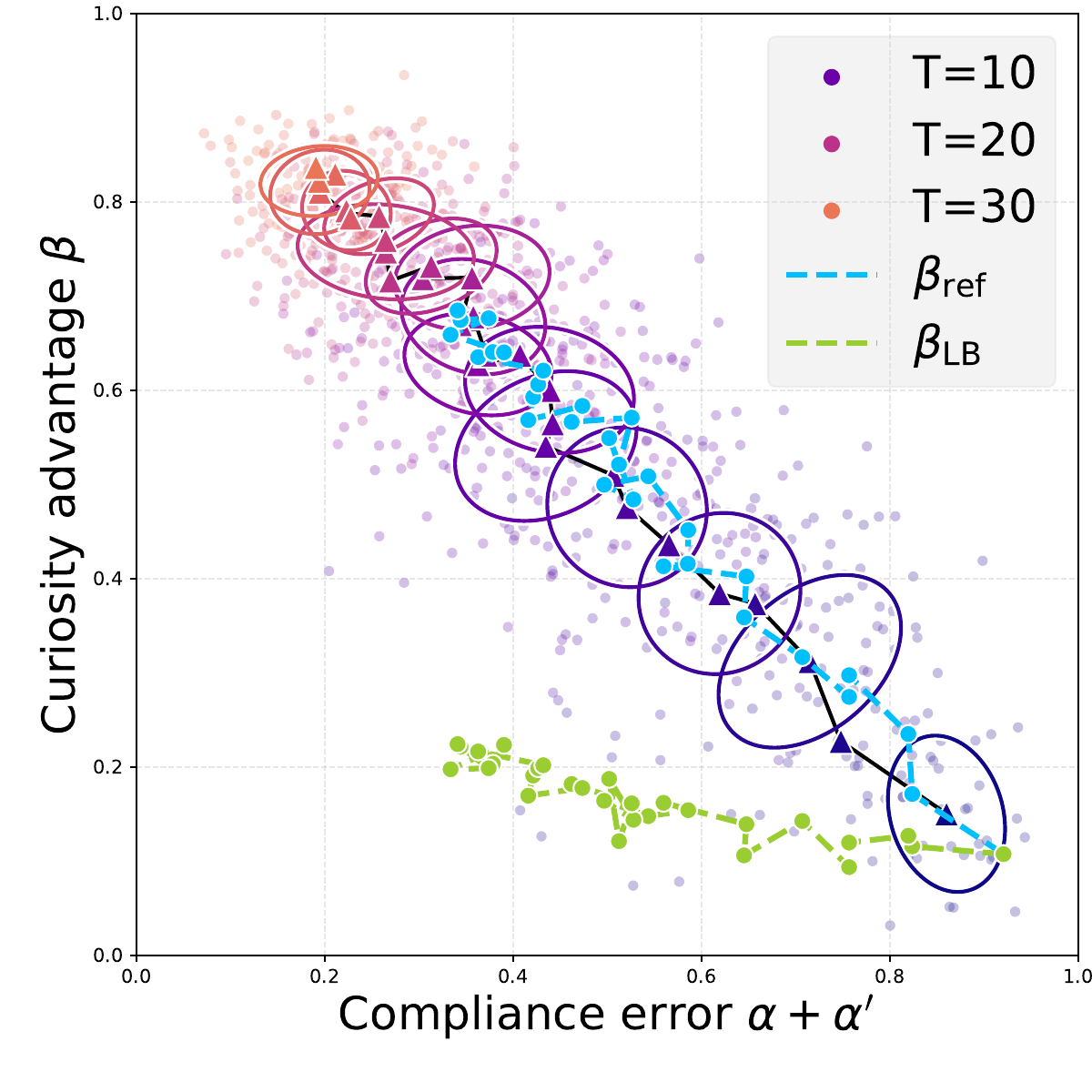}}
        \subfloat[\textbf{GA}, $\eta = 0.5$,\\Class-wise Unlearning\label{sfig:C10-GA-0.5-class}]
        {\includegraphics[width=.5\linewidth]{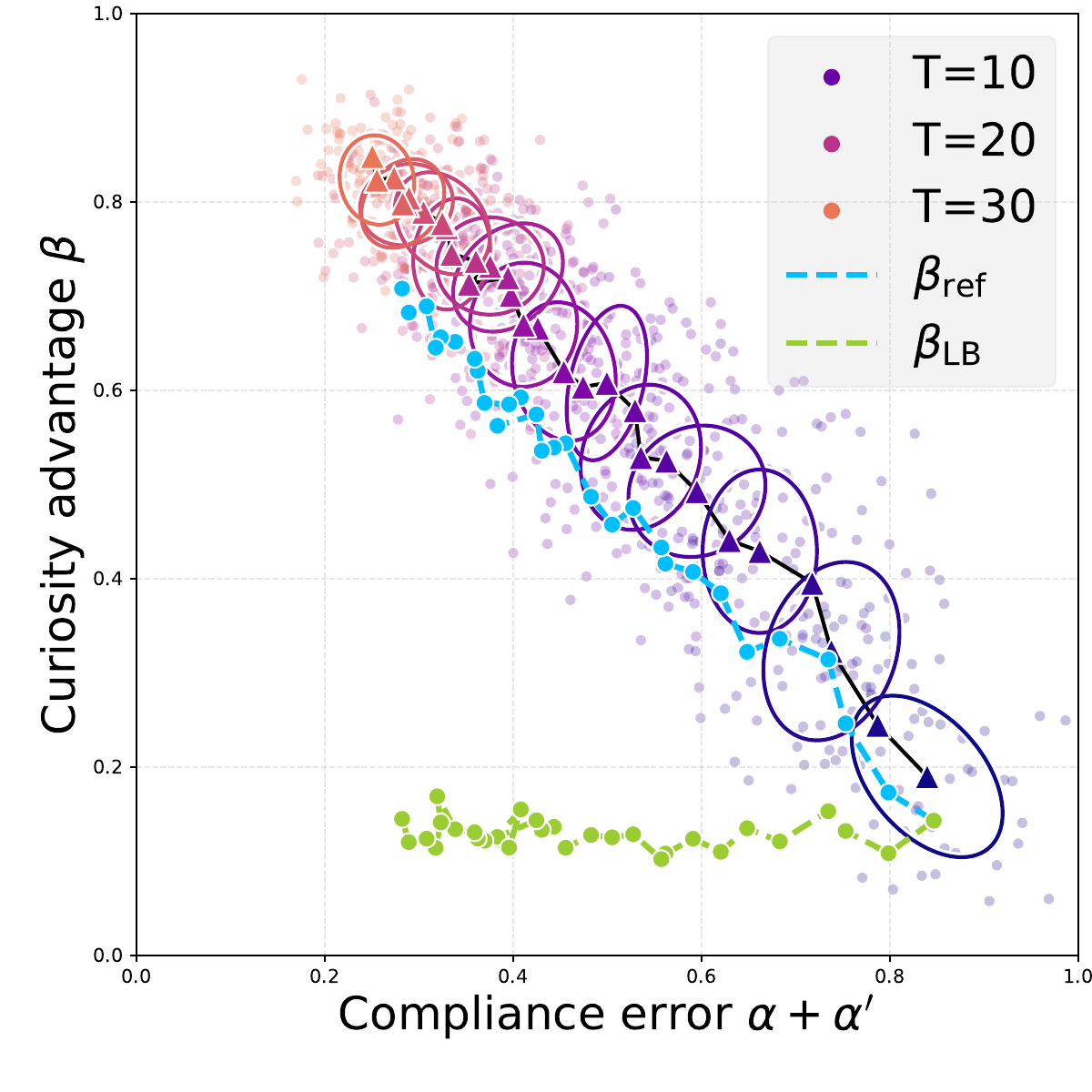}}
        
        \subfloat[\textbf{GA}, $\eta = 0.9$.\label{sfig:C10-GA-0.9-rand}]
        {\includegraphics[width=.5\linewidth]{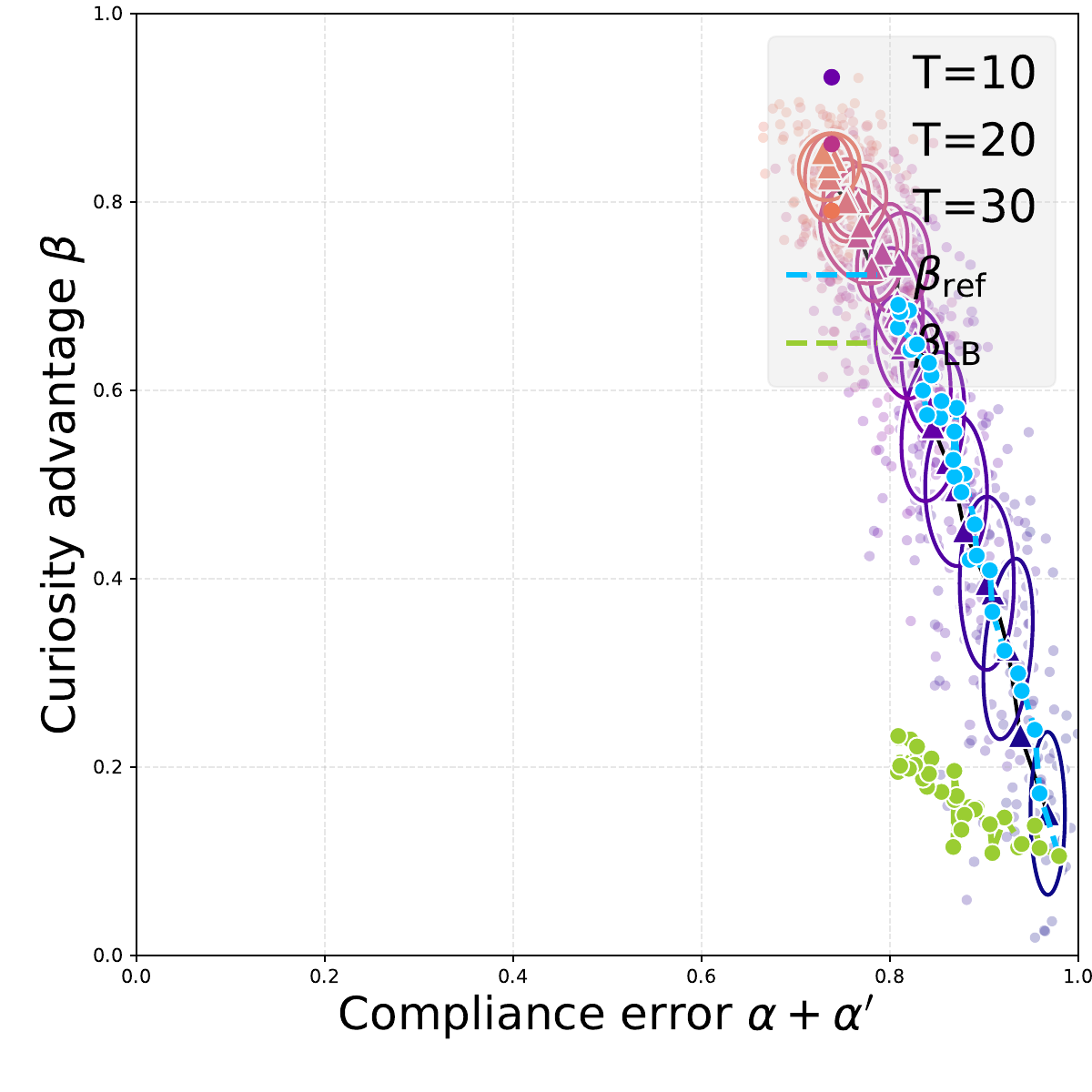}}
        \subfloat[\textbf{GA}, $\eta = 0.9$,\\Class-wise Unlearning\label{sfig:C10-GA-0.9-class}]
        {\includegraphics[width=.5\linewidth]{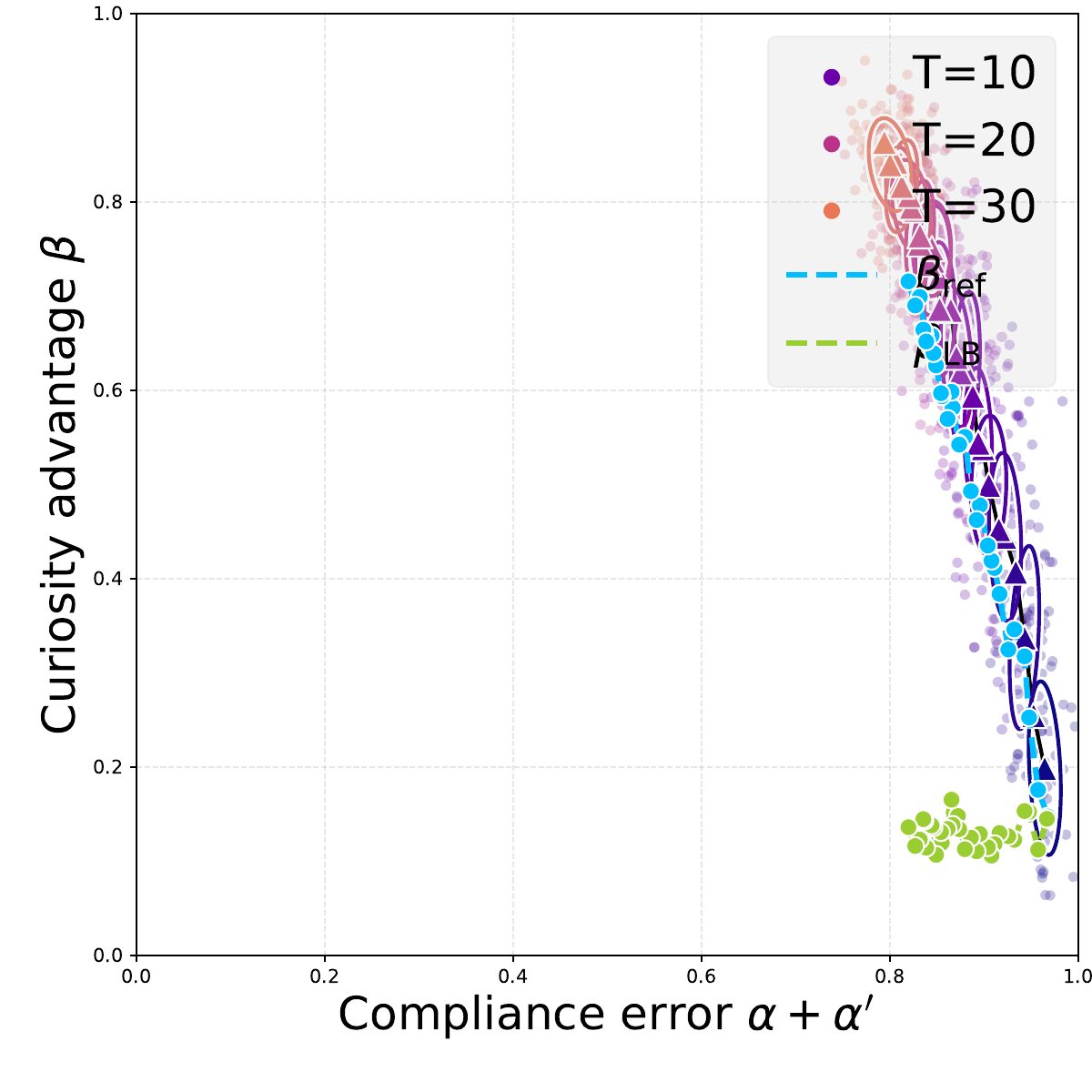}}
    \end{tcolorbox}
    \end{minipage}
    \caption{Pareto curve between $(\alpha + \alpha^\prime, \beta)$ across query budget $1 \leq T \leq 30$ on a fixed $z^*$ for a small 4-layer MLP on Purchase-100 and ResNet-18 on CIFAR-10. We also plot the centroid trajectory $\mu(T)$ and covariance ellipses of one $\sigma_T$ for each $T$. We perform our audit against different models/datasets (Fig. \ref{sfig:P100-GA-0.5-rand}-\ref{sfig:P100-SalUn-0.9-rand} vs \ref{sfig:C10-GA-0.5-rand}-\ref{sfig:C10-GA-0.9-class}), model owner honesty (Fig. \ref{sfig:P100-GA-0.5-rand}, \ref{sfig:P100-SalUn-0.5-rand}, \ref{sfig:C10-GA-0.5-rand}, \ref{sfig:C10-GA-0.5-class} vs \ref{sfig:P100-GA-0.9-rand}, \ref{sfig:P100-SalUn-0.9-rand}, \ref{sfig:C10-GA-0.9-rand}, \ref{sfig:C10-GA-0.9-class}), MU algorithms (Fig. \ref{sfig:P100-GA-0.5-rand}, \ref{sfig:P100-GA-0.9-rand} vs \ref{sfig:P100-SalUn-0.5-rand}, \ref{sfig:P100-SalUn-0.9-rand}), and random-sampling or class-wise unlearning (Fig. \ref{sfig:C10-GA-0.5-rand}, \ref{sfig:C10-GA-0.9-rand} vs \ref{sfig:C10-GA-0.5-class}, \ref{sfig:C10-GA-0.9-class}).}
    \label{fig:pareto-main}
\end{figure*}

\subsection{Experimental Setup}\label{ssec:setup}

To confirm our theoretical results on privacy leakage of target samples transferred from the behavioral audit, we aim to demonstrate the privacy-audit tradeoff established by Theorem \ref{th:main} through membership inference trials.

\paragraph{Models} We examine the audit protocol $\Pi$ on two settings, and train (\rom{1}) 128 surrogate models of a fully-connected neural network, with 4 hidden layers of 1024 dimension for the Purchase-100 dataset\footnote{A subset of  Kaggle's ``acquire valued shoppers'' challenge dataset, see: \url{https://www.kaggle.com/c/acquire-valued-shoppers-challenge/data} and \url{https://github.com/privacytrustlab/datasets} by Shokri et al. \cite{Shokri2017MIA}.} (197,324 samples, 600-dimension binary input, 100 classes) used by Shokri et al. in \cite{Shokri2017MIA}, and (\rom{2}) 32 ResNet-18 models \cite{He2016ResNet} on subsets of the CIFAR-10 dataset \cite{Krizhevsky2009CIFAR} (50,000 training set samples, $3\times32\times32$ image, 10 classes), respectively.

\paragraph{Dataset} The training set of each surrogate model $\tilde{\theta}_i$ contains two parts: (\rom{1}) a shared set $D_u$ that will be unlearned\footnote{We make the explicit assumption in Sec. \ref{ssec:threat-model} that $\Aud$ has knowledge of $D_u$ and therefore can universally unlearn $D_u$ for all surrogate models. Note the subtle distinction between this auditor and the adversary in the U-MIA \cite{Kurmanji2023Unbounded} and U-LiRA \cite{Hayes2024Inexact} attacks without this knowledge, which require more surrogate models with different selections of the unlearned set.} and (\rom{2}) random splits of half of the remaining training set (excluding $D_u$). We set $|D_u| = 1000$ for Purchase-100 and $|D_u| = 500$ for CIFAR-10. We examine both random-sampled unlearning and class-wise unlearning for CIFAR-10; for the latter, $D_u$ is exclusively sampled from class 0. We train the models to 80\% accuracy on a separate validation set.

\paragraph{Unlearning} We unlearn $D_u$ from each model with two representative approximate MU algorithms: Gradient Ascent (\textbf{GA}) \cite{Neel2021Descent} as a baseline and Saliency Unlearning (\textbf{SalUn}) \cite{Fan2024SalUn} as a state-of-the-art unlearning algorithm.
While not a direct adaptation of the $\eta$-honest unlearning (Def. \ref{def:MU-eta}) designed strictly for controlled experiments on convex models (Sec. \ref{ssec:result-convex}), we similarly use $\eta$ to scale the unlearning process and tune the discrepancy between $\thu^\hon$ and $\thu^\dis$: for both \textbf{GA} and \textbf{SalUn}, we unlearn $\lfloor \eta \cdot K \rfloor$ epochs for a dishonest unlearning setting against $K$ epochs in honest unlearning, a smaller $\kappa$ indicate fewer honesty and more discrepancy between $\thu^\hon$ and $\thu^\dis$. We use this setting to simulate dishonest model owners who aim to \emph{under-unlearn} \cite{Tang2025Apollo} the models and preserve unlearned set information in the updated model.

For a more detailed account of the unlearning algorithms, we refer readers to their respective papers \cite{Neel2021Descent,Fan2024SalUn,Huang2024Unified}. We adopt the hyperparameter settings of \textbf{GA} and \textbf{SalUn} used by Huang et al. in \cite{Huang2024Unified} for honest unlearning. We refer to our \href{https://github.com/LiouTang/Behavioral-Unlearn-Audit}{code implementation} for more details.

\paragraph{Adopting LiRA and U-LiRA for Auditing} As noted in Assumption \ref{assum:audit}, the query sequence $Q = (x_1, \dots, x_T)$ are determined independent of $D_u$. 
To perform the audit, we randomly select $T$ samples from a \emph{disjoint validation set} unused in training any surrogate models to query the surrogate models, and collect the posterior probabilities from each model. We repeat the sampling process for each $T$ 30 times to demonstrate the variation in privacy leakage under different selections of $Q$.

Following the LiRA membership inference scheme of Carlini et al. \cite{Carlini2022MIA} and U-LiRA for MU Hayes et al. \cite{Hayes2024Inexact}, we perform two sets of tests for compliance errors ($\alpha$, $\alpha^\prime$) and the curiosity advantage ($\beta(z^*)$). For the former, we aim to distinguish between two worlds: the model is honestly unlearned ($\thu^\hon$), or the model is dishonestly unlearned ($\thu^\dis$); for the latter, between $z^* \in D_r$ ($\thu^\mem$) or $z^* \notin D_r$ ($\thu^\non$).

\paragraph{Calculating a Theoretical $\beta(z^*)$} While the LiRA/U-LiRA-like setting allows us to calculate $(\alpha, \alpha^\prime, \beta)$ over each $Q$ directly, a purely behavioral audit run on surrogate models trained directly forbids us to characterize the Gaussian observation noise $\mathcal{N}(0, \sigma^2)$ noise (Def. \ref{assum:audit}), as the experiments on MLPs and CNNs measure the operational $\beta(z^*)$ directly to observe operational leakage, for which the training process induce model variations hard to characterize as an explicit noise level. We defer a detailed examination of the Gaussian noise to a controlled experiment on \emph{convex} models in Sec. \ref{ssec:result-convex}, where the global behavior- and parameter-space geometry is available, as further empirical confirmation of Sec. \ref{ssec:mechanism}.

Therefore, we need to calculate a theoretical $\beta(z^*)$ directly from the surrogate models. Recall the compliance and query vectors $c_Q$ and $p_Q$ given in Def. \ref{def:sig-vectors}; we separate all (unlearned) surrogate models into four categories:
\begin{equation}
    \thu^{\hon, \mem}, \quad \thu^{\hon, \non}, \quad \thu^{\dis, \mem}, \quad \thu^{\dis, \non}.
\end{equation}
For each category of models, write e.g.: 
\begin{equation}
    \mu^{\hon, \mem} = \mathbb{E} [ F_Q (\vartheta_u) \mid \hon, \mem ],
\end{equation}
and the rest follows. Then we can write:
\begin{equation}
\begin{aligned}
    \tilde{c}_Q &= \frac{1}{2}\left[
    (\mu^{\hon, \mem}-\mu^{\dis, \mem}) + (\mu^{\hon, \non}-\mu^{\dis, \non})
    \right],\\
    \tilde{p}_Q &= \frac{1}{2}\left[
    (\mu^{\hon, \mem}-\mu^{\hon, \non}) + (\mu^{\dis, \mem}-\mu^{\dis, \non})
    \right],\\
\end{aligned}
\end{equation}
and subsequently,
\begin{equation}\label{eq:gamma-operational}
    \tilde{\gamma}_Q = \frac{|\langle \tilde{p}_Q,\tilde{c}_Q\rangle|}{\|\tilde{c}_Q\|^2}, \quad
    p_Q^\perp = \tilde{p}_Q - \frac{\langle \tilde{p}_Q,\tilde{c}_Q\rangle \tilde{c}_Q}{\|\tilde{c}_Q\|^2},
\end{equation}
are derived independently of $\sigma$. Write $\widehat{\Sigma} = \mathrm{diag}(\hat{v}_1,\dots,\hat{v}_n)$ as the common diagonal residual covariance of vectors $v_1, \dots, v_n$. Then we can define $d_\Comp$ and $d_\Cur$ as their Mahalanobis distances, i.e.:
\begin{equation}
    \tilde{d}_\Comp = \left\| \widehat{\Sigma}^{-1/2}\tilde{c}_Q \right\|, \quad
    \tilde{d}_\Cur  = \left\| \widehat{\Sigma}^{-1/2}\tilde{p}_Q \right\|,
\end{equation}
from which we can estimate a referential $\beta_\mathrm{ref}$ as:
\begin{equation}\label{eq:beta-ref}
\begin{aligned}
    e_\mathrm{ref} &= 2\left[ 1- \Phi ({\tilde{d}_\Comp}/2) \right], \\
    \beta_\mathrm{ref}(z^*) &= 2 \Phi (\tilde{d}_\Cur / 2) - 1.
\end{aligned}
\end{equation}



\subsection{Results}\label{ssec:result-non-convex}

\paragraph{Privacy-Audit Tradeoff} Following the experimental setup described in Sec. \ref{ssec:setup}, we demonstrate the results in Fig. \ref{fig:pareto-main}. For each Pareto curve under one model architecture, $D_u$, $\unA$, we also include:
\begin{itemize}
    \item The theoretical referential $\beta_\mathrm{ref}(z^*)$ at each $T$ similar to Eq. \ref{eq:alpha-beta-d} by plotting the \emph{centroid} of $(e_\mathrm{ref}, \beta_\mathrm{ref})$ over the 30 sampled query sequences for each value of $T$;
    \item The theoretical lower bound $\beta_{\mathrm{LB}}(T) \leq \beta(z^* \mid T)$, calculated as $\beta_{\mathrm{LB}} = 2\Phi (\tilde{\gamma}_Q q_e) - 1$ following Eq. \ref{eq:bata-main-lb}, $\gamma_Q$ is given by our construction in Eq. \ref{eq:gamma-operational}, $q_e=\Phi^{-1}(1-e/2)$ is calculated from the Monte Carlo membership inference trials and similarly taken as the {centroid}.
\end{itemize}
\noindent We also include additional results in Appendix \ref{app:add-exp}.

Our results in Fig. \ref{fig:pareto-main} demonstrates the predicted behavioral coupling of Theorem \ref{th:main}, in which an $(\alpha+\alpha^\prime, \beta)$-tradeoff is present across all experiments: as the behavioral audit achieves lower error rate, privacy leakage on $z^* \in D_r$ observable to the honest-but-curious auditor $\Aud$ is more significant. The empirical results derived from direct membership inference trials closely follow the theoretical $\beta_\mathrm{ref}$, demonstrating that the geometrical transfer successfully accounts for privacy leakage on $z^*$.
Several trends can additionally be observed across different datasets, unlearning algorithms, and model owner honesty levels:
\begin{itemize}
    \item Under a fixed query budget $T$, the achievable $(\alpha+\alpha^\prime, \beta)$ lie within a range, confirming that there exist geometrical bounds that constrain the ability of $\Aud$. By increasing the query budget, $\Aud$ rapidly reduces the compliance error and simultaneously gains curiosity advantage, where $\eta=0.1$ (see Appendix \ref{app:add-exp}) can be trivially distinguished by only 10 randomly selected queries.
    \item The variance induced by different queries also decreases with larger $T$, as the marginal information gained by individual observations diminishes.
    \item The tradeoff is additionally influenced by a variety of factors: \textbf{SalUn} incurs more privacy leakage at the same compliance error on the model/dataset selections (Fig. \ref{sfig:P100-GA-0.5-rand}, \ref{sfig:P100-GA-0.9-rand} vs \ref{sfig:P100-SalUn-0.5-rand}, \ref{sfig:P100-SalUn-0.9-rand}); further, class-wise unlearning is more susceptible to $\Aud$, as $D_u$ is more homogeneous, $\gamma_Q(z^*)$ naturally increases for $z^*$ of the same class (class 0).
\end{itemize}

Most interestingly, $\beta_\mathrm{LB}$ stays ineffective across most of the experiments except for one in Fig. \ref{sfig:P100-SalUn-0.9-rand}.
To explain this effect, we calculate $R_\perp = {\|\tilde{p}_Q^\perp\|^2} / {\|\tilde{p}_Q\|^2}$, which gives an average $ \overline{R_\perp} \approx 0.8$ at $T = 10$ across 30 selections of $Q$ for Fig. \ref{sfig:P100-GA-0.5-rand}, i.e., the curiosity signal are near-orthogonal to the compliance signal $c_Q$ even with small-sized query sequences. As such, $\beta(z^*) \approx \beta_\mathrm{ref} \gg \beta_\mathrm{LB}$: while the referential leakage derived from the endpoint geometry is aligned with the empirical $\beta$, the theoretical lower-bound given by Theorem \ref{th:main} is intentionally conservative and only provides the privacy leakage component \emph{forced} by $Q$. We study in Sec. \ref{ssec:result-convex} how additional structural assumptions on the model parameter space, given by convexity, allow us to analyze this latent geometry.


\begin{figure*}[t]
    \centering
    \captionsetup[subfigure]{justification=centering}
    \subfloat[$f_{\thu}(x)$ over $\mathbb{R}^2$\label{sfig:dec-space}]
    {\includegraphics[width=.25\linewidth]{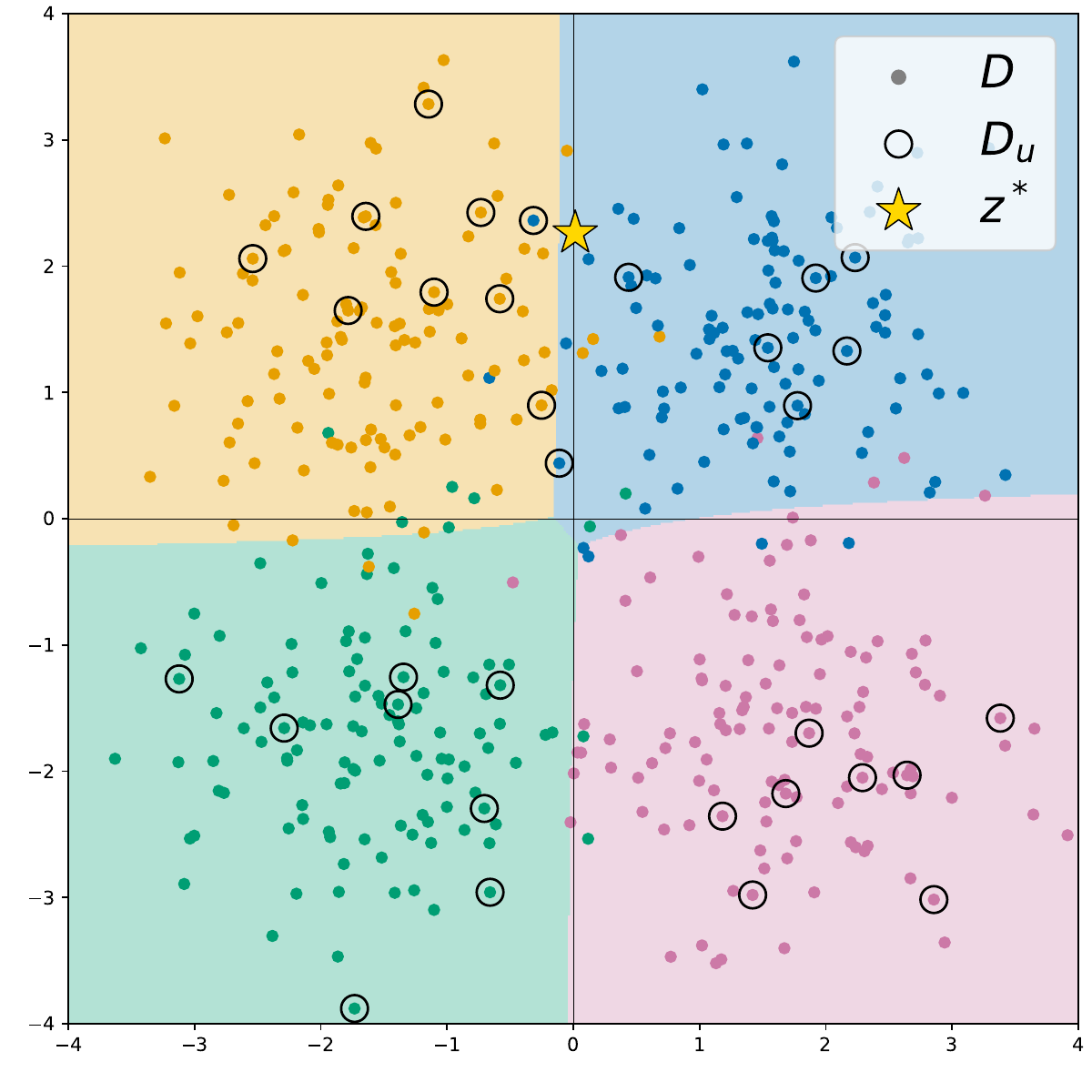}}
    \subfloat[$\eta = 0.1$\label{sfig:eta-0.1}]
    {\includegraphics[width=.25\linewidth]{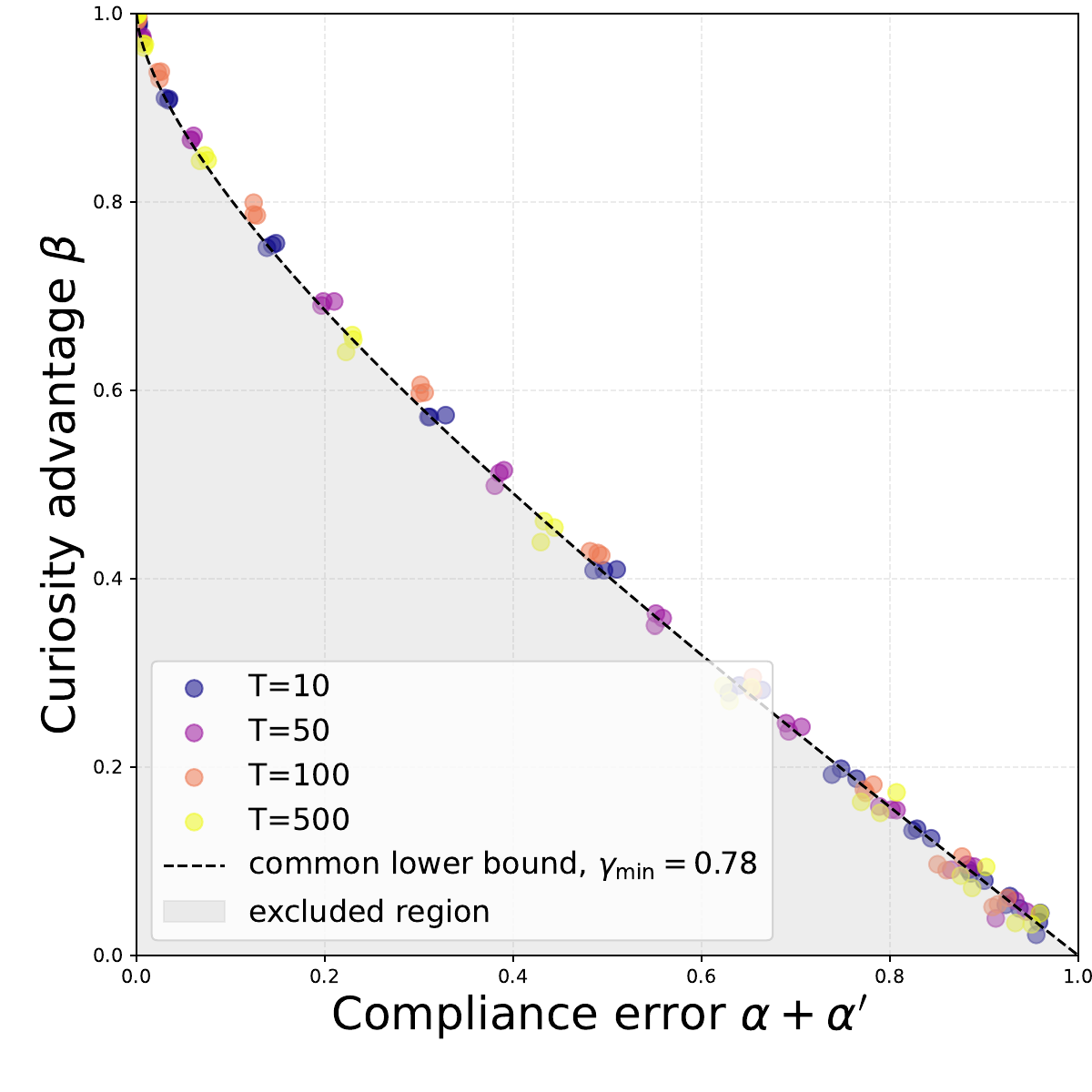}}
    \subfloat[$\eta = 0.5$\label{sfig:eta-0.5}]
    {\includegraphics[width=.25\linewidth]{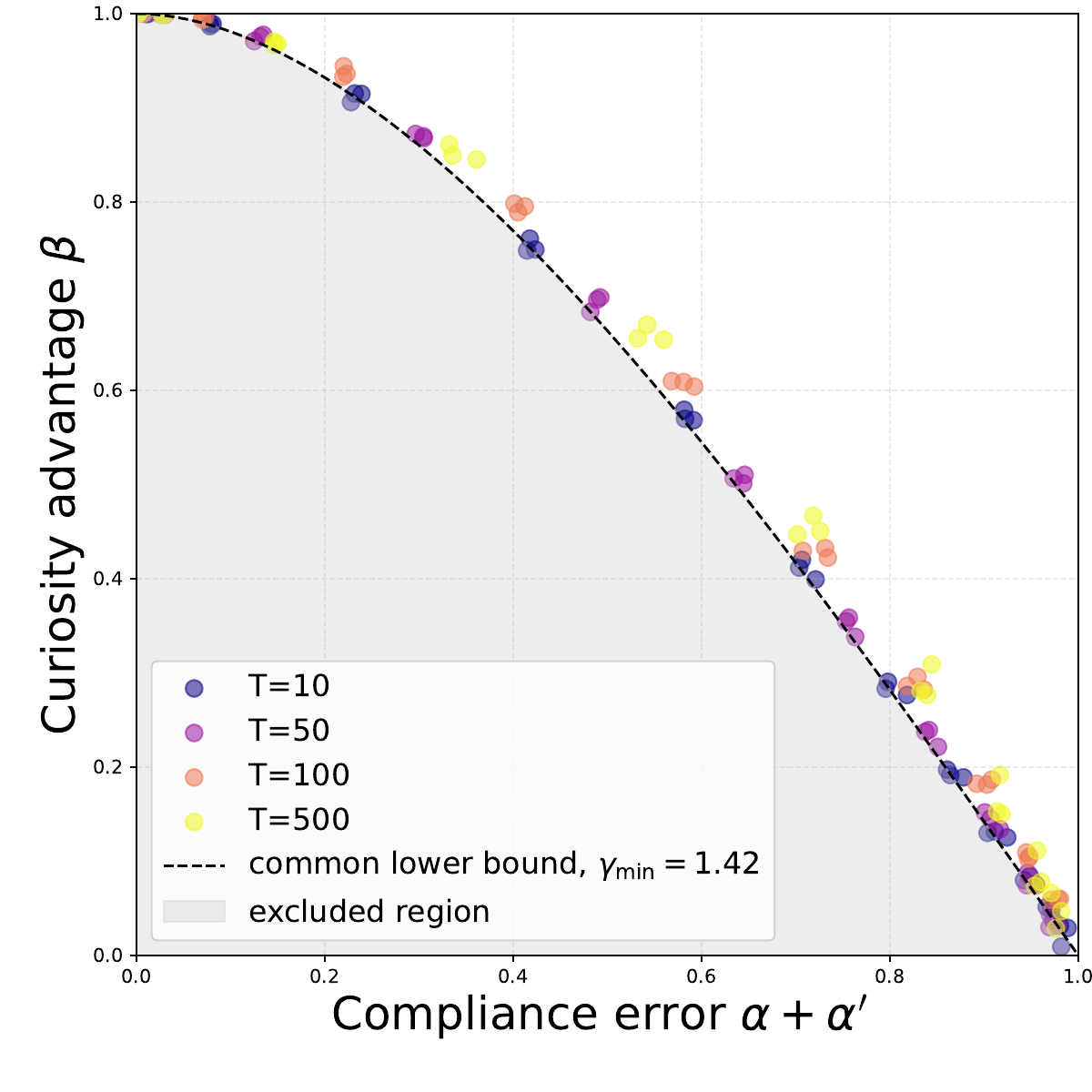}}
    \subfloat[$\eta = 0.9$\label{sfig:eta-0.9}]
    {\includegraphics[width=.25\linewidth]{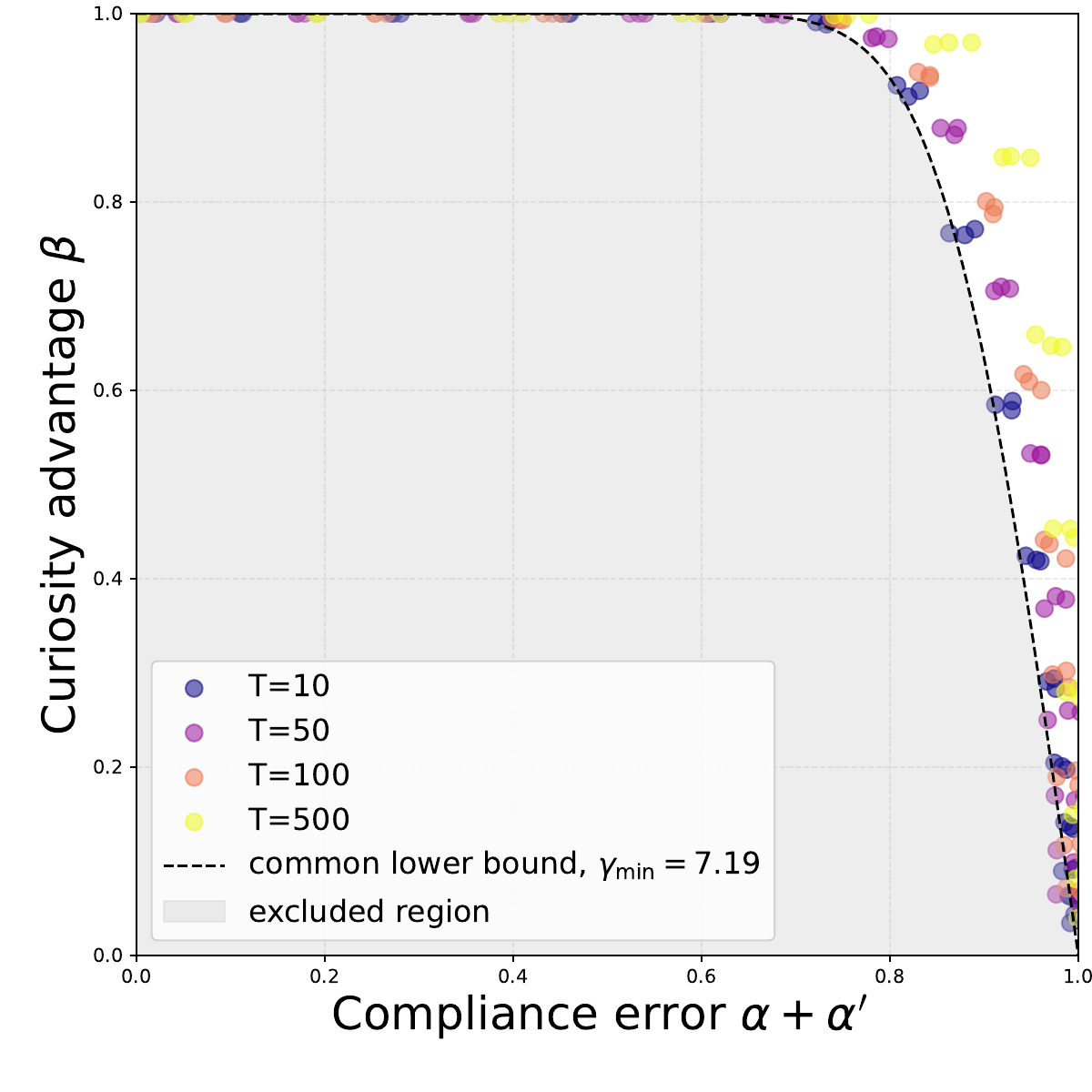}}
    \caption{The target LR model (Fig. \ref{sfig:dec-space}) and its Pareto curve between $(\alpha + \alpha^\prime, \beta)$ across observational noise level $0.005 \leq \sigma \leq 0.5$ and query budget $T = 10,50,100,500$ on a fixed $z^*$ (Fig. \ref{sfig:eta-0.1}-\ref{sfig:eta-0.9}). We also show the theoretical \emph{closed-form} $\beta$ (Eq. \ref{eq:beta-main-exact}).}
    \label{fig:pareto-convex}
\end{figure*}

\begin{figure*}[t]
    \centering
    \captionsetup[subfigure]{justification=centering}
    \subfloat[$\gamma_Q(\eta)$ and $\lambda_Q$ over $\eta \in [0,1)$\\at a fixed $z^*$ and $D_u$.\label{sfig:coeff-eta}]
    {\includegraphics[width=.25\linewidth]{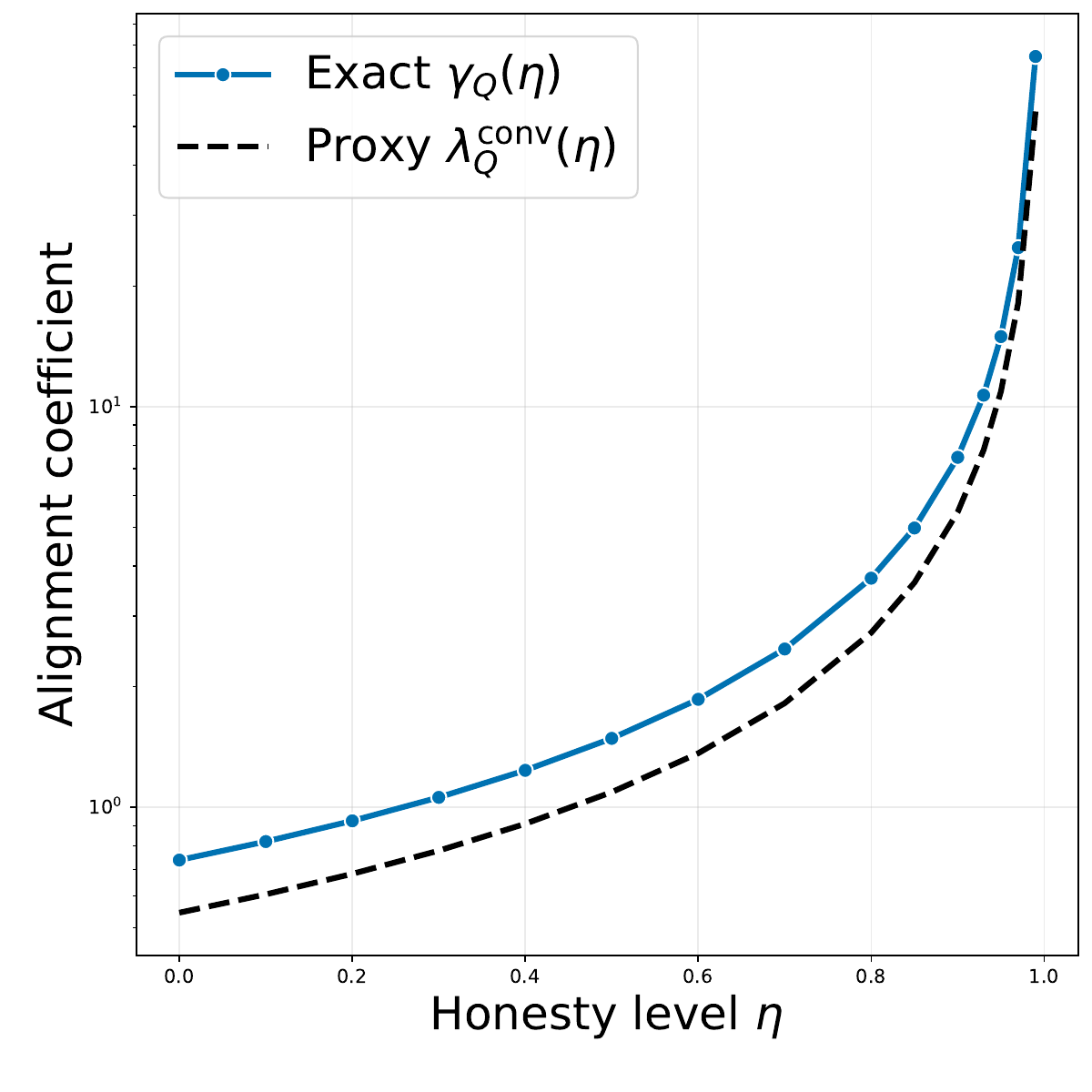}}
    \subfloat[$\gamma_Q(z^*)$ over $z^* \in D_r$\\at a fixed $D_u$, $\eta=0$.\label{sfig:gamma-distribution}]
    {\includegraphics[width=.25\linewidth]{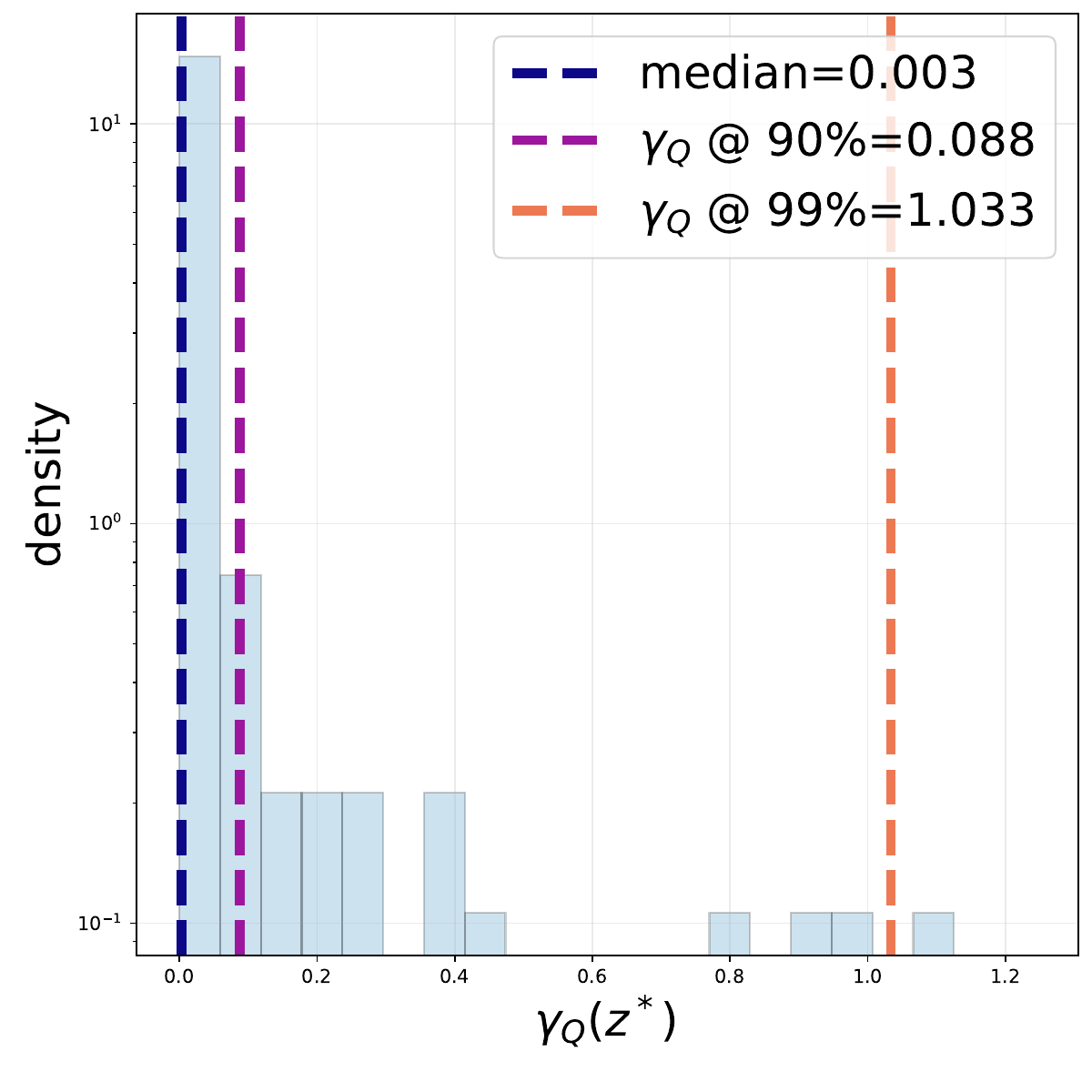}}
    \subfloat[$\gamma_Q$ over different selections of $Q$\\at a fixed $z^*$, $D_u$, $\eta=0$.\label{sfig:gamma-Q-sel}]
    {\includegraphics[width=.25\linewidth]{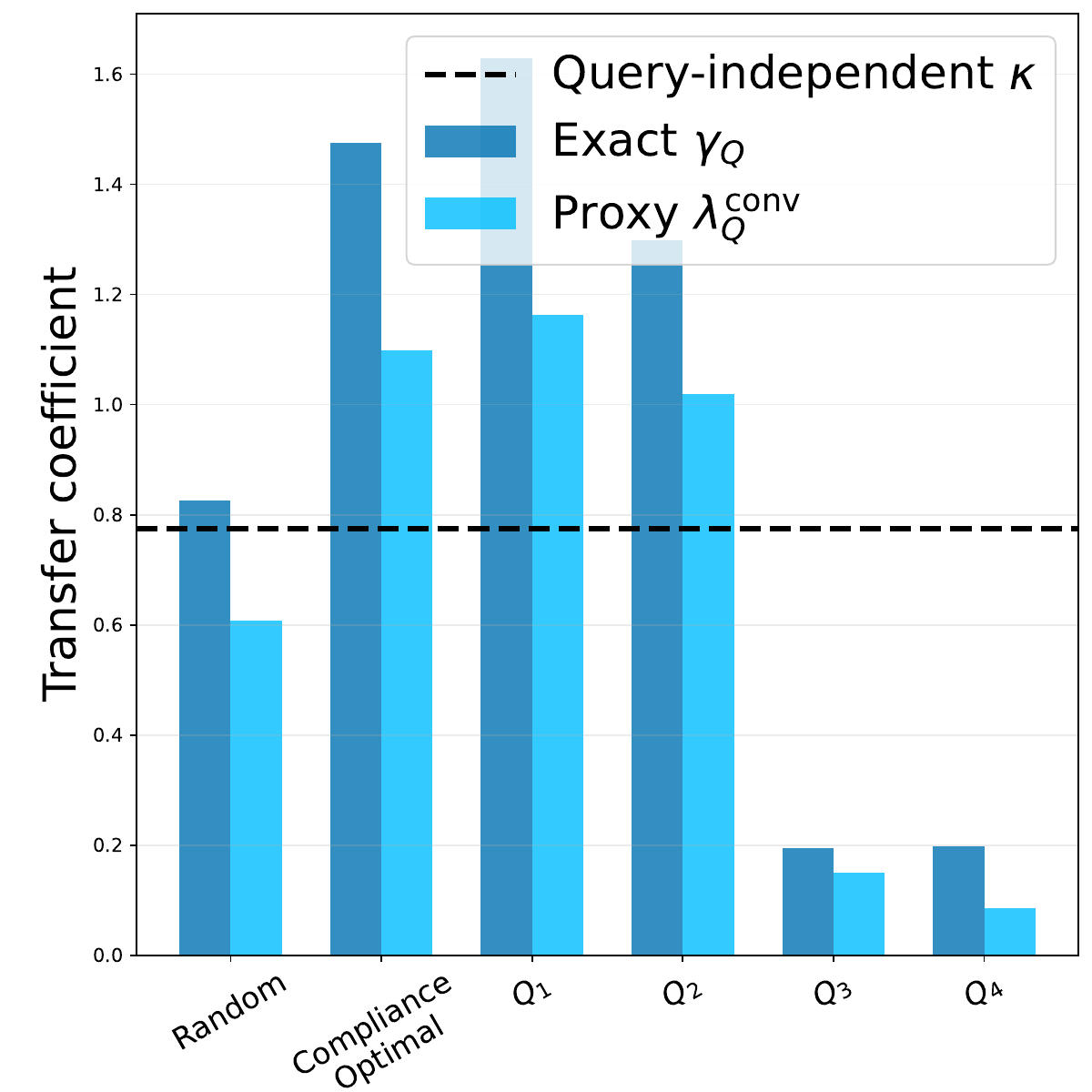}}
    \subfloat[$\gamma_Q(D_u)$ over $D_u \subseteq D$\\at a fixed $z^*$, $\eta=0$.\label{sfig:gamma-D_u}]
    {\includegraphics[width=.25\linewidth]{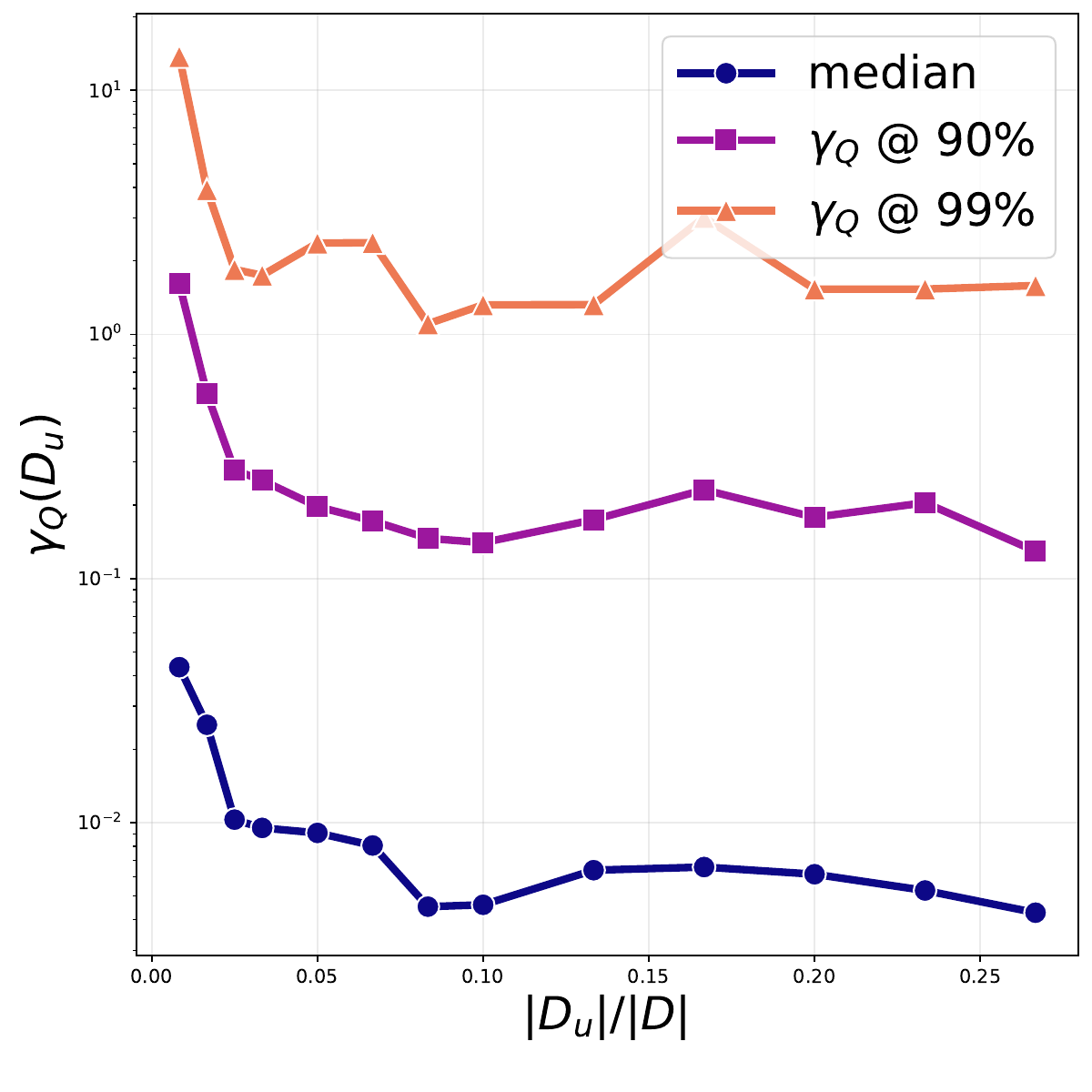}}
    \caption{The trend of $\gamma_Q$ at different settings of $\eta$, $z^*$, $Q$, and $D_u$.}
    \label{fig:gamma-Q}
\end{figure*}

\subsection{Results on Convex Models}\label{ssec:result-convex}

As discussed in Sec. \ref{ssec:result-non-convex}, while directly calculating $(\alpha+\alpha^\prime, \beta)$ on \emph{arbitrary} MLP/CNN models in the wild allows us to confirm the geometry in Theorem \ref{th:main}, a lack of the global parameter space prevents us from calculating the \emph{closed-form} theoretical $\beta$ given in Eq. \ref{eq:beta-main-exact}. Therefore, we here provide further empirical confirmations of our theoretical results (Sec. \ref{ssec:bridge} through \ref{ssec:mechanism}) with controlled experiments on a small-scale \emph{convex} model following Assumption \ref{assum:convex} and Def. \ref{def:MU-eta}. We aim not to scale the effects of results in Sec. \ref{ssec:result-non-convex}, but to have a more interpretable environment that allows us to observe how $z^*$, $D_u$, $Q$, and other elements of the behavioral audit (Assumption \ref{assum:audit}) affect its outcome.

\paragraph{Dataset} We train all convex models on a two-dimensional dataset $(x, y) \in \mathbb{R}^2 \times \{0, 1, 2, 3\}$. We generate \emph{class-balanced} samples across $4$ classes, each class following a 2-dimensional Gaussian distribution that roughly aligns with different quadrants, as shown in Fig. \ref{sfig:dec-space}.

\paragraph{Models} We train a multinomial \emph{logistic regression} model as the classifier with damped Newton to convergence. We use $\ell_2$ regularization with $\lambda = 5\times 10^{-3}$ (Eq. \ref{eq:convex-fit}) to ensure strong convexity on the model. For feature lift, we use $\phi(x) = (x_1, x_2, x_1 x_2, |x_1|, |x_2|, 1)$, which is deliberately \emph{non-convex in $x$} consistent with the settings used by existing works \cite{Guo2020Cert,Neel2021Descent}.

\paragraph{Unlearning} Following Def. \ref{def:MU} and Assumption \ref{assum:convex}, We unlearn the target model(s) through the $(\varepsilon, \delta)$-certified unlearning algorithm proposed by Guo et al. \cite{Guo2020Cert}, which ensures that a baseline honest unlearning $\unA^\hon$ is sufficient. We sample a class-balanced unlearned set $D_u$ uniformly from each class.


\paragraph{Privacy-Audit Tradeoff} As in Sec. \ref{ssec:setup}, we perform Monte Carlo trials on the convex models; Fig. \ref{fig:pareto-convex} shows the results.

Our results confirm the theoretical $\beta(z^*)$ given in Eq. \ref{eq:beta-main-exact} \emph{nearly exactly} across different settings of $(\eta, \sigma, T)$: $\Aud$ has a curiosity advantage on the membership $z^* \in D_r$ that scales with the privacy-audit alignment coefficient characterized by $\gamma_Q$. For the same pair of $(T, \sigma)$, $\alpha+\alpha^\prime$ increases with a higher $\eta$ as the discrepancy between $\thu^\hon$ and $\thu^\dis$ becomes harder to detect while yielding no additional membership advantage on $\beta(z^*)$, pushing the Pareto curve to the right. Noteworthily, even when $\Aud$ has no prior knowledge of the distributions $P_\tau \mid{\{ \langle \hon, \dis \rangle \times \langle \mem, \non \rangle \}}$, the fitted Gaussian distributions allow the auditor to achieve near-optimal separation between different signals \cite{Carlini2022MIA,Hayes2024Inexact}.

More interestingly, we note that the Pareto curve for different combinations of $(T, \sigma)$ collapses. Recall that, $M_Q$ (Eq. \ref{eq:MQ}) determines how query-independent geometry $(\rho, \kappa)$ is transformed into $(\rho_Q,\lambda_Q^\conv)$, for i.i.d queries, we have:
\begin{equation}
    M_Q/T = \frac{1}{T}\sum_{t=1}^T h_{x_t}h_{x_t}^\top \to \mathbb{E}[h_xh_x^\top],
\end{equation}
that is, both compliance and curiosity \emph{norms} grow proportional to $\sqrt{T}$ while $\rho_Q$ and the ratio defining $\lambda_Q^\conv$ stabilize. Further, $M_Q$ is influenced by a coupled factor $\sqrt T/\sigma$, causing the curves under different $(T, \sigma)$ to collapse under the same $\eta$ (Fig. \ref{sfig:eta-0.1}-\ref{sfig:eta-0.9}). A higher query budget $T$ for $\Aud$ is operationally equivalent to reducing observational noise $\sigma$.

\paragraph{Transfer Coefficient $\bm{\gamma_Q}$} To gain a more intuitive understanding of the relationship between the compliance and curiosity signals, we further examine the privacy-audit transfer coefficient $\gamma_Q$ and how it's affected by different parts of the audit process: model owner's honesty level $\eta$, target $z^*$, query sequence $Q$, and the unlearned ser $D_u$. We plot $\gamma_Q$ and its proxy through the convex geometry $\lambda_Q^\conv$ in Fig. \ref{fig:gamma-Q}.

Our results confirm our theoretical intuition in Corollary \ref{cor:conv-aud-geo}: Fig. \ref{sfig:coeff-eta} traces $\gamma_Q$ and $\lambda_Q^\conv$ over $\eta$, both scales as $1/(1 - \eta)$ for effective compliance queries following Eq. \ref{eq:lambda-conv}.
Noteworthily, $\gamma_Q$ diverges with $\eta \to 1$, as the compliance displacement $\delta_c = \thu^\hon - \thu^\dis \to 0$. This does not indicate that privacy leakage is amplified as the owner becomes honest, as shown in Fig. \ref{fig:pareto-main} and \ref{sfig:eta-0.1}-\ref{sfig:eta-0.9}, but that the privacy displacement is fixed while the compliance signal vector itself goes to $0$.

In Fig. \ref{sfig:gamma-distribution}, we observe that the privacy leakage is not uniform across $D_r$: while a majority of $z^*$ are non-aligned with $D_u$ with $\gamma_Q(z^*) \approx 0$, a few $z^*$ (e.g., at the 90\% or 99\% percentile) allow compliance queries to more efficiently contribute to curiosity advantages. Therefore, the honest-but-curious $\Aud$ presents an elevated privacy threat to specific target samples. This is consistent with findings for MIAs against both ML \cite{Carlini2022Onion} and MU \cite{Hayes2024Inexact} algorithms: privacy inference under MU is instance-sensitive, where a few samples have significantly higher privacy leakage compared to the population $D_r$.

More interestingly, we study how different query sequences $Q$ affects the audit beyond the Pareto points in Fig. \ref{fig:pareto-main} and \ref{fig:pareto-convex}. As shown in Fig. \ref{sfig:gamma-Q-sel}, different selections of $Q$ induce different $\gamma_Q$, which can yield audit-visible alignment far above/below the query-independent baseline $\kappa$. The proxy $\lambda_Q^\conv$ is highly correlated with $\gamma_Q$. For different unlearned sets $D_u$, $\gamma_Q$ decreases as $|D_u|/|D|$ increases, implying that the privacy of individual target samples is better protected in the audit process by a more generalized unlearning set. Nevertheless, we observe that $\gamma_Q$ degrades slightly slower for top 99\% percentile targets; therefore, a subset of the population remains vulnerable to privacy leakage even
with a large unlearning set.

\section{Discussion and Future Work}\label{sec:dis}



\subsection{From Membership Inference to General Privacy Inferences}\label{ssec:dis-priv-inf}

In Theorem \ref{th:main}, we design the curiosity game $\Cur^\Pi$ (Game \ref{game:cur}) for audit transcript-based membership inference ($\mathsf{MI}$) of whether target $z^*$ is a member of the retained set $D_r$. However, we note that Theorem \ref{th:main} is not intrinsically connected to membership status, but rather to arbitrary \emph{binary} privacy-sensitive statistics of $D_r$. For any binary hypothesis $s(D_r)\in\{0,1\}$ that induces endpoint distributions $\thu^{s=1}$ and $\thu^{s=0}$, let:
\begin{equation}
\begin{aligned}
    p_Q(s(D_r)) &:= F_Q(\thu^{s=1})-F_Q(\thu^{s=0}),\\
    \gamma_Q(s(D_r)) &:= \frac{|\langle p_Q(s),c_Q\rangle|}{\|c_Q\|_2^2}.
\end{aligned}
\end{equation}
Theorem \ref{th:main} therefore applies verbatim to the corresponding binary distinguishability whenever we assume the equal-covariance Gaussian observation noise transcript (Assumption \ref{assum:audit}).

To further extend our framework to an honest-but-curious auditor $\Aud$ that aims to infer additional non-binary privacy-sensitive properties of $D_r$ similar to other privacy inference attacks, e.g., attribute inference ($\mathsf{AI}$), data reconstruction ($\mathsf{RC}$), and property inference ($\mathsf{PI}$), Salem et al. \cite{Salem2023PrivGames} establishes how some privacy inference games can be translated, and Kulynych et al. \cite{Kulynych205Unifying} provide a unified $f$-DP bound for the curiosity advantage of $\mathsf{MI}$, $\mathsf{AI}$ and $\mathsf{RC}$. We leave for future work to (\rom{1}) deriving audit-specific reduction constants, and (\rom{2}) investigate how property inference, which concerns data distribution-level rather than record-level hypothesis testing.

\subsection{Towards a Foundation for Secure and Privacy-Preserving Audit of MU}\label{ssec:dis-foundation}

Our results identify the behavioral geometry that determines when accurate auditing of MU forces privacy leakage on the retained set $D_r$, quantified exactly by $(c_Q, p_Q, \gamma_Q)$ as the compliance/curiosity vectors and the transfer coefficient; or approximately by $(\lambda_Q,U_Q)$.
This characterization allows us to investigate complementary approaches to reduce the privacy leakage through behavioral MU audit.
Beyond the theoretical and empirical confirmation, we also ask the question:
\begin{quote}
    How should we design a privacy-preserving audit of Machine Unlearning that addresses the tension between successful audit (enforcement of $\unA$) and privacy leakage (protection of $D_r$)?
\end{quote}
We present three natural mechanisms stemming from our discussion:

\paragraph{Differential Privacy} Naturally, injecting privacy perturbation to the audit transcript $\tau$ limits the contribution of each query $x_t$ and therefore bounds the auditor's instance-level curiosity advantage $\beta(z^*)$ \cite{Dwork2006DP}. The role of DP is delicate and can be categorized into three stages of the MU pipeline:
\begin{enumerate}[label=(\roman*)]
    \item \textbf{Training Stage}, which enforces DP during the initial training process. Intuitively, for an $\varepsilon$- or $(\varepsilon,\delta)$-differentially private model $\theta$, any \emph{behavioral} audit transcript (Assumption \ref{assum:audit}) becomes post-processing.
    
    DP directly constrains the privacy displacement $p_Q$. However, we note that the compliance distinguishability that compares $\thu^\hon$ with $\thu^\non$ are conditioned on $\unA$ and $\unA^\dis$, rather than adjacent training sets. Therefore, DP does not directly affect $c_Q$, and requires careful design of the mechanism to not dampen the behavioral signal separation needed for auditing accuracy $1-\alpha-\alpha^\prime$.
    
    \item \textbf{Unlearning Stage}, which enforces DP during the unlearning process. Notably, the family of $(\varepsilon, \delta)$-certified unlearning algorithms we employed (Def. \ref{def:MU} and Assumption \ref{assum:convex}) does not offer privacy guarantees, but rather offer \emph{sufficiency} guarantees that $\thu$ is indistinguishable from retraining \cite{Guo2020Cert,Koloskova2025Cert}. A similar tension to training-stage DP persists: enforcing DP at the unlearning stage alone only controls $p_Q$ without accounting for $c_Q$.

    \item \textbf{Output Stage}, similar to output filtering (Sec. \ref{ssec:lit-mu}), the model owner $\Chal$ can release noisy posteriors of model output.
    Importantly, we note that the role of DP noise is \emph{distinct from} the Gaussian observation noise in Assumption \ref{assum:audit}. Uncalibrated noise only rescales the $(\alpha+\alpha^\prime, \beta)$-tradeoff alone the Pareto front (Fig. \ref{sfig:eta-0.1}-\ref{sfig:eta-0.9}) and does not change the underlying alignment coefficient $\gamma_Q$, and can be mitigated by a higher query budget $T$;
    
    An $\varepsilon$- or $(\varepsilon,\delta)$-DP output mechanism requires its noises to be calibrated to $D_r$-record sensitivity, and also account for composition across $T$ queries. Consequently, output DP can bound curiosity leakage $p_Q$ but at the cost of constraining $c_Q$ at the same time.
\end{enumerate}

\paragraph{Query Restriction} One route to bounding privacy leakage is to revoke $\Aud$'s oracle access (see Sec. \ref{ssec:threat-model}) and instead restrict $\Aud$ to submitting only compliance-relevant queries, which directly changes the audit-visible geometry: changing $Q$ affects changes $(c_Q, p_Q, \gamma_Q)$ through $M_Q$ on top of the query-independent geometry, and can significantly affect query efficacy (Fig. \ref{sfig:gamma-Q-sel}, Sec. \ref{ssec:result-convex}).
A restricted $\Pi$ can therefore only allow queries that preserve compliance separation while reducing alignment with likely privacy-sensitive directions. However, as discussed in Remark \ref{rem:genericity}, alignment reduction in itself removes the component of privacy leakage forced by $c_Q$, but an orthogonal component \(p_Q^\perp\) remains. Moreover, designing $Q$ against the exact privacy directions would generally require information about any $z\in D_r$ that allows $\Aud$ to trivially win the curiosity game (Game \ref{game:cur}).

\paragraph{Cryptographic Proof-of-Unlearning} Proof-of-Unlearning schemes (Sec. \ref{ssec:lit-audit}) enforce a much stronger limit on the ability of $\Aud$ than query restriction, and limit the auditor's access to only \emph{verification} of information released by $\Chal$. Therefore, such an audit protocol $\Pi$ releases less information than the model behavior $F_Q(\theta_u)$. Under our framework, this approach avoids querying a behavioral channel that exposes the compliance/curiosity geometry.
However, as discussed by Zhang et al. in \cite{Zhang2024Fragile}, proof-of-unlearning schemes are themselves forgeable by dishonest model owners that bypass the protocol. Further, releasing gradient- or parameter-level information to third-party auditors enables $\Aud$ to reconstruct the training samples \cite{Wen2025SoK,Haim2022Reconstruct}.

In conclusion, the three mechanisms presented affect the geometry $(c_Q, p_Q, \gamma_Q)$ directly and indirectly at different layers of the MU pipeline, but each poses a weakness that it does not address on its own.
We argue that no single mechanism is sufficient to achieve a secure, privacy-preserving MU, and can require a combination of DP at learning, DP-aware unlearning \cite{Sekhari2021Remember}, and cryptographic proof-of-unlearning that ensures execution of the MU mechanism. A privacy-preserving audit protocol $\Pi$ that is simultaneously sound against dishonest execution, non-leaking on $D_r$, and resilient to forgery remains an open research question.

\section{Conclusion}\label{sec:conclusion}

\paragraph{Contributions} In this paper, we prove a fundamental privacy-audit alignment for Machine Unlearning (MU), in which an honest-but-curious auditor that follows a purely behavioral audit protocol incurs membership privacy leakage on the retained set characterized by a transfer coefficient. Our theoretical results are confirmed on membership inference experiments on MLP/CNNs and controlled experiments on convex logistic regression models, and demonstrate how the audit components affect privacy leakage. Our theoretical results are confirmed on membership inference experiments on MLP/CNNs and controlled experiment on convex logistic regression models, and demonstrates how the audit components affect privacy leakage. Overall, our results call for a more careful consideration of the privacy-audit tension for MU, and serve as a foundation for more scrutiny of designs of privacy-preserving audit schemes for MU.

\paragraph{Limitations} We acknowledge several limitations in our paper. Our analysis focuses on fixed-query behavioral audits and derives its sharpest characterization under an equal-covariance Gaussian observation noise (Assumption \ref{assum:audit}); more general \emph{adaptive} protocols and \emph{sub-Gaussian} noise models remain outside of our scope. Second, we assume a capable auditor in Sec. \ref{sec:eval} that has access to the underlying data distribution and knowledge of the ML/MU pipeline, while less auditor knowledge can shift the approximate geometry certificate $(\lambda_Q, U_Q)$ and affect the exposed privacy leakage. Empirically, we evaluate our theoretical results on representative MLP/CNN models and a controlled convex setting that may not be representative of more MU models/algorithms. We note that these limitations affect how the audit geometry and privacy leakage can be estimated \emph{in practice}, while the theoretical privacy-audit transfer result explicitly assumes no model/algorithm restrictions.

\section*{Acknowledgments}
This work is supported by Cisco Research and the National Science Foundation under Grant Number 2624954.

\bibliographystyle{plainurl}
\bibliography{IEEEabrv, bib}

\appendix
\section*{Ethical Considerations}
In this paper, we provide theoretical and empirical evidence for privacy-audit alignment in auditing Machine Unlearning mechanisms by showing that a purely behavioral audit scheme imposes membership privacy leakage on the retained data forced by the audit accuracy. We study how to prevent an honest-but-curious auditor from inferring privacy-sensitive information of the audit subject.
While our result can be adopted by the honest-but-curious auditors we described, it also serves as evidence for refuting such MU audit schemes that contain potential privacy risk. Therefore, we believe the merit and contribution of this work far exceed the potential risk.

\section*{Open Science}
To comply with open science, we include all code used for the experiments in \href{https://anonymous.4open.science/r/Behavioral-Unlearn-Audit-2F68/README.md}{this repository}, which includes instructions on executing the code. All datasets we use are publicly available. We do not include trained/unlearned model binaries as their quantities and individual sizes are prohibitive; they can be trained/unlearned from scratch with the provided code.

\section{Proof to Lemma \ref{lemma:cert}}\label{app:proof:cert}
\begin{proof}
    We can decompose $\tilde{p}_Q$ as:
    \begin{equation}
        \tilde{p}_Q = a_Q\tilde{c}_Q + r_Q, \quad
        a_Q := \frac{\langle\tilde{p}_Q,\tilde{c}_Q\rangle}{\|\tilde{c}_Q\|_2^2}, \quad
        r_Q \perp \tilde{c}_Q.
    \end{equation}
    Naturally, $\|\tilde{p}_Q\|_2 \geq |a_Q|\|\tilde{c}_Q\|_2 = \lambda_Q\|\tilde{c}_Q\|_2$. Therefore:
    \begin{equation}
    \begin{aligned}
        \|p_Q\|_2 &\geq \|\tilde{p}_Q\|_2-\varepsilon_p \\
        &\geq \lambda_Q\|\tilde{c}_Q\|_2-\varepsilon_p \\
        &\geq \lambda_Q(\|c_Q\|_2-\varepsilon_c)-\varepsilon_p,
    \end{aligned}
    \end{equation}
    Recall $\|p_Q\|_2 \geq 0$, which yields:
    \begin{equation}\label{eq:geo-transfer}
        \|p_Q(z^*)\|_2 \geq \big[\lambda_Q\|c_Q\|_2-U_Q\big]_+,
    \end{equation}
    in which we use the notation $[u]_+:=\max\{u,0\}$. Similar to Theorem \ref{th:main}, the audit accuracy guarantee implies that $\|c_Q\|_2 \geq 2\sigma q_e$, therefore:
    \begin{equation}
        \|p_Q\|_2 \geq [2\sigma\lambda_Qq_e-U_Q]_+
    \end{equation}
    Substituting this into Eq. \ref{eq:alpha-beta-d} yields Eq. \ref{eq:main-exact-cert}. Further, follow the same concavity argument in the proof to Theorem \ref{th:main}, recall that $g(u)=2\Phi(u)-1$ is $\sqrt{2\pi}$-Lipschitz, yields Eq. \ref{eq:main-linear-cert}.
\end{proof}

\section{Proof to Corollary \ref{cor:loc-aud-geo}}\label{app:proof:loc-aud-geo}
\begin{proof}
    For any $(\theta, \theta^\prime)$ in the stated neighborhood, we have:
    \begin{equation}
        \|J_Q(\theta)-J_Q(\theta^\prime)\|_\mathrm{op} \leq L_Q\|\theta-\theta^\prime\|_2.
    \end{equation}
    Therefore, for any $\vartheta$:
    \begin{equation}
        \|F_Q(\vartheta)-F_Q(\theta_0)-J_Q(\vartheta-\theta_0)\|_2 \leq
        \frac{L_Q}{2}\|\vartheta-\theta_0\|_2^2.
    \end{equation}
    Subtract the two expansions for $\thu^\hon$ and $\thu^\dis$. By triangle inequality, we have:
    \begin{equation}
        \|c_Q-J_Q\delta_c\|_2\leq\varepsilon_c^\loc.
    \end{equation}
    The same applies to the member/non-member endpoint pair $(\thu^\mem, \thu^\non)$. Recall:
    \begin{equation}
        \tilde{c}_Q=J_Q\delta_c, \quad \tilde{p}_Q=J_Q\delta_p,
    \end{equation}
    by Def. \ref{def:geo-cert}, we have
    \begin{equation}
        \lambda_Q^\loc
        =\frac{|\delta_p^\top J_Q^\top J_Q\delta_c|}
        {\delta_c^\top J_Q^\top J_Q\delta_c},
    \end{equation}
    and:
    \begin{equation}
        U_Q=\lambda_Q^\loc\varepsilon_c^\loc+\varepsilon_p^\loc.
    \end{equation}
\end{proof}

\section{Proof to Corollary \ref{cor:conv-aud-geo}}\label{app:proof:conv-aud-geo}
\begin{proof}
    By definition, the compliance endpoints are:
    \begin{equation}
        \thu^\hon=\theta+\Delta, \quad \thu^\dis=\theta+\eta\Delta
    \end{equation}
    Taylor expansion on both endpoints around $\theta$ gives:
    \begin{equation}\label{eq:taylor-comp-conv}
        \|c_Q-J_Q\delta_c\|_2 \leq \frac{L_Q}{2}(1+\eta^2) \|\Delta\|_2^2.
    \end{equation}
    For the curiosity endpoints, write:
    \begin{equation}
        \delta_p=\delta_p^0+r_\theta.
    \end{equation}
    Similar to Eq. \ref{eq:taylor-comp-conv}, Taylor expand on $\thu^\mem$ and $\thu^\non$; Recall:
    \begin{equation}
        \|J_Qr_\theta\|_2 \leq \|J_Q\|_{\mathrm{op}}R_\theta,
    \end{equation}
    therefore:
    \begin{equation}
        \varepsilon_p^\conv = \|J_Q\|_{\mathrm{op}}R_\theta +
        \frac{L_Q}{2}\big( \|\thu^\mem-\theta\|_2^2+\|\thu^\non-\theta\|_2^2 \big)
    \end{equation}
    Eq. \ref{eq:lambda-conv} then gives the certificate coefficient; Eq. \ref{eq:convex-U} follows from Def. \ref{def:geo-cert}, and the two privacy bounds follow directly from Lemma \ref{lemma:cert}.
\end{proof}

\section{Additional Experiments for Sec. \ref{sec:eval}}\label{app:add-exp}

\paragraph{Pareto Curve} We include additional experiments of MLP on Purchase-100 \cite{Shokri2017MIA} and ResNet-18 \cite{He2016ResNet} on CIFAR-10 \cite{Krizhevsky2009CIFAR} here.

\begin{figure*}[t]
    \centering
    \captionsetup[subfigure]{justification=centering}
    \begin{minipage}[t]{0.75\linewidth}
    \begin{tcolorbox}[groupbox, title=MLP @ Purchase-100 \cite{Shokri2017MIA}]
        \subfloat[\textbf{GA}, $\eta = 0.1$.]
        {\includegraphics[width=.33\linewidth]{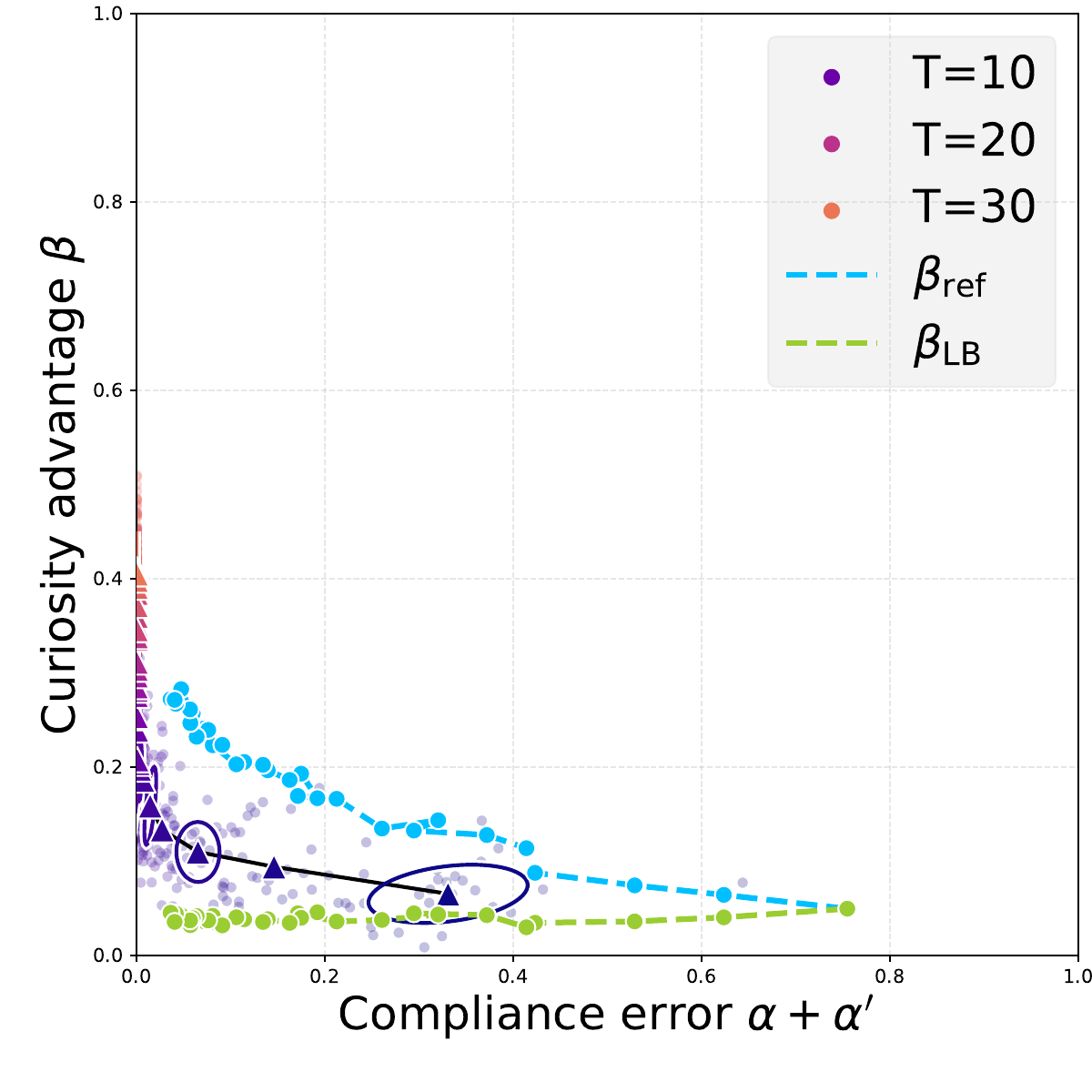}}
        \subfloat[\textbf{GA}, $\eta = 0.5$.]
        {\includegraphics[width=.33\linewidth]{Figs/p100-mlp4-rand/GradAscent_0.5.pdf}}        
        \subfloat[\textbf{GA}, $\eta = 0.9$.]
        {\includegraphics[width=.33\linewidth]{Figs/p100-mlp4-rand/GradAscent_0.9.pdf}}
        
        \subfloat[\textbf{SalUn}, $\eta = 0.1$.]
        {\includegraphics[width=.33\linewidth]{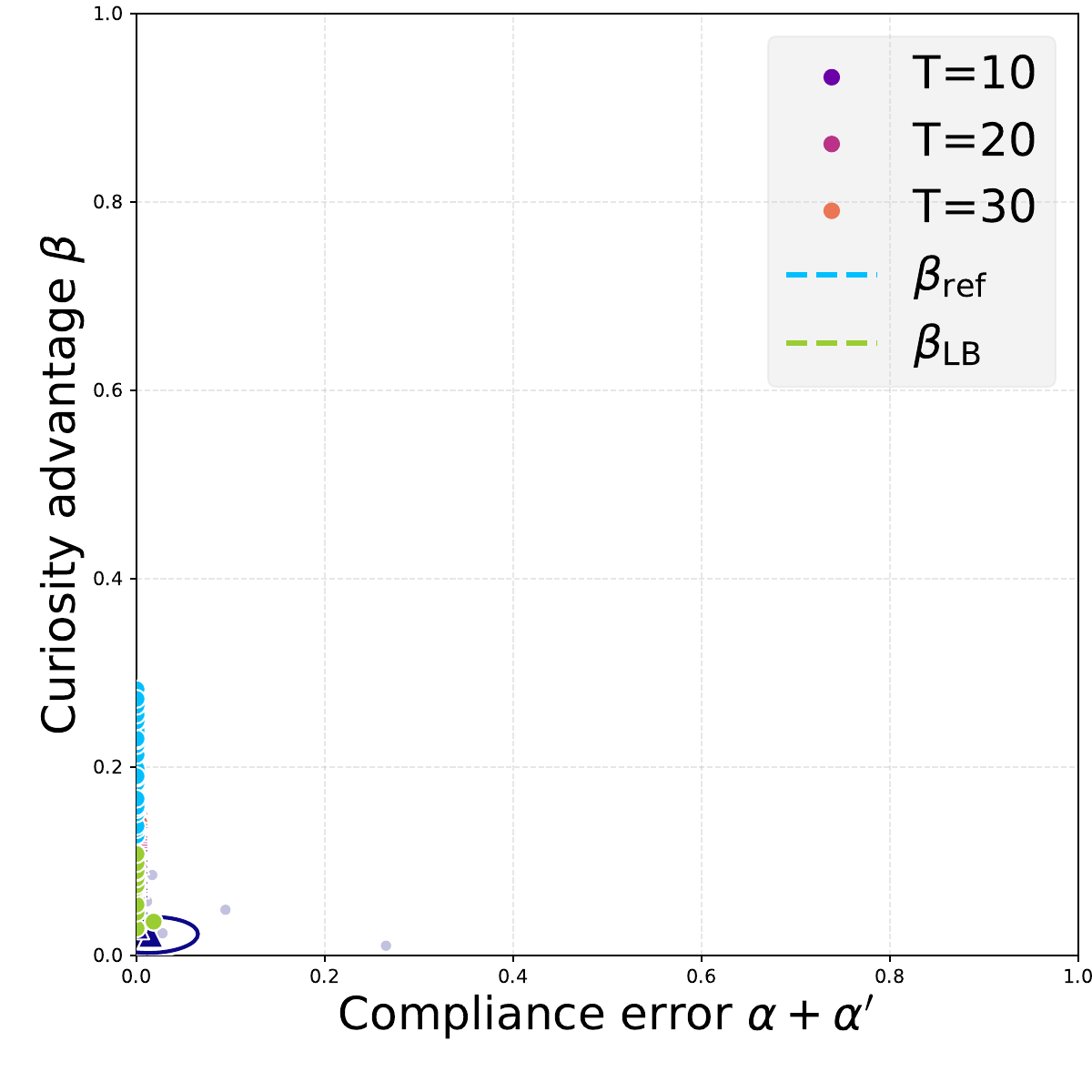}}
        \subfloat[\textbf{SalUn}, $\eta = 0.5$.]
        {\includegraphics[width=.33\linewidth]{Figs/p100-mlp4-rand/SalUn_0.5.pdf}}        
        \subfloat[\textbf{SalUn}, $\eta = 0.9$.]
        {\includegraphics[width=.33\linewidth]{Figs/p100-mlp4-rand/SalUn_0.9.pdf}}
    \end{tcolorbox}
    \end{minipage}
    \caption{Pareto curves for MLPs on Purchase-100, random-sampled unlearned set, across \textbf{GA} and \textbf{SalUn}, $\eta=0.1,0.5,0.9$.}
    \label{fig:pareto-app-p100-mlp4-rand}
\end{figure*}

\begin{figure*}[t]
    \centering
    \captionsetup[subfigure]{justification=centering}
    \begin{minipage}[t]{0.75\linewidth}
    \begin{tcolorbox}[groupbox, title=ResNet-18 \cite{He2016ResNet} @ CIFAR-10 \cite{Krizhevsky2009CIFAR} (Random-Sampled $D_u$)]
        \subfloat[\textbf{GA}, $\eta = 0.1$.]
        {\includegraphics[width=.33\linewidth]{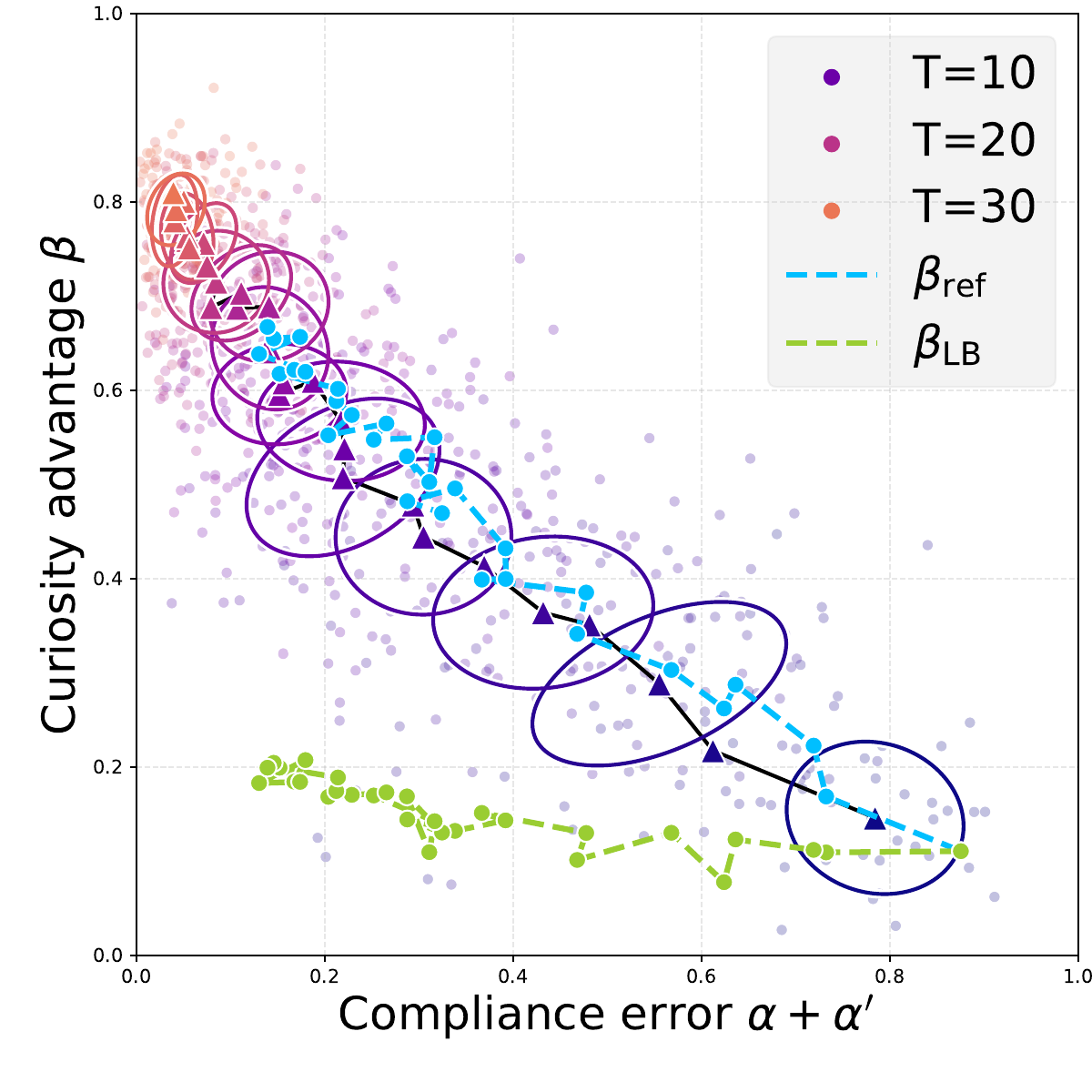}}
        \subfloat[\textbf{GA}, $\eta = 0.5$.]
        {\includegraphics[width=.33\linewidth]{Figs/c10-r18-rand/GradAscent_0.5.pdf}}        
        \subfloat[\textbf{GA}, $\eta = 0.9$.]
        {\includegraphics[width=.33\linewidth]{Figs/c10-r18-rand/GradAscent_0.9.pdf}}
        
        \subfloat[\textbf{SalUn}, $\eta = 0.1$.]
        {\includegraphics[width=.33\linewidth]{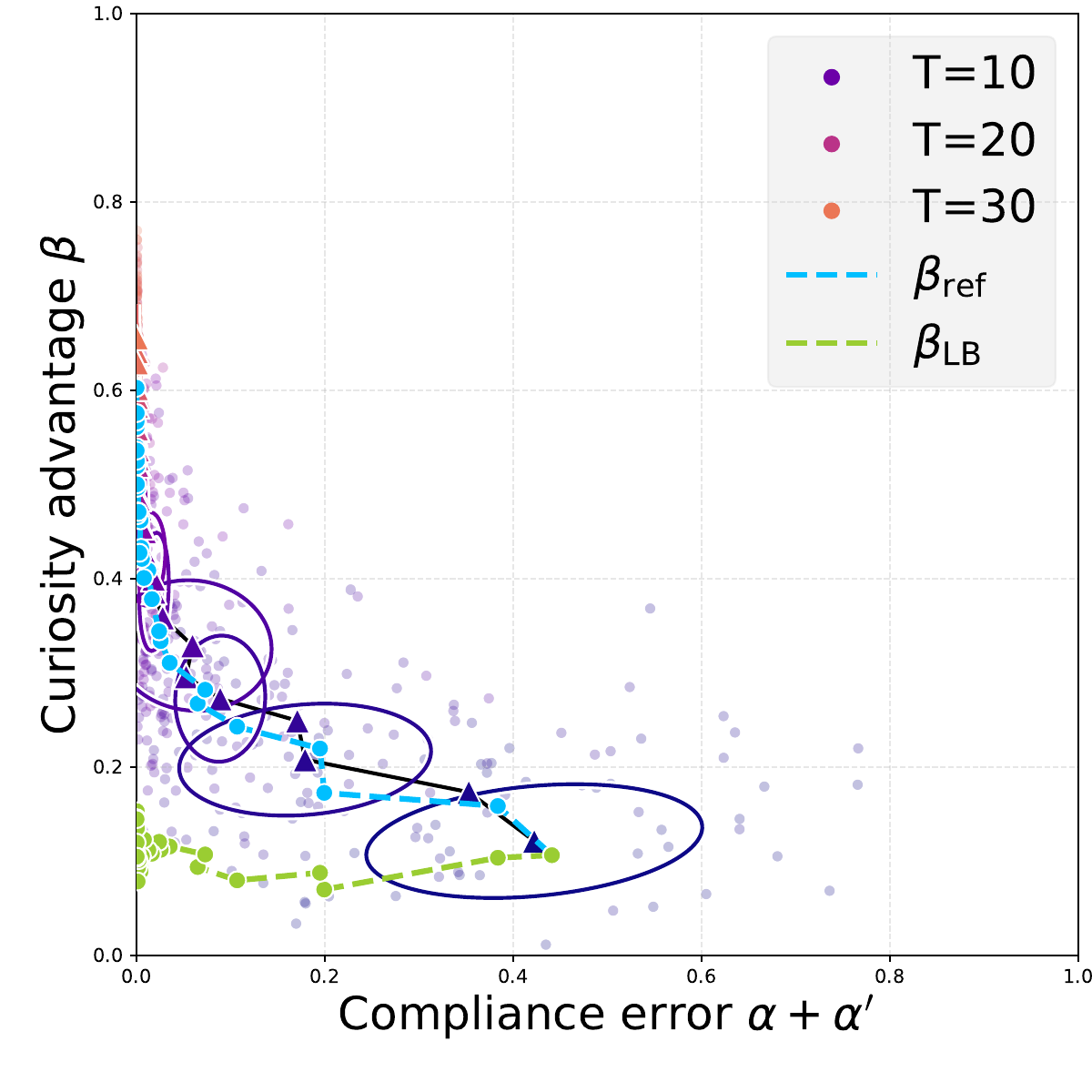}}
        \subfloat[\textbf{SalUn}, $\eta = 0.5$.]
        {\includegraphics[width=.33\linewidth]{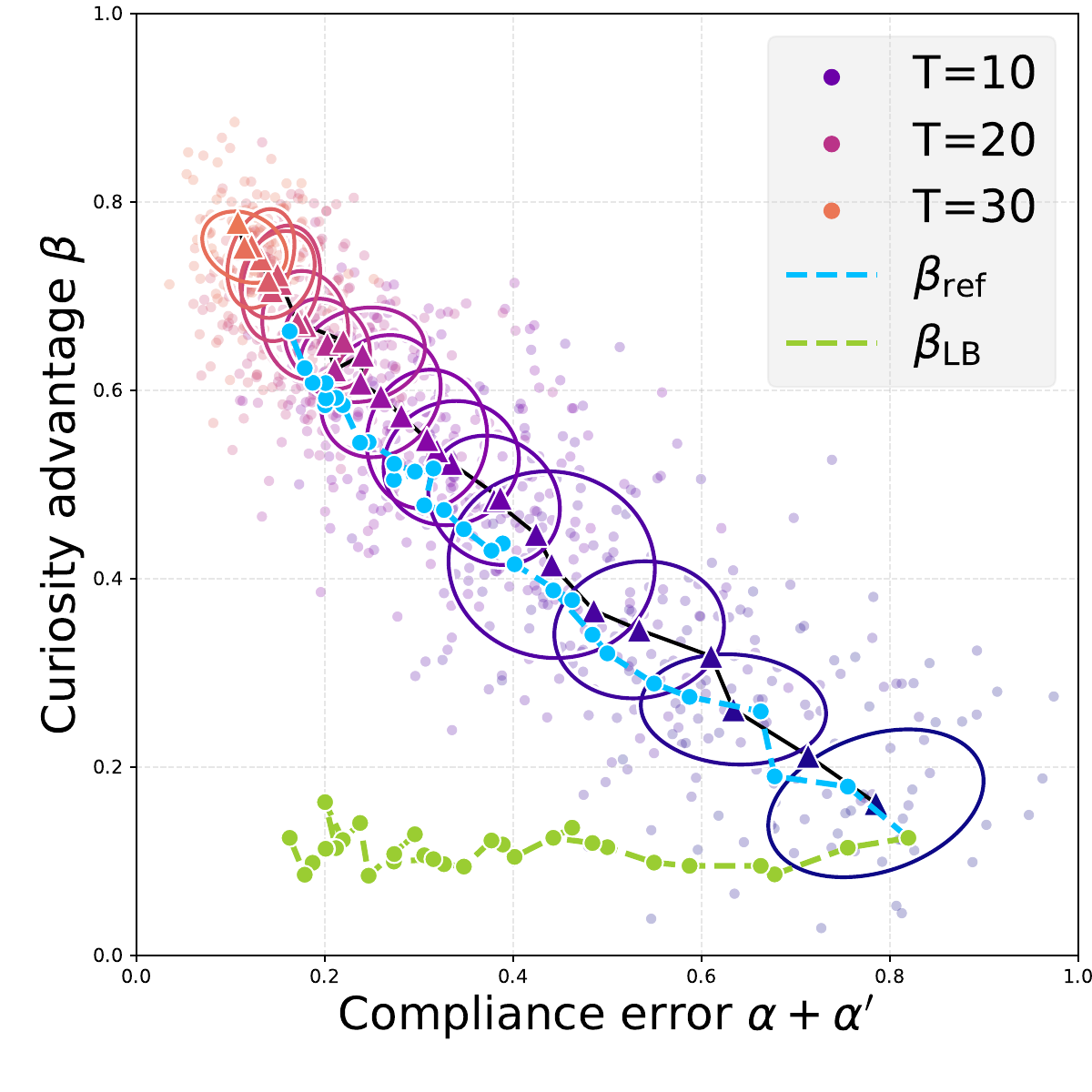}}        
        \subfloat[\textbf{SalUn}, $\eta = 0.9$.]
        {\includegraphics[width=.33\linewidth]{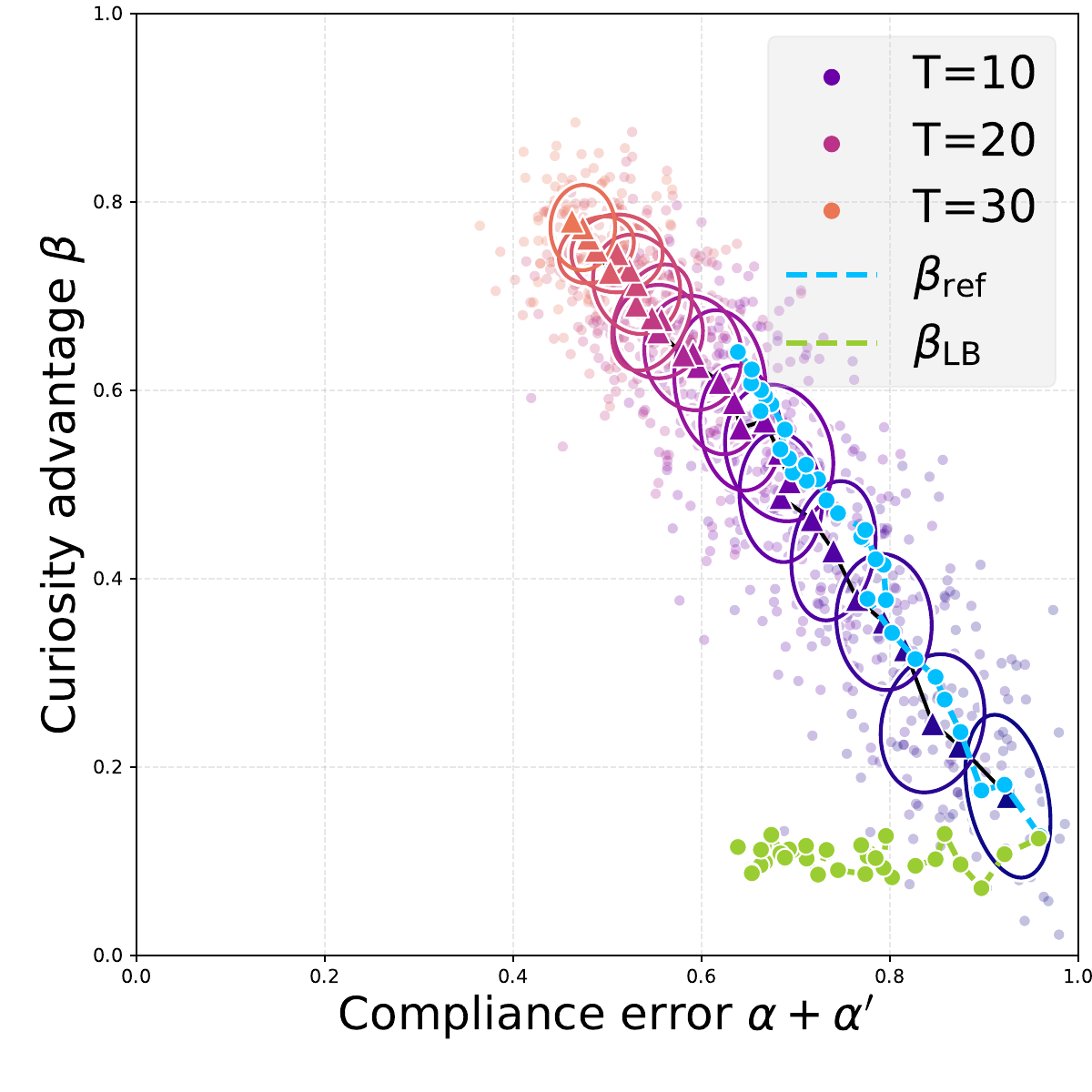}}
    \end{tcolorbox}
    \end{minipage}
    \caption{Pareto curves for MLPs on Purchase-100, random-sampled unlearned set, across \textbf{GA} and \textbf{SalUn}, $\eta=0.1,0.5,0.9$.}
    \label{fig:pareto-app-c10-r18-rand}
\end{figure*}

\begin{figure*}[t]
    \centering
    \captionsetup[subfigure]{justification=centering}
    \begin{minipage}[t]{0.75\linewidth}
    \begin{tcolorbox}[groupbox, title=ResNet-18 \cite{He2016ResNet} @ Purchase-100 \cite{Krizhevsky2009CIFAR} (Class-wise $D_u$)]
        \subfloat[\textbf{GA}, $\eta = 0.1$.]
        {\includegraphics[width=.33\linewidth]{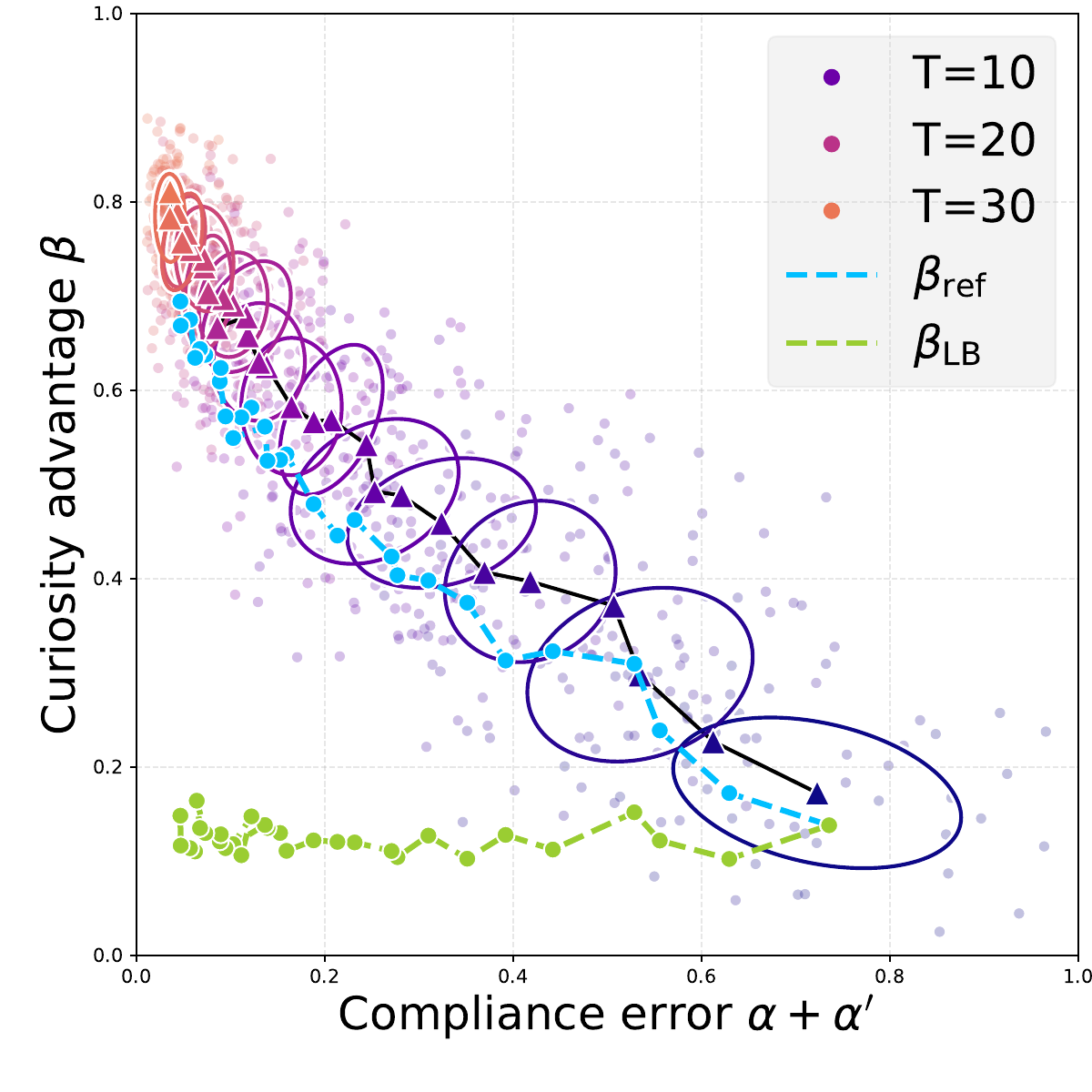}}
        \subfloat[\textbf{GA}, $\eta = 0.5$.]
        {\includegraphics[width=.33\linewidth]{Figs/c10-r18-class/GradAscent_0.5.pdf}}        
        \subfloat[\textbf{GA}, $\eta = 0.9$.]
        {\includegraphics[width=.33\linewidth]{Figs/c10-r18-class/GradAscent_0.9.pdf}}
        
        \subfloat[\textbf{SalUn}, $\eta = 0.1$.]
        {\includegraphics[width=.33\linewidth]{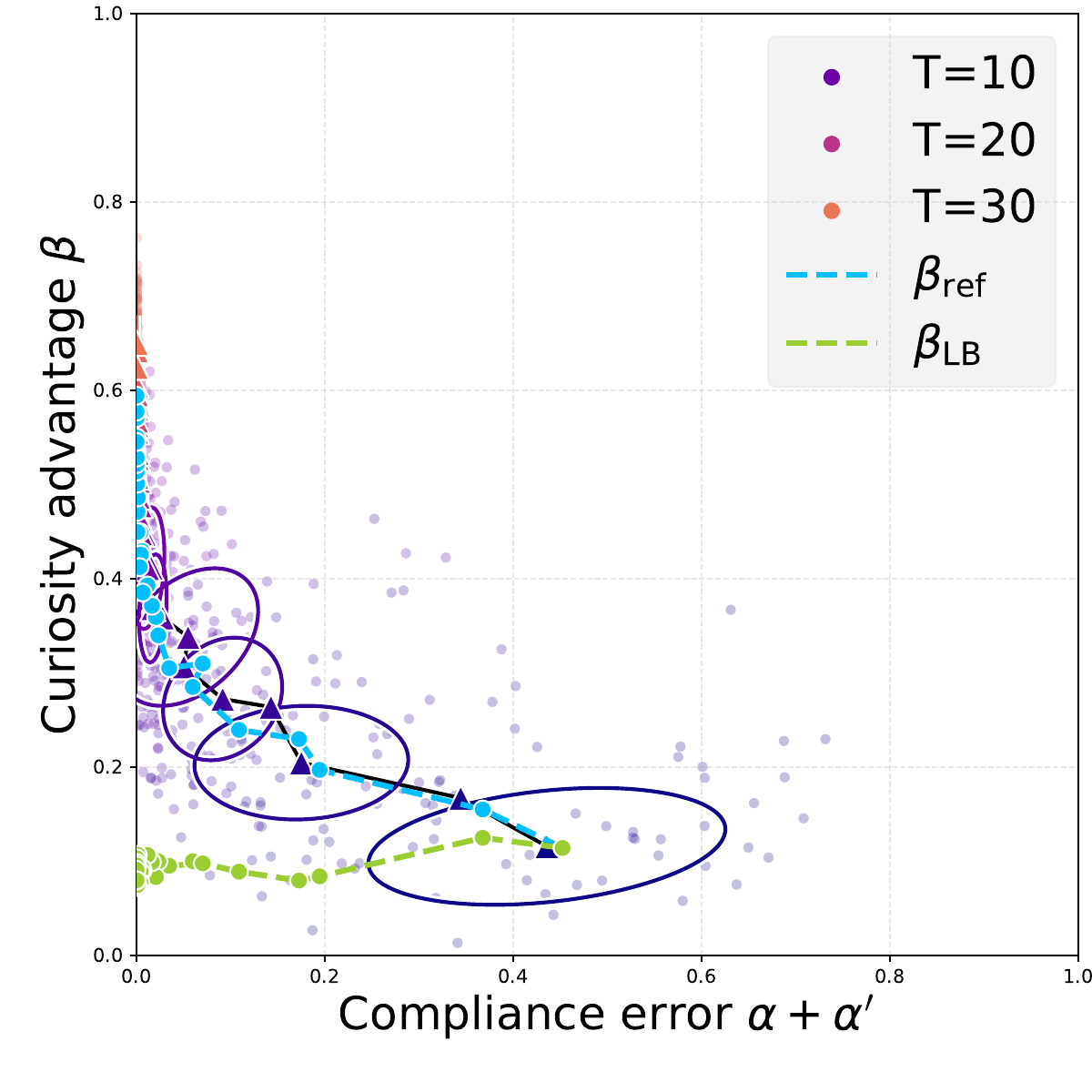}}
        \subfloat[\textbf{SalUn}, $\eta = 0.5$.]
        {\includegraphics[width=.33\linewidth]{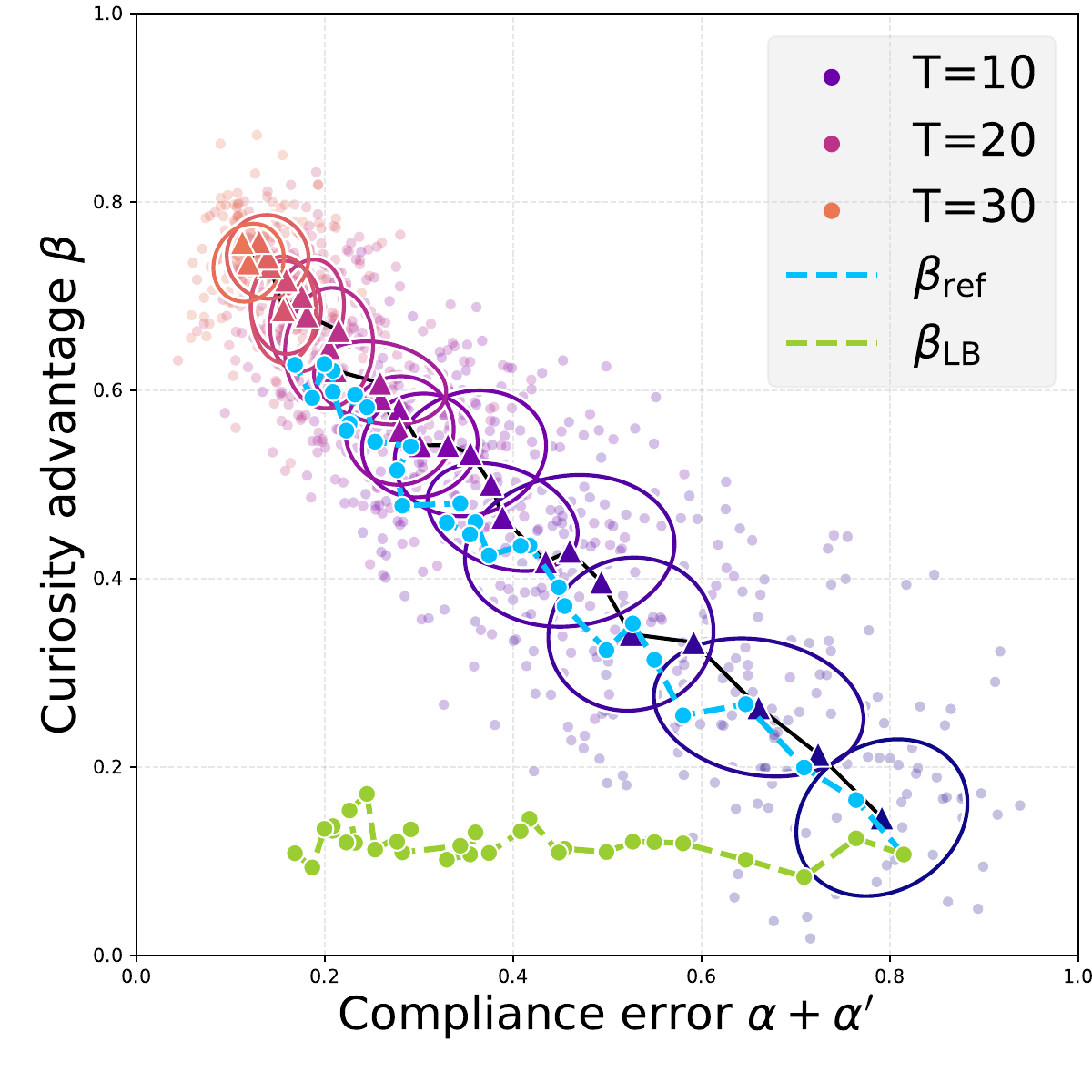}}        
        \subfloat[\textbf{SalUn}, $\eta = 0.9$.]
        {\includegraphics[width=.33\linewidth]{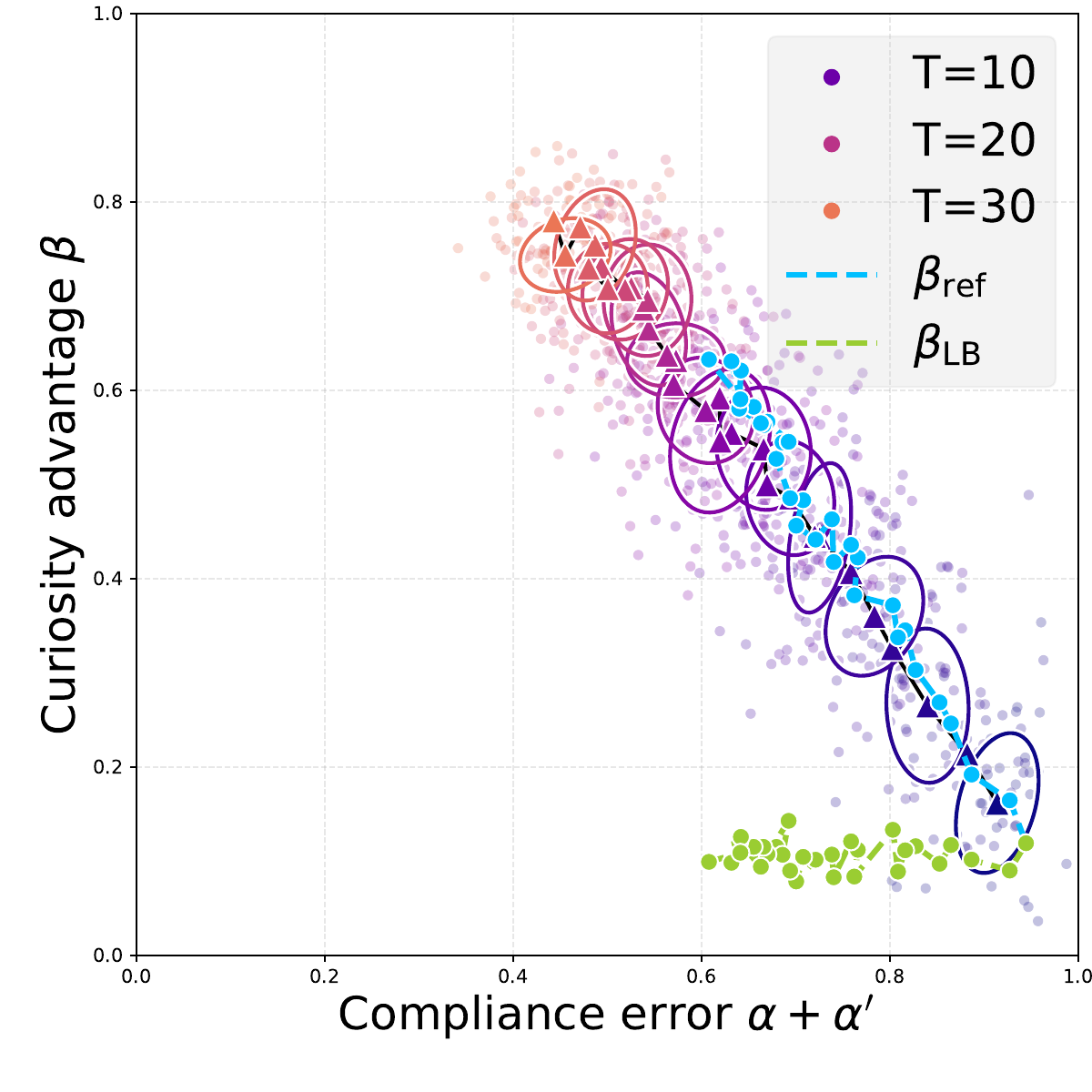}}
    \end{tcolorbox}
    \end{minipage}
    \caption{Pareto curves for MLPs on Purchase-100, class-wise unlearned set, across \textbf{GA} and \textbf{SalUn}, $\eta=0.1,0.5,0.9$.}
    \label{fig:pareto-app-c10-r18-class}
\end{figure*}

\end{document}